\documentclass{article}

\PassOptionsToPackage{numbers,compress}{natbib}

\usepackage[main,preprint]{style/neurips_2026}

\usepackage[utf8]{inputenc} 
\usepackage[T1]{fontenc}    
\usepackage{hyperref}       
\usepackage{url}            
\usepackage{booktabs}       
\usepackage{amsfonts}       
\usepackage{nicefrac}       
\usepackage{microtype}      
\usepackage{xcolor}         

\usepackage{subcaption}
\usepackage[export]{adjustbox}

\newcommand\supp{\text{supp}}
\newcommand\opt{\text{OPT}}

\newcommand\src{\nu_{\text{src}}}
\newcommand\tgt{\nu_{\text{tgt}}}
\newcommand\dSrc{\rho_{\text{src}}}
\newcommand\dTgt{\rho_{\text{tgt}}}
\newcommand\Wc{\mathcal{W}}
\newcommand\fwd{\mathcal{F}}
\newcommand\bwd{\mathcal{B}}
\newcommand\Sc{\mathcal{S}}
\newcommand\norm[1]{\left\|#1\right\|}
\newcommand\Lc{\mathcal{L}}
\newcommand\hLc{\wh{\Lc}}

\usepackage{etoc}
\usepackage{microtype}
\usepackage{graphicx}
\usepackage[T1]{fontenc}
\usepackage[english]{babel}
\usepackage[utf8]{inputenc}
\usepackage{booktabs}
\usepackage{array}
\usepackage{amsmath}
\usepackage{amssymb}
\usepackage{mathtools}
\usepackage{amsthm}
\usepackage{bm}
\usepackage{thmtools}
\usepackage{thm-restate}
\usepackage{hyperref}
\hypersetup{
    colorlinks = true,
    linkcolor = blue,
    citecolor=blue,
    urlcolor=black,
}
\usepackage{xcolor}
\definecolor{mblue}{rgb}{0.0,0.0,1.0}
\definecolor{mgreen}{rgb}{0.0,0.501960784314,0.0}
\definecolor{mred}{rgb}{1.0,0.0,0.0}
\usepackage[capitalize]{cleveref}
\usepackage[ruled]{algorithm2e}
\usepackage[noend]{algpseudocode}

\usepackage{bbm}
\usepackage{xcolor}
\usepackage{parskip}
\usepackage{setspace}
\usepackage{comment}
\usepackage{caption}
\usepackage{enumitem}
\usepackage{wrapfig}

\newtheorem{theorem}{Theorem}[section]

\newtheorem{corollary}{Corollary}[section]

\newtheorem{lemma}{Lemma}[section]

\newcommand\R{\mathbb{R}}

\newcommand\Cc{\mathcal{C}}

\newcommand\Mc{\mathcal{M}}
\newcommand\Tc{\mathcal{T}}
\DeclareMathOperator*{\E}{\mathbb{E}}

\newcommand{\br}[1]{\left({#1}\right)}
\newcommand{\bs}[1]{\left[{#1}\right]}
\newcommand{\abs}[1]{\left| {#1} \right|}
\newcounter{protocol}
\makeatletter

\makeatletter
\providecommand*{\bigcupdot}{%
  \mathop{%
    \vphantom{\bigcup}%
    \mathpalette\@bigcupdot{}%
  }%
}
\newcommand*{\@bigcupdot}[2]{%
  \ooalign{%
    $\m@th#1\bigcup$\cr
    \sbox0{$#1\bigcup$}%
    \dimen@=\ht0 %
    \advance\dimen@ by -\dp0 %
    \sbox0{\scalebox{1.5}{$\m@th#1\cdot$}}%
    \advance\dimen@ by -\ht0 %
    \dimen@=.5\dimen@
    \hidewidth\raise\dimen@\box0\hidewidth
  }%
}
\makeatother

\newcommand{\Xsrc}{X_{\text{src}}}
\newcommand{\Xtgt}{X_{\text{tgt}}}

\newcommand{\kprimal}{K}
\newcommand{\dirac}[1]{\delta_{#1}}
\newcommand{\ustar}{u^\star}

\newcommand{\psistar}{\psi^\star}
\newcommand{\Vstar}{V^\star}

\newcommand{\pistar}{\pi^\star}
\newcommand{\pibar}{\overline{\pi}}
\newcommand{\mustar}{\mu^\star}
\newcommand{\hmu}{\wh{\mu}}

\newcommand{\Htest}{H_{\text{test}}}

\newcommand{\mupi}{\mu^\pi}

\newcommand{\dd}{\mathrm{d}}

\newcommand{\X}{\mathcal{X}}
\newcommand{\U}{\mathcal{U}}

\newcommand{\LL}{\mathcal{L}}
\newcommand{\GG}{\mathcal{G}}
\newcommand{\hGG}{\wh{\GG}}

\newcommand{\real}{\mathbb{R}}

\newcommand{\EE}[1]{\mathbb{E}\left[#1\right]}

\newcommand{\EEpi}[1]{\mathbb{E}_{\pi}\left[#1\right]}
\newcommand{\PPpi}[1]{\mathbb{P}_{\pi}\left[#1\right]}
\newcommand{\EEs}[2]{\mathbb{E}_{#2}\left[#1\right]}

\newcommand{\ra}{\rightarrow}

\newcommand{\ev}[1]{\left\{#1\right\}}

\newcommand{\pa}[1]{\left(#1\right)}
\newcommand{\bpa}[1]{\bigl(#1\bigr)}

\newcommand{\wh}{\widehat}
\newcommand{\wt}{\widetilde}

\usepackage{todonotes}
\definecolor{PalePurp}{rgb}{0.66,0.57,0.66}

\newcommand{\Law}{\text{Law}}

\usepackage[normalem]{ulem}
\usepackage{enumitem}

\title{Generative Modeling by Value-Driven Transport}

\author{%
  Pablo Moreno-Mu\~noz \thanks{Authors listed alphabetically.} \\
  \small{Universitat Pompeu Fabra}\\
  \small{Barcelona, Spain} \\
  \small{\texttt{pablo.moreno@upf.edu}} \\
  \And
  Adrian M\"uller $^*$\\
  \small{ETH Z\"urich}\\
  \small{Zürich, Switzerland}\\
  \small{\texttt{adrian.mueller@inf.ethz.ch}}\\
  \And
  Gergely Neu $^*$\\
  \small{ICREA \& Universitat Pompeu Fabra}\\
  \small{Barcelona, Spain}\\
  \small{\texttt{gergely.neu@gmail.com}}\\
}

\begin{document}

\maketitle

\begin{abstract}
  We propose a new framework for generative modeling based on a discrete-time stochastic control formulation of measure 
transport. Adapting classic results from control theory, we formulate our problem as a linear program whose dual 
variables correspond to the \emph{optimal value function} of the control problem, which directly encodes the optimal 
control policy. Exploiting this LP formulation, we develop an efficient simulation-free primal-dual algorithm for 
computing approximately optimal value functions and the associated \emph{value-driven transport} (VDT) policies which 
approximate the true optimal policy. We show that well-trained VDT policies enjoy numerous favorable properties in 
comparison with other state-of-the-art methods based on flows, diffusions, or Schr\"odinger bridges: they lead to 
straight transport paths which can be simulated quickly and robustly, and can be enhanced in all the same ways as 
diffusion and flow-based models (e.g., conditional generation, classifier-free guidance, unpaired data-to-data 
translation are all easy to incorporate). We evaluate our methodology in a range of experiments, 
with results that indicate strong performance and good potential for scalability. 

\end{abstract}

\section{Introduction}
Many modern generative modeling tasks can be naturally formulated as problems of measure transport, whereby one seeks a 
transformation that turns samples from a given source distribution into samples from a desired target distribution. The rich literature on optimal transport (OT, \citep{Vil03,San15,peyre2025optimal}) provides solid mathematical 
foundations for formulating such measure transport problems and 
characterizing their solutions, making it a natural starting point for developing computational methods for generative 
modeling. 
In this work, we introduce a new algorithmic framework that approximates optimal transport plans by combining tools 
from modern deep learning and optimal control.

The core of our methodology is a reformulation of the classic \emph{dynamic optimal transport} problem of 
\citet{benamou2000computational} as a discrete-time stochastic control problem, where an initial state distribution is 
steered by a control policy in a fixed number of $H$ decision rounds, aiming to hit the target distribution at the 
final round while minimizing movement cost along the way. Using classic tools from control theory, we show that 
optimal control policies can be written in terms of a \emph{value function}, with the evolution of the states 
determined by the gradient of the value functions. Motivated by this observation, we consider the class of 
\emph{value-driven transport} policies that push states along the gradient fields of approximate value functions, 
and develop an algorithmic framework for computing the values. For this purpose, we adapt a classic linear-programming 
(LP) formulation of optimal control to our needs, where the optimal dual variables correspond to the optimal value 
functions.

Our algorithmic contribution is the development of a new primal-dual method for approximately solving this LP---or, 
equivalently, to find a saddle-point of its Lagrangian. 
At a high level, our method is based on approximating the associated dual function by numerically optimizing the primal 
variables and evaluating stochastic gradients with respect to the dual variables using small to moderate minibatches of 
data. Both primal and dual updates can be performed efficiently, without needing access to simulated trajectories 
from the learned model. The key to this efficiency is parametrizing the primal variables as particle clouds initialized 
as samples of a stochastic interpolant between the source and target distributions, which are then updated using 
Wasserstein gradient descent. With this choice, implementing our method only requires a single neural network, used for 
parametrizing the value functions. We verify the effectiveness of this method in an exhaustive range of 
experiments, with the results 
indicating good robustness and scalability properties of the training procedure and convincing performance of the 
learned transport policies.

Our approach has a number of conceptual and practical advantages over competing methods for generative modeling. The 
most notable of these are consequences of a well-known valuable property of the optimal solution to the dynamic OT
problem: the solution paths generated by the optimal policy are \emph{linear interpolations} of the (appropriately 
coupled) source and target points. This implies that the optimal policy could in principle be used for generating 
samples from the target using \emph{just a single step} (since under an optimal policy, all consecutive steps would be 
done in the same direction, with the same stepsize). While in practice, the level of precision needed for such one-step 
generation is hard to achieve due to errors of approximation and optimization, our empirical results show that the 
straight-path property of the optimal policy can indeed be used to radically reduce the number of generation steps at 
test time, up to a factor of 10 in comparison to the horizon $H$ used at training time. Another 
appealing property of 
our method is that it learns the value function as opposed to attempting to directly learn the optimal policy, 
whose value-driven structure would be challenging to enforce otherwise when working with parametrized approximations. 
Additionally, since our problem formulation is symmetric with respect to the roles source and target distributions, the 
learned value functions can be used equally effectively for transport from source to target and vice versa.

The techniques used for deriving and implementing our method are all quite different from the standard tools of the 
generative modeling literature. Instead of using concepts from continuous-time mathematics (SDEs and their time 
reversals) and denoising (score functions and their estimation) like nearly all state-of-the-art methods for generative 
modeling, our approach is rooted in discrete-time stochastic optimal control---a setting more commonly considered under 
the umbrella of reinforcement learning (RL, \citep{SB18}). Indeed, our problem formulation can be seen to be equivalent 
to finding optimal policies in a constrained Markov decision process (MDP, \citep{Puterman1994,Ber07:DPbookVol1}), with 
the objective of generating from the target being formulated in terms of a constraint. Due to this constraint, our 
problem cannot be directly addressed with standard 
tools of RL which are designed for optimizing a single numerical reward function without constraints. Additionally, 
common RL methods are developed with unknown state dynamics in mind, which makes them unlikely to perform well in our 
setting where the state dynamics is perfectly known and controllable without restrictions. 
That said, our algorithm design draws quite a bit of inspiration from the RL literature, particularly from works that 
use LP formulations as their starting point \citep{schwesei85,FaVaRo03,BCKN21,LMMN21,NO25}. In these works, the LP 
framework has largely used as a tool for theoretical analysis, and the methods themselves have arguably achieved only 
limited empirical success so far. Thus, our encouraging experimental results might be seen as a validation that the LP 
framework can indeed serve as the basis of competitive large-scale learning algorithms. We believe that the techniques 
we develop here might prove useful in addressing other control tasks in areas such as RL as well. 

The rest of the paper is organized as follows. After providing a quick primer on optimal transport in 
Section~\ref{sec:prelim}, we present our framework and algorithm in Section~\ref{sec:VDT}, and then 
Section~\ref{sec:experiments} describes our experiments. The main text is concluded in Section~\ref{sec:conc}.

\paragraph{Notation.} We denote the set of (Radon) probability measures on $\real^{d\times d}$ by $\Sc$. For a probability distribution $p\in\Sc$ and a function $f:\real^{d\times d}\ra \real^d$, we use $f_{\#}p$ to denote the pushforward of $p$ under $f$ (i.e., the distribution of $f(X)$ with $X\sim p$, and we extend this definition to arbitrary domains and co-domains of $f$). We define the maps 
$\fwd, \bwd \colon \R^d \times \R^d \to \R^d$ as the projections $\bwd(x,y)=x$ and $\fwd(x,y)=y$, which respectively induce the marginalization operators $\bwd_{\#}$ and $\fwd_{\#}$ acting on joint distributions. 
Throughout the paper, equalities between functions are to be understood to hold almost everywhere (a.e.), and 
minimization over functions as minimization over continuous functions. 

\section{Preliminaries}\label{sec:prelim}
We consider the problem of optimal transport (OT) over $\real^d$ with the squared Euclidean distance $\frac 12 
\norm{x - y}^2$ as transport cost. This is a very well-studied problem with countless equivalent formulations and 
connections with many areas of mathematics, probability, and physics \citep{Vil03,San15}. Here, we state a few 
equivalent definitions and basic results that are directly relevant to our own formulation of the problem, which we 
state in the second half of this section. We refer to the books of \citet{Vil03} and \citet{San15} for an exhaustive 
treatment of the subject, and recommend the notes of \citet{peyre2025optimal,Pey25} as a more accessible quick 
introduction.

\begin{figure}
\centering
\begin{minipage}{.45\textwidth}
 \includegraphics[width=\textwidth]{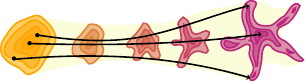}
 \\
 $\nu_0 = \src \quad\,\, \dd X_t = u_t(X_t) \dd t \quad\,\, \nu_1 = \tgt$
 \vspace{.1cm}
 \\
 \centering
\end{minipage}
\hspace{1cm}
\begin{minipage}{.45\textwidth}
 \includegraphics[width=\textwidth]{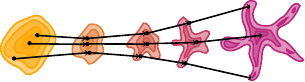}
 \\
 $\nu_0 = \src \quad\,\, X_{h+1} = \pi_h(X_h) \quad\,\, \nu_H = \tgt$
 \vspace{.1cm}
 \\
 \centering
\end{minipage}
\caption{Continuous- and discrete-time transport plans. Continuous-time plans are defined as smooth solutions of ODEs 
parametrized by a continuous-time control drift, whereas discrete-time plans are defined by a sequence of 
control policies directly mapping states to next states.}
\end{figure}

Let us consider two measures $\src$ and $\tgt$ on $\real^d$, to be referred to as the \emph{source} and \emph{target} 
distributions, respectively. We will often refer to $\X = \real^d$ as the \emph{state space} and elements $x\in\X$ as 
\emph{states}. We assume that both distributions admit densities with respect to the Lebesgue measure, and aim to 
solve the so-called 
\emph{Monge problem}\footnote{The factor $\frac 12$ featured in all our definitions is not common in the 
literature, but will be helpful for our derivations.}:
\begin{align}
    \Wc^2_2(\src,\tgt) = \min_{\substack{M \colon \R^d \to \R^d \colon  \\ M_{\#}\src = \tgt}} \frac 12 \int \|M(x) - 
x\|^2 d\src(x).
\label{eq:monge}
\end{align}
This problem is known to admit a unique solution under our conditions specified above, and the value of the 
minimum is called the \emph{squared Wasserstein-2 distance} between $\tgt$ and $\src$. The distance can 
equivalently be shown to be the value of the \textit{Kantorovich problem} 
\begin{align}
    \Wc^2_2(\src,\tgt) = \min_{\gamma \in \Gamma(\src,\tgt)} \frac 12 \int \|x-y\|^2 d\gamma(x,y), 
\label{eq:kantorovich}
\end{align}
where the domain of optimization is the set of all couplings of $\src$ and $\tgt$: $\Gamma(\src, \tgt) = \{\gamma \in 
\Sc \mid \bwd_{\#}\gamma 
= \src \wedge \fwd_{\#}\gamma = \tgt \}$.
The minimizer is unique and satisfies $\gamma^\star = (\text{id}, M^\star)_{\#} \src$.

Another equivalent formulation due to \citet{benamou2000computational} poses the OT problem as a problem of stochastic 
control. This framework sets up a control problem in which an initial state $X_0 \in \real^d$ is drawn from the 
source distribution $\src$ at the initial time instant $t=0$, and the continuous process of states is then steered by a 
time-dependent \emph{control drift}  $u_t:\real^d \ra \real^d$ according to the continuous-time dynamics $\dd X_t 
= u_t(X_t) \dd t$. We call the control \emph{feasible} if the distribution of the resulting final state $X_1$ matches 
the 
target at time $t=1$ (i.e., $\Law(X_1) = \tgt$), and denote all such controls by $\U\pa{\src,\tgt}$. Denoting the 
expectation operator under the path distribution induced by $u$ as $\EEs{\cdot}{u}$, the optimal transport cost can be 
equivalently written as
\begin{align}
    \Wc^2_2(\src,\tgt) = \min_{u \in \U(\src,\tgt)} \frac 12 \int_{[0,1]} \EEs{\norm{u_t(X_t)}^2}{u} \dd t 
\label{eq:benamou-brenier}.
\end{align}
The optimal control can be shown to satisfy $\ustar_t(x) = \nabla_x \psistar_t(x)$ for some scalar \emph{potential 
function} $\psistar:[0,1]\times\real^d \ra \real$.  Thus, solving the continuous-time optimal control 
problem reduces to finding the potential $\psistar$. Another fact which will be important for us is that the 
optimal control paths are linear interpolations of the optimally coupled points $(X_0,X_1)\sim \gamma^\star$, with $X_t 
= (1-t) X_0 + t X_1$ satisfied for all $t\in[0,1]$.

Our approach is based on a discrete-time analogue of this latter formulation of the OT problem, where a sequence of 
states are generated recursively over a sequence of discrete \emph{time steps} $h=0,1,\dots,H$ (where $H$ is called the 
\emph{horizon}). 
The initial state $X_0$ is drawn from the source distribution $\src$, and each consecutive state 
$X_0,X_1,\dots,X_{H}$ is selected according to a \emph{control policy} $\pi$ that maps each state $X_h$ to the next 
state 
as $X_{h+1} = \pi_h(X_h)$. The control cost associated with each state transition is $c(X_h,X_{h+1}) = \frac{H+1}{2} 
\norm{X_h - X_{h+1}}^2$. In general, we will consider non-stationary policies represented as the collection of 
deterministic maps $\pi = \ev{\pi_h:\real^d \ra \real^d}_{h=0}^H$.
A policy is called \emph{feasible} if the law of the final state $X_{H+1}$ generated by the above process matches the 
target distribution ($\Law(X_{H+1}) = \tgt$), and we denote the set of feasible control policies by $\Pi(\src,\tgt)$. 
Letting $\EEpi{\cdot}$ stand for the expectation operator under the joint distribution of state sequences generated by 
$\pi$, we then set up our objective as
\begin{align}
    \Wc^2_2(\src,\tgt) = \min_{\pi \in \Pi(\src,\tgt)} \frac{H+1}{2} \sum_{h=0}^{H} \EEpi{\norm{X_{h+1} - X_h}^2} 
\label{eq:DDOT}.
\end{align}
In the following section, we will show that this optimization problem is well-posed, has a unique optimal solution, and 
its value is equivalent to all the above definitions of the squared Wasserstein-2 distance (as our notation already 
suggests).

\section{Value-driven transport}\label{sec:VDT}
There are many possible ways to address the discrete-time dynamic OT problem of Equation~\eqref{eq:DDOT}. 
Unfortunately, since our problem features a hard constraint
on the terminal-state distribution, the standard theory of dynamic programming does not 
apply \citep{bellman57,howard60,Ber07:DPbookVol1}. 
Instead, our approach is based on the \emph{linear programming} formulation of discrete-time optimal control, 
originally developed in a series of papers in the 1960s \citep{Man60,Ghe60,dEp63,Den70}:
we restate our problem as optimization over the space of state distributions that can be generated by 
feasible control policies, which space is exactly characterized by a set of linear constraints. 
In Section~\ref{sec:LP} below, we formalize this optimization problem and state several fundamental claims about its 
solution. Due to space constraints, we only provide the essentials here, and defer the complete statements along with 
their derivations to Appendix~\ref{app:lp-properties}.
Given these foundations, Section~\ref{sec:VDT_alg} then describes our main algorithmic 
contribution: a scalable stochastic optimization method for computing near-optimal solutions of the LP.

\subsection{Discrete-time dynamic OT as a linear program}\label{sec:LP}
A key concept of our framework is that of \emph{occupancy measures}. The occupancy measure associated with a control 
policy $\pi$ is the collection of joint distributions $\mupi = \ev{\mupi_h \in \Delta_{\real^d \times 
\real^d}}_{h=0}^{H}$ over the state space, with $\mupi_h = \PPpi{(X_h,X_{h+1}) \in \cdot}$ corresponding to the 
distribution of the state pair $X_h,X_{h+1}$ generated by policy $\pi$ under the discrete-time control process 
described in Section~\ref{sec:prelim}. As 
is well-known in the context of Markov decision processes, the set of all valid occupancy measures that can be induced 
by control policies is uniquely characterized by a set of linear constraints (see, e.g., Chapter~6.9 in 
\citet{Puterman1994}). Adapting these results to our needs, we formulate our optimal control problem as the following 
linear program:
\begin{equation}
\begin{split}
    \Wc^2_{\text{dyn},2}(\src,\tgt) = \min_{\mu \in \Sc^{H+1}} \frac{H+1}{2}&\sum_{h=0}^{H} \int \|x-y\|^2 
d\mu_h(x,y) \\ 
    \text{s.t.} \quad\quad \src &= \bwd_{\#} \mu_0\\
    \qquad\;\fwd_{\#} \mu_h &= \bwd_{\#} \mu_{h+1} \qquad (\forall h \in \{0,\dots,H-1\})\\
    \qquad\,\fwd_{\#} \mu_H &= \tgt 
    \label{eq:primal-lp}
\end{split}
\tag{LP}
\end{equation}
The first and last constraints are called \emph{source} and \emph{target} constraints, and the intermediate ones are 
called \emph{flow constraints}. Intuitively, these express the criterion that the measures $\mu_h$ should be
temporally consistent across different time indices $h$.
The single-stage version of the problem ($H=0$) is clearly equivalent to the 
Kantorovich problem~\eqref{eq:kantorovich}. Our first technical result shows that this remains true for arbitrary 
choices of $H$.
\begin{theorem} \label{thm:DOT-structure}
The solution $\mustar$ of \eqref{eq:primal-lp} satisfies
$\Wc_{\text{dyn},2}^2(\src,\tgt) = 
\Wc_2^2(\src,\tgt)$. Moreover:
\begin{enumerate}[leftmargin=*,itemsep=0pt,label=\alph*.]
 \item Let $\gamma^\star$ be optimal for the Kantorovich problem~\eqref{eq:kantorovich}, define 
$(X_0,X_{H+1})\sim \gamma^\star$ and let $X_h = X_0 + \frac{h}{H+1} (X_{H+1}-X_0)$. Then the laws $\mu_h^\star$ of 
$(X_h,X_{h+1})$ form an optimal solution $\mu^\star$ for \eqref{eq:primal-lp}. 
 \item Suppose that the static Monge problem~\eqref{eq:monge} admits a solution $M^\star$. Then the discrete-time 
dynamic OT problem~\eqref{eq:DDOT} admits an optimal solution $\pi^\star$.
Furthermore, $\mu^\star$ given by $\mu^\star_h = (\pistar_{h-1}\circ\cdots\circ \pistar_0 ,\pistar_{h}\circ\cdots\circ 
\pistar_0)_{\#} \src$ and $\mustar_0 = (\text{id}, \pistar_0)_\# \src$
is an optimal 
solution for \eqref{eq:primal-lp}. 
\end{enumerate}
\end{theorem}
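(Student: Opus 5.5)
The plan is to prove the two inequalities $\Wc^2_{\text{dyn},2}\ge \Wc_2^2$ and $\Wc^2_{\text{dyn},2}\le \Wc_2^2$ separately. The upper bound comes from the explicit construction in part a, and the lower bound from a gluing argument combined with a Cauchy--Schwarz (Jensen) inequality.

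\emph{Lower bound.} Take any feasible $\mu=(\mu_0,\dots,\mu_H)$ for \eqref{eq:primal-lp}. By the flow constraints, consecutive measures share marginals: $\fwd_{\#}\mu_h=\bwd_{\#}\mu_{h+1}$. The gluing lemma, applied iteratively, gives a joint law of random variables $(X_0,\dots,X_{H+1})$ such that $(X_h,X_{h+1})\sim\mu_h$ for every $h$. The source and target constraints then yield $X_0\sim\src$ and $X_{H+1}\sim\tgt$, so $\Law(X_0,X_{H+1})\in\Gamma(\src,\tgt)$. By Cauchy--Schwarz,
\[
\norm{X_{H+1}-X_0}^2=\Big\|\sum_{h=0}^H (X_{h+1}-X_h)\Big\|^2\le (H+1)\sum_{h=0}^H\norm{X_{h+1}-X_h}^2 .
\]
Taking expectations and halving gives that the LP objective is at least $\frac12\E\norm{X_{H+1}-X_0}^2\ge\Wc_2^2(\src,\tgt)$. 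This holds for every feasible $\mu$, so $\Wc^2_{\text{dyn},2}\ge\Wc_2^2$.

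\emph{Upper bound and part a.} Let $\gamma^\star$ be an optimal Kantorovich coupling (existence is standard) and define $X_h$ as in part a. Then $X_0\sim\src$ and $X_{H+1}\sim\tgt$. Since $\mu^\star_h$ and $\mu^\star_{h+1}$ are both marginals of one joint law, their shared coordinate $X_{h+1}$ has a common law, so the flow constraints hold and $\mu^\star$ is feasible. Each increment equals $X_{h+1}-X_h=\frac{1}{H+1}(X_{H+1}-X_0)$, so Cauchy--Schwarz holds with equality and the objective equals
\[
\frac{H+1}{2}(H+1)\frac{1}{(H+1)^2}\E\norm{X_{H+1}-X_0}^2=\Wc_2^2 .
\]
Together with the lower bound, this shows $\mu^\star$ is optimal and the two values coincide.

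\emph{Part b.} Given the Monge map $M^\star$, define the policy
\[
\pi^\star_h\big(x_0+\tfrac{h}{H+1}(M^\star(x_0)-x_0)\big)=x_0+\tfrac{h+1}{H+1}(M^\star(x_0)-x_0).
\]
Then $X_h=T_h(X_0)$ with $T_h=\mathrm{id}+\tfrac{h}{H+1}(M^\star-\mathrm{id})$. The main obstacle is that $\pi^\star_h$ must be a well-defined map on the support of $\Law(X_h)$, which requires $T_h$ to be injective $\src$-a.e. for $h\le H$. I would establish this using Brenier's theorem: $M^\star=\nabla\varphi$ with $\varphi$ convex. Then $T_h=\nabla\big((1-t)\tfrac{\|x\|^2}{2}+t\varphi\big)$ with $t=\tfrac{h}{H+1}<1$, which is the gradient of a $(1-t)$-strongly convex function and hence injective, since $\langle T_h(x)-T_h(y),x-y\rangle\ge(1-t)\|x-y\|^2$. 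This lets us set $\pi^\star_h=T_{h+1}\circ T_h^{-1}$ on the image of $T_h$, extended arbitrarily elsewhere. The composition $\pi^\star_h\circ\cdots\circ\pi^\star_0$ then equals $T_{h+1}$, so $X_{H+1}=M^\star(X_0)\sim\tgt$ and $\pi^\star$ is feasible. Its induced occupancy measure is exactly the $\mu^\star$ from part a with $\gamma^\star=(\mathrm{id},M^\star)_\#\src$.

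Finally, every feasible policy induces a feasible occupancy measure, so the value of \eqref{eq:DDOT} is at least the LP value $\Wc_2^2$. The policy $\pi^\star$ attains this value, so it is optimal for \eqref{eq:DDOT} and its occupancy measure is optimal for \eqref{eq:primal-lp}.
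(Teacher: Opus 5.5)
Your proof is correct. The equality of values and part~a follow the paper essentially verbatim: the gluing lemma plus Jensen/Cauchy--Schwarz give the lower bound, and linear interpolation of $\gamma^\star$ gives the upper bound.

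Part~b takes a different route at its one nontrivial step. The paper sets $\pistar_h = f_{h+1}\circ f_h^{-1}$ for the interpolation maps $f_h$. It justifies that $f_h^{-1}$ is well defined via McCann's non-crossing lemma (Lemma~\ref{lem:no-cross}): for $t\in(0,1)$, a point lies on at most one interpolating segment between pairs in $\supp(\gamma^\star)$. It then re-runs the Jensen argument on policy trajectories to prove optimality.

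You instead write $M^\star=\nabla\varphi$ via Brenier and use strong monotonicity of $(1-t)\mathrm{id}+t\nabla\varphi$. You then get policy optimality directly from the LP lower bound, since every feasible policy induces a feasible occupancy measure with the same cost.

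What your version buys:
\begin{itemize}
\item It is quantitative: $\norm{T_h(x)-T_h(y)}\ge(1-t)\norm{x-y}$ makes $T_h^{-1}$ a $(1-t)^{-1}$-Lipschitz map on its image. This gives regularity of $T_h^{-1}$, and hence measurability of $\pistar_h$, at no extra cost.
\item It avoids duplicating the Jensen step.
\end{itemize}

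The cost is that Brenier requires absolute continuity of $\src$. The paper assumes this globally, so that is fine here, but the non-crossing lemma needs only cyclical monotonicity of $\supp(\gamma^\star)$. The two routes are closer than they look. Two-point cyclical monotonicity of the optimal plan already gives $\langle M^\star(x)-M^\star(y),x-y\rangle\ge 0$ on the support, which is all your strong-monotonicity estimate uses, and this is essentially how the non-crossing lemma is proved. So your argument extends verbatim to any setting where a Monge map exists.

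The paper's route additionally yields uniqueness of $\pistar_h$ on the interpolated supports. The theorem as stated does not require this.
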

Notice that the two claims together imply that the optimal control policy $\pistar$ consists of moving each source 
point along a straight line towards a designated target point, and $\mustar$ represents the probability distributions 
of the corresponding linearly interpolated states.

\paragraph{Duality.} The Lagrangian function associated with~\eqref{eq:primal-lp} is defined by introducing a set of 
dual variables $V:\real^{d}\ra\real$ to enforce the constraints, and adding them to the objective:
\begin{equation*}\label{eq:lagrangian}
\begin{split}
    \Lc(\mu, V) =& 
    \sum_{h=0}^H \!\int\! \bpa{c(x,y) + V_{h\!+\!1}(y)-V_h(x)} \dd\mu_h(x,y) 
    +\! \int \! V_0(x) \dd\src(x) -\! \int \! V_{H\!+\!1}(x)\dd\tgt(x).
\end{split}
\end{equation*}
Then, our problem can be equivalently rewritten as the min-max optimization problem
$\Wc^2_{\text{dyn},2}(\src,\tgt) =  \inf_{\mu\in\Sc^{H+1}} \sup_{V\in\Cc(\Omega)^{H+2}}~ \Lc(\mu,V)$, which can be shown to admit a unique solution $(\mustar,\Vstar)$ on the transported support.\footnote{The solution $\Vstar$ is only unique up to a 
constant additive shift. For technical reasons, here we restrict the problem to the domains $\Mc:=\Mc_{\geq 
0}(\Omega\times\Omega)$ and $\Cc(\Omega)$ over a compact convex set $\Omega\subset\R^d$ rather than all of $\R^d$, by 
assuming that $\supp(\src), \supp(\tgt) \subset \Omega$. All previous statements then hold analogously.}
Additionally, the optimal dual variables 
$\Vstar$ maximize the associated \emph{dual function} defined as
$\GG(V) = \inf_{\mu\in\Sc^{H+1}} ~ \Lc(\mu,V)$.
We show below that $\Vstar$ satisfies a system of equations analogous to the \emph{Bellman optimality 
equations} from the theory of (unconstrained) optimal control and dynamic programming. 
By this analogy, we borrow the terminology of dynamic 
programming and refer to the set of dual variables $V$ as the \emph{value function} and $\Vstar$ as the \emph{optimal 
value function}. Crucially, we also show that the optimal value function directly encodes the optimal control policy 
$\pistar$. These results are stated formally below.
\begin{theorem} \label{thm:bellman}
    Let $\mu^\star$ be an optimal solution for \eqref{eq:primal-lp} let $\Vstar$ be a maximizer of the dual 
function $\GG$. Then, for all $h=0,1,\dots,H$ and for 
$(\pistar_{h-1}\circ\cdots\circ \pistar_0)_{\#} \src$-almost all $x\in\Omega$, the optimal value 
function and the optimal policy respectively satisfy
    \begin{align*}
        \Vstar_h(x) &= \min_{y\in\Omega} \br{\frac{H+1}{2}\norm{x-y}^2 + \Vstar_{h+1}(y)},
        \\
        \pistar_h(x) &= \arg\min_{y\in \Omega}\br{\frac{H+1}{2}\norm{x-y}^2 + \Vstar_{h+1}(y)}.
    \end{align*}
\end{theorem}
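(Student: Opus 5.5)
The plan is to combine strong duality with complementary slackness on the Lagrangian $\Lc$, then read the Bellman equations off the inner minimization defining $\GG$.

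\textbf{Step 1: compute the dual function.} For fixed $V$, the infimum over each $\mu_h\in\Mc$ of $\int \bpa{c(x,y)+V_{h+1}(y)-V_h(x)}\dd\mu_h$ is $0$ if the integrand is nonnegative on $\Omega\times\Omega$, and $-\infty$ otherwise (since $\Mc$ is a cone). Hence
\[
\GG(V)=\int V_0\,\dd\src-\int V_{H+1}\,\dd\tgt
\]
on the feasible set, which is
\[
V_h(x)\le \min_{y\in\Omega}\bpa{c(x,y)+V_{h+1}(y)} =: (T V_{h+1})(x)\quad\text{for all } x, h.
\]
The minimum is attained because $\Omega$ is compact and $V_{h+1}$ is continuous.

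\textbf{Step 2: strong duality and slackness.} I will use the paper's standing claim that the saddle point exists and that $\sup_V\GG(V)=\Wc^2_{\text{dyn},2}=\Wc_2^2$ (Theorem~\ref{thm:DOT-structure}). Since $\mustar$ satisfies the constraints, the terms $\int V_0\dd\src-\int V_{H+1}\dd\tgt$ telescope against the flow constraints. This gives
\[
\Lc(\mustar,\Vstar)=\text{cost}(\mustar)=\GG(\Vstar)+\sum_h\int \bpa{c+\Vstar_{h+1}(y)-\Vstar_h(x)}\dd\mustar_h .
\]
Equality of primal and dual values forces each nonnegative integrand to vanish $\mustar_h$-a.e. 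In other words, for $\mustar_h$-a.e. $(x,y)$,
\[
\Vstar_h(x)=c(x,y)+\Vstar_{h+1}(y)\ge (T\Vstar_{h+1})(x)\ge \Vstar_h(x).
\]
Both inequalities are therefore equalities, and $y$ attains the minimum.

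\textbf{Step 3: transfer to the policy.} By Theorem~\ref{thm:DOT-structure}(b), I take $\mustar_h=(\Pi_{h-1},\Pi_h)_\#\src$ with $\Pi_h=\pistar_h\circ\cdots\circ\pistar_0$. Its first marginal is $(\Pi_{h-1})_\#\src$ and its support is the graph of $\pistar_h$ on that marginal. Step 2 then yields both displayed equations for $(\Pi_{h-1})_\#\src$-a.e. $x$.

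The phrase ``$\arg\min$'' must mean the unique minimizer, or at least that $\pistar_h(x)$ is a minimizer. I will note that membership in the argmin is what slackness gives. If uniqueness is wanted, I would argue it as follows: $\Vstar_{h+1}$ is semiconcave-related (it is a $c$-transform), so $\Vstar_h$ is differentiable a.e. by Rademacher. At points of differentiability the first-order condition $y=x-\nabla\Vstar_h(x)/(H+1)$ pins down the minimizer. Here I need $(\Pi_{h-1})_\#\src$ to be absolutely continuous, which holds because the intermediate measures of displacement interpolation between absolutely continuous measures are absolutely continuous.

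\textbf{Main obstacle.} The main obstacle is justifying the slackness step rigorously in infinite dimensions. This needs attainment of the dual supremum by a continuous $\Vstar$ and a zero duality gap. I would get both by reducing to the static Kantorovich duality: define $\Vstar_{H+1}$ as an optimal Kantorovich potential with cost $\frac{1}{2}\norm{x-y}^2$ (rescaled appropriately), set $\Vstar_h=T\Vstar_{h+1}$ recursively, and use the Hopf–Lax semigroup property. That property says $H+1$ iterated $T$'s with cost $\frac{H+1}{2}\norm{\cdot}^2$ equal one $c$-transform with cost $\frac12\norm{\cdot}^2$, via convexity of the squared norm. This shows $\GG(\Vstar)=\Wc_2^2$, while compactness of $\Omega$ gives continuity.
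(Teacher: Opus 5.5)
Your proposal is correct and follows essentially the paper's route. The zero duality gap, which the paper also obtains by reducing the dynamic dual to the static Kantorovich dual via the recursive $\min$-construction, forces complementary slackness. Feasibility of $\Vstar$ then sandwiches $\Vstar_h(x)$ between $\min_{y\in\Omega}\br{\frac{H+1}{2}\norm{x-y}^2+\Vstar_{h+1}(y)}$ and its value at $y=\pistar_h(x)$. The only cosmetic difference is that the paper telescopes directly along the trajectories of $\pistar$, whereas you first argue $\mustar_h$-a.e.\ and then pass to the graph of $\pistar_h$.

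The paper likewise only shows that $\pistar_h(x)$ is \emph{a} minimizer, so your uniqueness aside goes beyond what is needed. If you pursue it, use a.e.\ differentiability of the semiconcave envelope $x\mapsto\min_{y\in\Omega}\br{\frac{H+1}{2}\norm{x-y}^2+\Vstar_{h+1}(y)}$ rather than of $\Vstar_h$. Your claim that $\Vstar_h$ itself is a $c$-transform holds only on the support of $(\pistar_{h-1}\circ\cdots\circ\pistar_0)_{\#}\src$.
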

Note that, unlike in standard dynamic programming, $\Vstar_{H+1}$ is typically \emph{not} zero, but rather corresponds 
to the optimal dual variables that penalize violations of the target-distribution constraint. In this sense, 
$\Vstar_{H+1}$ may be seen as a ``discriminator'' function, which can be shown to correspond to the optimal dual 
variables of the Kantorovich problem~\eqref{eq:kantorovich}. 
Even more importantly, the theorem shows that the optimal policy is \emph{greedy} with respect to $\Vstar$. Generally, 
a policy $\pi$ is greedy with respect to a value function $V$ if it maps state $x_{h}$ to $x_{h+1}$ that satisfies
\[
x_{h+1} = \arg\min_{y\in \Omega}\br{\frac{H+1}{2}\norm{x_h-y}^2 + V_{h+1}(y)} =  
x_h - \frac{1}{H+1}\nabla_x V_{h+1}(x_{h+1}).
\]
Note that this system of equations is not easy to solve, as it features $x_{h+1}$ on both the left- and right-hand 
side. We address this issue below by making one more important observation about the structure of the optimal solution 
paths.

\paragraph{Value-driven transport policies.} 
As established in Theorem~\ref{thm:DOT-structure}, the optimal solution paths linearly interpolate between the target 
and source points. By combining this insight with the above fact about $\pistar$, we deduce that states sampled from 
the optimal policy evolve as
\begin{align}\label{eq:gradient_dir}
 X_{h+1} &= X_h + \frac{1}{H+1} (X_{H+1}-X_0)
 = X_h - \frac{1}{H+1} \nabla_x \Vstar_{h+1}(X_{h+1}),
\end{align}
which implies that $\nabla_x \Vstar_{h}(X_{h}) = (X_0 - X_{H+1})$ holds for all $h$. Therefore, the optimal policy can 
be written explicitly as $ \pistar_h(x) = x - \frac{1}{H+1}\nabla_x \Vstar_h(x)$, recovering the form of the 
continuous-time dynamic OT path.
Based on this observation, we define the \emph{value-driven transport (VDT) policy} induced by a generic value function 
$V$ 
as the policy $\pi(\cdot;V)$ mapping states to next states 
as 
\begin{equation}\label{eq:vdt_prediction}
 \pi_h(x;V) = x - \frac{1}{H+1} \nabla_x V_h(x).
\end{equation}
Clearly, we have $\pi(\cdot;\Vstar) = \pistar$, but note that value-driven and greedy policies do not generally 
coincide for value functions other than $\Vstar$. In contrast to greedy policies, VDT policies have the distinct 
advantage of being explicitly computable from $\Vstar$, while remaining a sufficiently expressive class to include the 
optimal policy. Therefore, we will restrict our attention to VDT policies in our algorithm design, which we turn to 
describing below.
\begin{figure}
 \includegraphics[width=\textwidth]{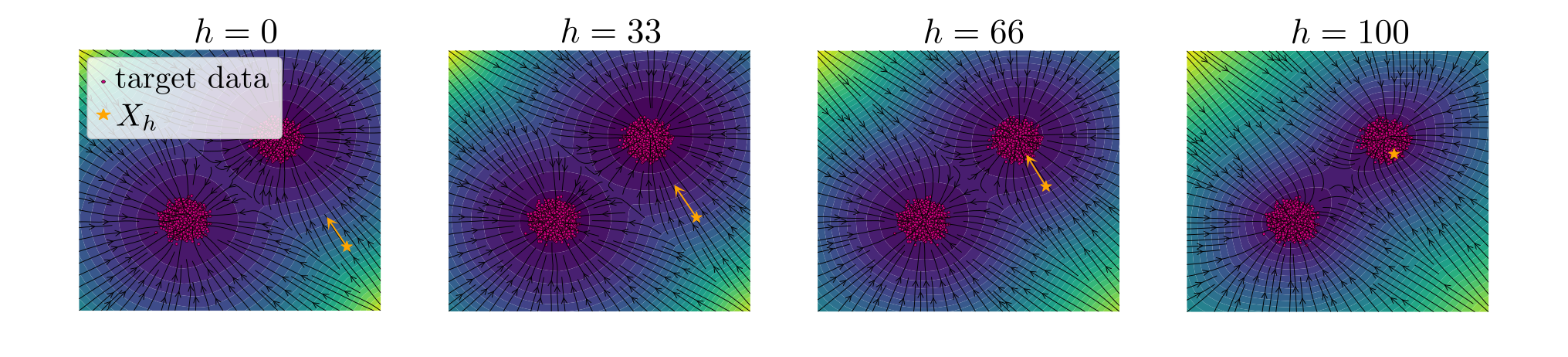}
 \caption{Value functions and value-driven transport policies in a two-dimensional example, plotted for time indices 
$h\in\ev{0,33,66,100}$. A (straight) sample path followed by a source sample and the update directions highlighted in 
yellow.}
\end{figure}

\subsection{Primal-dual methods for VDT training}\label{sec:VDT_alg}
We are now ready to describe our main algorithmic contribution: a primal-dual algorithm for computing near-optimal 
VDT policies. In short, we aim at approximately solving the min-max problem~$\inf_\mu \sup_V 
\Lc(\mu,V)$, and returning the VDT policy induced by the estimated value function. The key practical challenge 
is picking an appropriate parametrization for the primal and dual variables that allows efficient training 
while having only sample access to the distributions $\src$ and $\tgt$.

\paragraph{Dual parametrization and update rule.} We parametrize the dual variables using a neural 
network $V_\theta$ that takes as input a state $x$ and a time index $h$, and returns $V_\theta(x,h)\in\real$. Our ideal 
goal is to perform stochastic gradient ascent on the dual function $\GG$ with respect to the parameters $\theta$. In 
order to construct a sample-based estimator of the required gradients, we consider a minibatch of (possibly 
coupled) samples $(\Xsrc(1),\dots,\Xsrc(b)) \sim \src$ and $(\Xtgt(1),\dots,\Xtgt{(b)}) \sim \tgt$ drawn respectively from the source and target distributions, and use these to define the \emph{empirical Lagrangian}
\begin{align*}
    \hLc(\mu, V) =&     \sum_{h=0}^H \!\int\! \bpa{c(x,y) + V_{h\!+\!1}(y)-V_h(x)} \dd\mu_h(x,y)  + \frac 1b 
\sum_{i=1}^b 
\bpa{V_0\pa{\Xsrc\pa{i}} - V_{H+1}\pa{\Xtgt\pa{i}}}.
\end{align*}
Analogously, we define the \emph{empirical dual function} as $\hGG(V) = \inf_{\mu\in\Sc^{H+1}} ~ \hLc(\mu,V)$. Clearly, both of these are unbiased estimators of their population counterparts $\LL$ and $\GG$, and thus stochastic 
gradients of $\GG$ can be obtained by evaluating the gradients of $\hGG$.

\paragraph{Primal parametrization and update rule.} Computing the empirical dual function $\hGG$ is a highly nontrivial 
task, given that it involves optimizing over the set of probability 
distributions $\Sc^{H+1}$. We propose to address this problem by a 
nonparametric approach known as \emph{Wasserstein gradient descent} \citep{AGS05,JKO98,SKL20,CKK24}. 
Specifically, for each time index $h = 0,1,\dots,H$, we set up a system of $2 b$ \emph{particles} to represent $\mu_h$, 
with a pair of particles corresponding to each sample pair $(\Xsrc(i),\Xtgt(i))$ within the minibatch. Denoting the pair by 
$(X_h^-(i),X_h^{+}(i))$ corresponding to time $h$ and particle $i$, the distribution $\hmu_h$ is written as
$
 \hmu_h = \frac{1}{b} \sum_{i=1}^b \dirac{\pa{X_h^-(i),X_h^+(i)}}
$
where $\delta_z$ is the Dirac measure centered at $z\in\real^{2d}$. 
Then, the locations of the particles are updated by gradient descent as
\begin{equation}\label{eq:particle_update}
\begin{split}
 X_{h}^-(i) &\leftarrow X_{h}^-(i) - \eta \pa{\nabla_x c(X_{h}^-(i),X_{h}^+(i)) - \nabla_x V_{h}(X_{h}^-(i))}
 \\
 X_{h}^+(i) &\leftarrow X_{h}^+(i) - \eta \pa{\nabla_y c(X_{h}^-(i),X_{h}^+(i)) + \nabla_x V_{h+1}(X_{h}^+(i))},
\end{split}
\end{equation}
where $\eta > 0$ is a positive stepsize parameter, and $\nabla_x$ and $\nabla_y$ respectively refer to differentiation 
of the cost function with respect to its first and second arguments. After sufficiently many update steps, the procedure 
returns a particle system which can be used to evaluate the gradient of the approximate dual function 
$\hLc(\hmu,V_\theta)$ with respect to $\theta$ as the average of the sample gradients for all $i=1,2,\dots,b$:
\begin{equation}\label{eq:dual_update}
\begin{split} 
\Delta_\theta(i) =  & \pa{\nabla_\theta V_\theta(\Xsrc(i),0) - \nabla_\theta V_\theta(X_{0}^-(i),0) } +
     \pa{\nabla_\theta V_\theta(X_{H}^+(i),H\!+\!1) -      \nabla_\theta V_\theta(\Xtgt(i),H\!+\!1)}
     \\ 
     & + 
     \sum_{h=1}^H \pa{\nabla_\theta V_\theta(X_{h-1}^+(i),h) - 
     \nabla_\theta V_\theta(X_{h}^-(i),h)}.
     \raisetag{.75cm}
\end{split}
\end{equation}

For the primal updates to succeed, it is greatly beneficial to design a good initialization of the particles. 
Inspired by the structure of the optimal solution $\mustar$ established in Theorem~\ref{thm:DOT-structure}, we thus  
first compute a coupling $(\Xsrc^\star(i),\Xtgt^\star(i))_{i=1}^b$ of the source and target points 
in the minibatch, and set up the initial particle positions for all $h=0,1,\dots,H-1$ as 
\begin{equation}\label{eq:particle_init}
\begin{split}
X_{h}^+(i) = X_{h+1}^-(i) &= \frac{H + 1 - h}{H+1} \cdot \Xsrc^\star(i) + \frac{h}{H+1} \cdot \Xtgt^\star(i).
\end{split}
\end{equation}
The edge cases are initialized as $X_0^-(i) = \Xsrc^\star(i)$ and $X_H^+(i) = \Xtgt^\star(i)$. It is easy to see the resulting 
initial particle cloud $\hmu$ satisfies the constraints of~\eqref{eq:primal-lp}. Additionally, if 
the minibatch samples are coupled according to an OT map, Theorem~\ref{thm:DOT-structure} 
implies that $\hmu$ minimizes the empirical primal function $\sup_{V} \hLc(\cdot,V)$. Thus, this initialization is 
expected to be nearly perfect when $V_\theta$ is close to $\Vstar$, and the number of primal updates necessary for good 
performance can be very low. 

\paragraph{The algorithm.} Our algorithm for training VDT policies then arises from combining the elements described 
above. In each training iteration $t=1,2,\dots,T$, we sample a minibatch of source and target points, initialize the 
corresponding set of particles and perform a sequence of primal updates, and finally evaluate the gradient of the 
resulting approximation of the dual function. The method is shown in pseudocode form as 
Algorithm~\ref{alg:VDT_training}. We provide further implementation details in 
Appendix~\ref{app:implementation-details}.

\begin{algorithm}[t]
\caption{Primal-dual VDT training}\label{alg:VDT_training}
{\textbf{Input:}} Set of source and target data points $\pa{\Xsrc(i)}_{i=1}^n$, $\pa{\Xtgt(i)}_{i=1}^n$
\;
\textbf{For} $t=1,2,\dots,T$, \textbf{repeat}
    \begin{enumerate}[leftmargin=*]
     \item Sample minibatch $B = \pa{\Xsrc(i),\Xtgt(i)}_{i=1}^b$ and initialize particles via \cref{eq:particle_init}\;
     \item \textbf{Primal updates:} For $k=1,\dots,\kprimal$, update particles via \cref{eq:particle_update}\; 
     \item \textbf{Dual update:} Compute stochastic gradients via \cref{eq:dual_update} and update 
parameters as $\theta \gets \theta + \gamma \Delta_\theta$\;
    \end{enumerate}
\textbf{Return:} value function $V_\theta$\;
\end{algorithm}

\subsection{Generative modeling by value-driven transport}\label{sec:prediction}
The trained value function $V_\theta$ can be directly used for purposes of generative modeling: 
to generate samples from the target, one draws a sample point from the source $\src$ and then 
iteratively applies the learned VDT policy using \cref{eq:vdt_prediction}. 
We outline a few possible extensions to this procedure below.

\paragraph{Generation time scales.} 
\begin{wrapfigure}{r}{0.5\textwidth}
\vspace{-.4cm}
\begin{minipage}{.5\textwidth}
\begin{algorithm}[H]
\caption{VDT prediction}\label{alg:VDT_prediction}
\textbf{Input:} value function $V_\theta$, test horizon $\Htest$.\;
\textbf{Init:} draw source sample $X_0 \sim \src$.\;
\textbf{For} $h=0,1,\dots,\Htest$, \textbf{repeat}
$X_{h+1} = X_h - \frac{1}{\Htest + 1} \nabla_x V_\theta(X_h,h/\Htest)$\;
\end{algorithm}
\end{minipage}
\vspace{-.4cm}
\end{wrapfigure}
On top of the structural properties already established, we can also show that the 
optimal value function $\Vstar$ satisfies a certain invariance property with respect to the time horizon $H$ (see 
Appendix~\ref{app:few-step}). Specifically, fixing any \emph{time scale} $k >0$ and considering two different horizons 
$H$ and $H_k$ with $H_k+1 = k(H+1)$, we have that the corresponding value functions $V^{\star,H}$ and 
$V^{\star,H_k}$ are related as $V^{\star,H}_{h}(x) = V^{\star,H_k}_{h_k}(x)$ for each $h_k = kh$. This implies that a 
value function $V$ learned at a fixed $H$ can be used for generation at 
various time scales. In particular, setting $\Htest$ as the test-time horizon, one can use the policy $\pi(x) = x - 
\frac{1}{\Htest+1} \nabla_x V(x)$ for generation. To take advantage of this, it is convenient to parametrize $V_\theta$ 
so that, instead of the absolute time index $h$, it takes the normalized index $h/H$ as input. We state our
generation procedure using this convention as Algorithm~\ref{alg:VDT_prediction}.

\paragraph{Translation tasks and reverse generation.} Since our problem formulation and training method is entirely 
symmetric with respect to the roles of $\tgt$ and $\src$, the learned value function can be used for reverse 
generation: for generating samples from $\src$ given samples from $\tgt$, one simply needs to flip the 
sign and time-direction\footnote{In fact, this reverse-generation procedure can be immediately read out from 
Equation~\eqref{eq:gradient_dir}.} of the gradient updates in Equation~\eqref{eq:vdt_prediction}.
Additionally, Algorithm~\ref{alg:VDT_training} is also suitable for dealing with so-called \emph{paired}  translation 
tasks where there is a known coupling between the data sets used for training, by taking this coupling into account 
when forming the minibatches. 

\paragraph{Conditional generation.} Our framework can easily be adapted to the important setup of conditional 
generation, where each source and target point is augmented with a class label (or other supplementary information 
such as a well-chosen embedding of caption text). This additional information can be easily incorporated by passing it 
as an additional argument to the value function. The training and generation procedures are straightforward to update 
accordingly.

\section{Experiments}\label{sec:experiments}
We perform a range of numerical experiments to study the performance of our algorithm, and provide more insight about 
the properties of VDT policies. 
Instead of attempting to solve the most challenging generative modeling tasks (where competing methods have serious 
advantage due to the years of collective engineering experience they have benefited from), 
we aim to show that our method performs comparably with the best well-known methods in small-scale settings where 
thorough benchmarking of the methods themselves is possible. We also provide some early evidence that our method can 
achieve nontrivial behavior in large-scale data sets as well. Along the way, we highlight a few unique features of our 
methodology that might make it a desirable alternative to other frameworks.

\begin{figure}[t]
\centering
\setlength{\tabcolsep}{5pt}
\renewcommand\arraystretch{0.8}
\vspace{-0.3cm}
\scalebox{0.63}{
    \begin{tabular}{ccccc}
    \toprule 
     & \multicolumn{4}{c}{\textit{Wasserstein-2 distance from target}}\tabularnewline
    \cmidrule{2-5} \cmidrule{3-5} \cmidrule{4-5} \cmidrule{5-5} 
    \textit{Dataset} & moons & scurve & 8gaussians & moons-8gaussians\tabularnewline
    \midrule[1pt]
100 step VDT+ & \textbf{0.131\small{\textpm  0.034}} &  \textbf{0.120\small{\textpm  0.013}}& 0.435\small{\textpm  
0.123} & \textbf{0.652 \small{\textpm  0.151}} \tabularnewline
10 step VDT+ &  \textbf{0.132 \small{\textpm  0.029}} &  \textbf{0.125\small{\textpm  0.013}}& 0.424\small{\textpm  
0.111} 
& \textbf{0.626 \small{\textpm  0.097 }} \tabularnewline
1 step VDT+ & 0.229 \small{\textpm  0.008} &  0.262\small{\textpm  0.004}& 0.809\small{\textpm  0.025} 
& 1.365\small{\textpm  0.039}\tabularnewline
\midrule
100 step VDT & 0.219\small{\textpm  0.086} &  0.208\small{\textpm  0.016}& 0.547\small{\textpm  0.120} 
& 1.205 \small{\textpm  0.125} \tabularnewline
10 step VDT & 0.307\small{\textpm  0.035} &  0.273\small{\textpm  0.058}& 0.565\small{\textpm  0.073 } 
& 1.360 \small{\textpm  0.077 } \tabularnewline
1 step VDT & 1.497 \small{\textpm  0.046} &  1.803\small{\textpm  0.048}& 4.152 \small{\textpm  0.099} 
& 6.498 \small{\textpm  0.058}\tabularnewline
    \midrule
    SF$^2$M+ & \textbf{0.124\small{\textpm  0.023}} & \textbf{0.128\small{\textpm  0.005}} & 
\textbf{0.275\small{\textpm  0.058}} & 
{0.726\small{\textpm  0.137}}\tabularnewline
    SF$^2$M & {0.185\small{\textpm  0.028}} & {0.201\small{\textpm  0.062}} & {0.393\small{\textpm  0.054}} & 
{1.482\small{\textpm  0.151}}\tabularnewline
    DSBM-IMF++ & \textbf{0.123\small{\textpm  0.014}} & \textbf{0.130\small{\textpm  0.025}} & 
\textbf{0.276\small{\textpm  0.030}} & 
0.802\small{\textpm  0.172} \tabularnewline
    DSBM-IMF & 0.144\small{\textpm  0.024 }& 0.145\small{\textpm  0.037 }& 0.338\small{\textpm  0.091 }& 
0.838\small{\textpm  0.098}\tabularnewline
    OT-CFM+ & \textbf{0.130\small{\textpm  0.016}} & {0.144\small{\textpm  0.028}} & {0.303\small{\textpm  
0.043}} & \textbf{0.601\small{\textpm  0.027}}\tabularnewline
RF & 0.283\small{\textpm  0.045}& {0.345\small{\textpm  0.079}} & {0.421\small{\textpm  0.071}} & 
1.525\small{\textpm 0.330}\tabularnewline
    \midrule
    oracle & - & - & - & 
- \tabularnewline
    \bottomrule
    \end{tabular}
}
\scalebox{0.63}{
    \begin{tabular}{ccccc}
    \toprule 
    \multicolumn{4}{c}{\textit{Path energy}}\tabularnewline
    \midrule
    moons & scurve & 8gaussians & moons-8gaussians\tabularnewline
    \midrule[1pt] 
1.238 \small{\textpm  0.059} &  \textbf{1.629\small{\textpm  0.027}}& \textbf{14.386\small{\textpm  0.135}} 
& \textbf{30.444 \small{\textpm  0.311}}\tabularnewline
1.236 \small{\textpm  0.058} &  \textbf{1.623\small{\textpm  0.026}}& \textbf{14.290\small{\textpm  0.148}}
& \textbf{29.989\small{\textpm  0.278}}\tabularnewline
1.066 \small{\textpm  0.073} &  1.358 \small{\textpm  0.026}& 10.817 \small{\textpm  0.254} 
& 22.679 \small{\textpm  0.333}\tabularnewline
    \midrule
2.416 \small{\textpm  0.096} &  3.051\small{\textpm  0.084}& 17.799\small{\textpm  0.328} 
& 73.001 \small{\textpm  2.228}\tabularnewline
2.272 \small{\textpm  0.073} &  2.955 \small{\textpm  0.07}& 18.046\small{\textpm  0.332} 
& 76.488 \small{\textpm  2.682}\tabularnewline
1.154 \small{\textpm  0.061} &  0.82 \small{\textpm  0.036}& 0.419 \small{\textpm  0.072} 
& 43.230 \small{\textpm  4.396}\tabularnewline
    \midrule 
    \textbf{1.183\small{\textpm  0.043}} & \textbf{1.686\small{\textpm  0.039}} & {14.66\small{\textpm  0.173}} & 
\textbf{31.36\small{\textpm 0.930}}\tabularnewline
{2.08 \small{\textpm  0.146}} & {3.01\small{\textpm  0.173}} & {16.74\small{\textpm  0.274}} & {107.3\small{\textpm  
9.695}}\tabularnewline
{1.594\small{\textpm  0.043}} & 2.116\small{\textpm  0.018} & {14.88\small{\textpm  0.252}} & {41.09\small{\textpm  
1.206} }
\tabularnewline
    {1.580\small{\textpm  0.036}} & {2.092\small{\textpm  0.053}} & \textbf{14.81\small{\textpm 0.255}} & 
{41.00\small{\textpm 1.495}}\tabularnewline
    \textbf{1.216\small{\textpm  0.01}} & \textbf{1.675\small{\textpm  0.019}} & {14.88\small{\textpm  0.389}} & 
\textbf{30.47\small{\textpm 0.300}}\tabularnewline 
{1.269\small{\textpm  0.088}} & {1.793\small{\textpm  0.107}} & {15.06\small{\textpm  0.447}} & 
{36.11\small{\textpm 2.701}}\tabularnewline 
\midrule
{1.123\small{\textpm  0.01}} & {1.631\small{\textpm  0.03}} & {14.43\small{\textpm  0.045}} & {30.02\small{\textpm  
0.076}}\tabularnewline
    \bottomrule
    \end{tabular}
}
\captionof{table}{Sampling quality as measured by Wasserstein-2 distance to target and path energy for the 2D 
experiments, mean \textpm 1 standard deviation over 5 repetitions. More details and results in 
Appendix~\ref{app:more-2d-experiments}.}
\label{tab:2d_result}
\centering
\includegraphics[width=0.9\textwidth,trim=100 0 100 0]{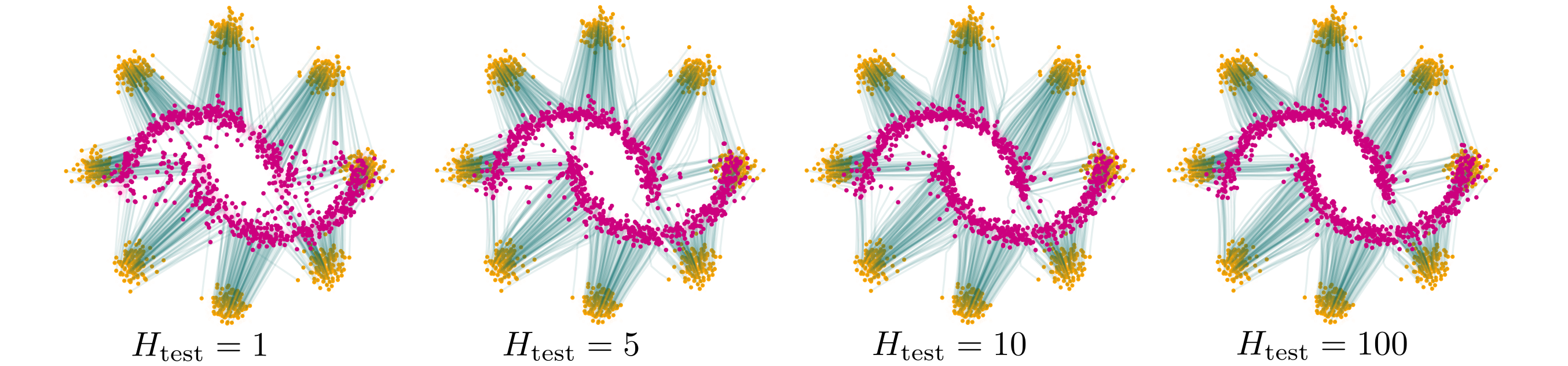}
\captionof{figure}{Few-step generation with a learned VDT model.}\label{fig:few_step}
\end{figure}

\subsection{2D experiments}
We first evaluate our method in a range of 2D benchmark experiments inspired by 
\citet{TFM+23a,TMF+23b}, also used by \citet{SdBCD23}. In our table, we reuse the figures reported by \citet{TMF+23b}. 
In most experiments, we set the source distribution as a 
standard Gaussian and vary the target distribution among three synthetic data sets (called ``moons'', ``scurve'', 
``8gauss''). In the last experiment, we set the source and target distributions as samples from the ``moons'' and 
``8gauss'' data sets. We report an estimate of the Wasserstein-2 distance\footnote{For this comparison, we omit the 
$\frac 12$ factor featured in our definitions.} between the target and the generated distributions, as well as the 
``path energy'' defined as $\sum_{h=0}^H c(X_h,X_{h+1})$ for each sample path. Since the path energy should approach 
the squared Wasserstein-2 distance for optimal transport paths, we also report estimates of this quantity in the last 
row of the table. For our own method, we report performance for two different particle-initialization schemes (na\"ive 
and OT, the latter version marked as ``VDT+''), and various choices of prediction horizon $\Htest$. We observe that 
using an OT map for particle initialization greatly improves generation quality, leading to reasonable accuracy already 
after a single generation step. Notably, the results for $\Htest = 10$ and $\Htest = H = 100$ are nearly identical, 
thus showing that one can gain a tenfold speedup in prediction time without having to make compromises about sample 
quality. 
Figure~\ref{fig:few_step} illustrates the quality of few-step generation in the same setup. Complete details and more 
results are provided in~Appendix~\ref{app:more-2d-experiments}.

\subsection{Experiments on MNIST}

We next showcase the flexibility of the VDT framework by evaluating it on three common generative modeling tasks on the 
classic MNIST data set \citep{MNIST}. A more complete description of the experimental setups and more experiments are 
reported in Appendix~\ref{app:more-mnist-experiments}.

\begin{figure}
\centering

\begin{subfigure}[b]{.45\textwidth}
\vskip 0pt 
\begin{centering}
         \includegraphics[width=\linewidth]{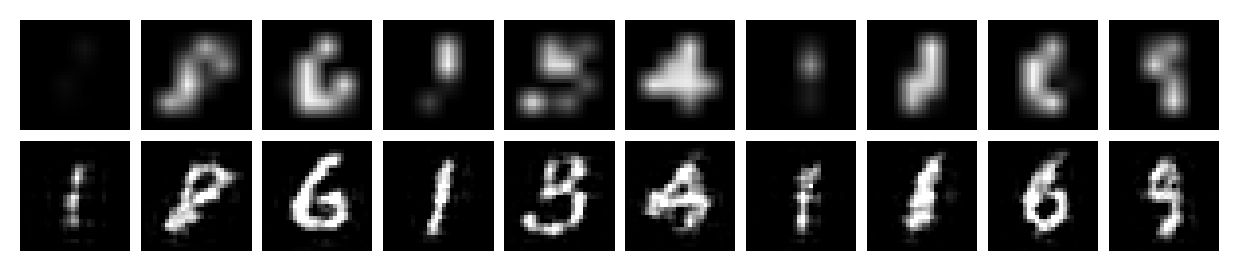}
        \includegraphics[width=\linewidth]{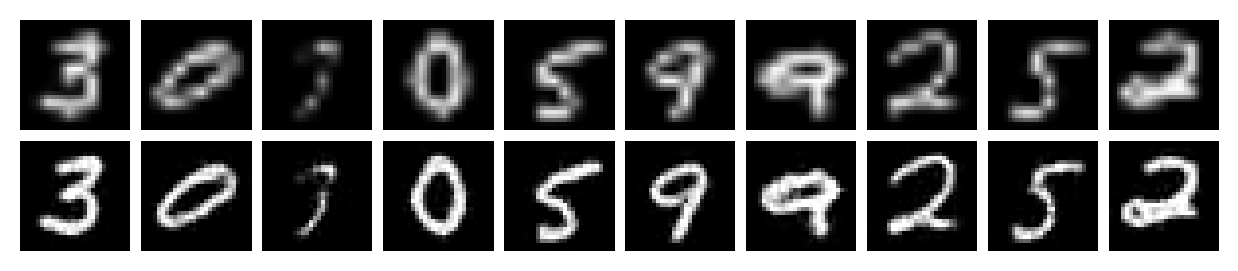}
(a) Paired data translation for deblurring downscaled digits (sizes $6\times6$ and $10\times10$).
\end{centering}

\begin{centering}
    \includegraphics[width=\linewidth]{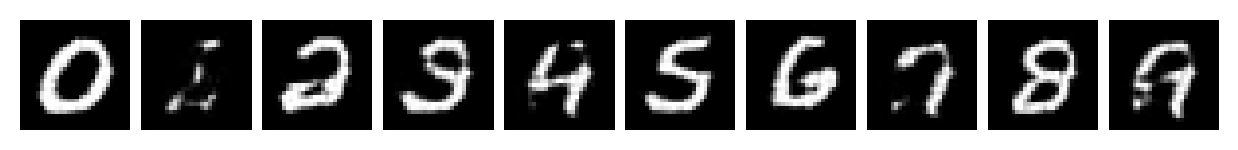}
\includegraphics[width=\linewidth]{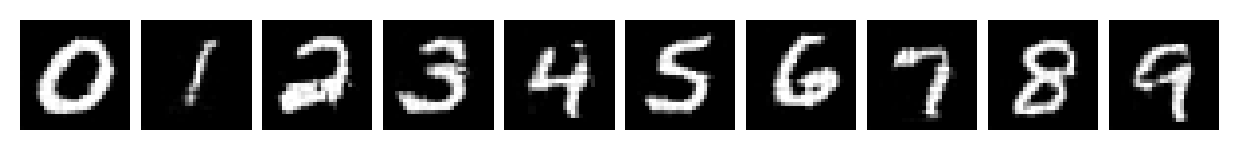}
\end{centering}
(b) Conditional generation of MNIST digits by label, with and without guidance. 
\vspace{.1cm}
\end{subfigure}
\hspace{.7cm}
\begin{subfigure}[b]{0.45\textwidth}
\vskip 0pt
\begin{centering}
\adjincludegraphics[width=\linewidth,trim={0 0 0 {.5\height}},clip]
{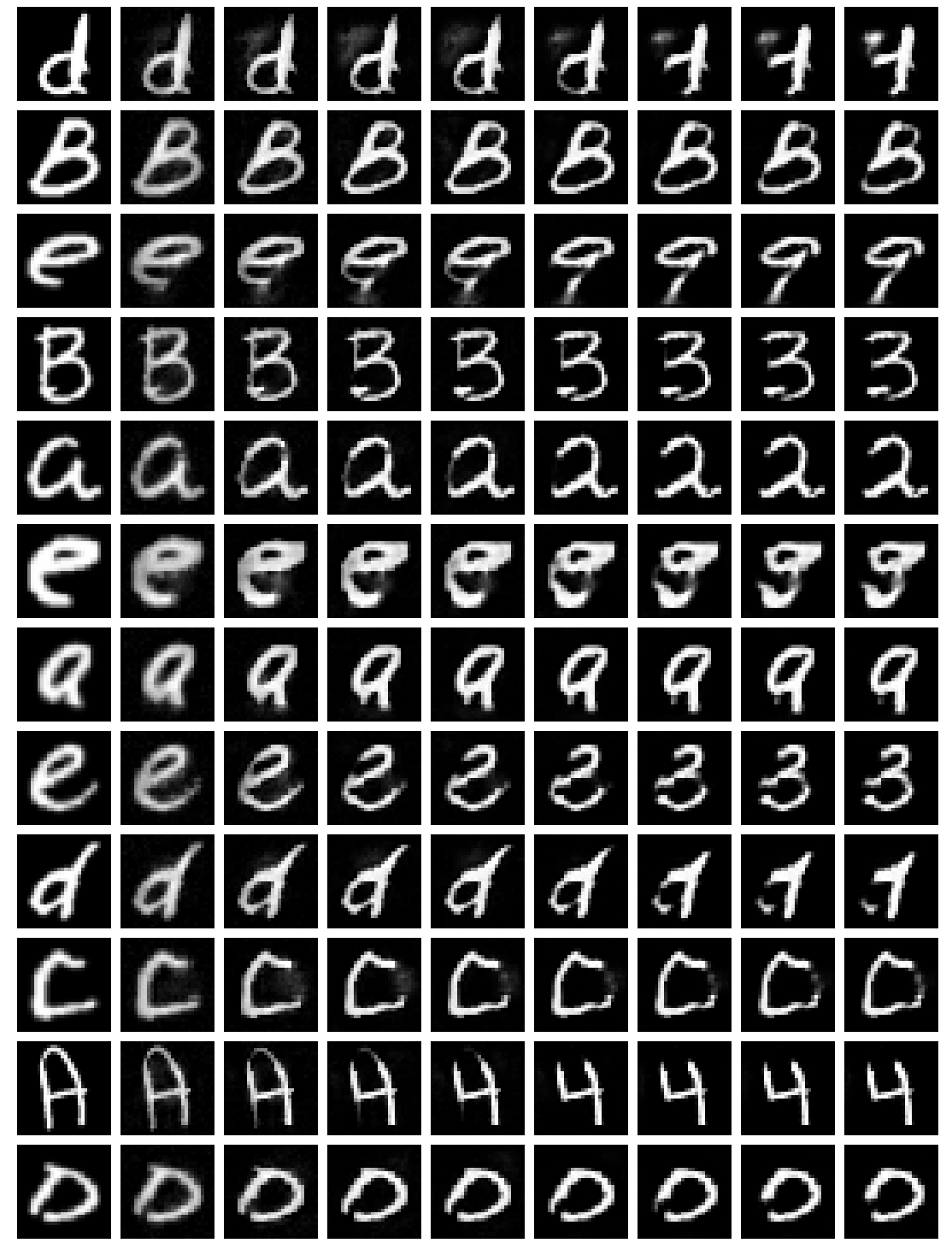}
\end{centering}
    (c) Unpaired EMNIST $\ra$ MNIST translation with few steps. Source data is shown on the left, followed by data 
obtained with different settings of $\Htest \in \ev{1,2,4,5,10,25,50,100}$.
    \label{fig:letter_digit}
\end{subfigure}
\caption{Experiments on MNIST: conditional generation and data translation.}\label{fig:mnist}
\end{figure}

\paragraph{Paired deblurring.} The first task we consider is to deblur corrupted MNIST images. We downsample 
the original $28\times28$ MNIST images to dimension $m\times m$ and linearly upsample them back to the original full 
dimension. The resulting blurred image $S(i)$ is paired with its original counterpart $T(i)$ during VDT training. We 
train a model for $m=6,10,14$ each and evaluate the result on a holdout set of unseen blurred MNIST images. As shown 
in~\cref{fig:mnist}a, the VDT policies are indeed able to reconstruct the clean image even after rather severe 
blurring.

\paragraph{Unpaired letter to digit translation.} We next consider a task without naturally available pairings, namely 
the one of translating handwritten letters (`a'-`e' and `A'-`E' from EMNIST \citep{EMNIST}) to handwritten digits (0-9 
from MNIST). We train a VDT model on unpaired minibatches of letters and digits, and show the results obtained on 
a holdout set by the few-step generation process for various choices of $\Htest$ in Figure~\ref{fig:mnist}c.
While the sharpness of the generated digits could likely be improved through heavier training, they clearly exhibit the 
desired resemblance of the source letters, demonstrating that our algorithm can learn a nontrivial relation between the 
domains without explicit supervision. Additionally, we observe a similar test-time speedup achieved by few-step 
generation with $\Htest = 10$, a factor $10$ improvement over the training horizon $H=100$.

\paragraph{Conditional image generation and guidance.}
Finally, we study the setup of conditional generation of MNIST digits. 
For this task, we augment our algorithm by concatenating the class label to the input of the parameterized value 
function and train it using the same methodology as for the other experiments. More importantly, an equally simple 
modification to the VDT training and inference procedure allows us to incorporate \emph{classifier-free guidance} (CFG) 
\citep{HS22}, originally proposed for improving conditional generation quality of diffusion models. We 
provide more details in Appendix~\ref{app:more-mnist-experiments}.
The results indicate that CFG affects the generation quality in the same way as it does for diffusion models: we obtain 
enhanced image quality at the expense of smaller sample diversity.

\section{Limitations and outlook}\label{sec:conc}
Our new framework contributes to the already very busy research area of generative modeling. In many respects, our 
approach significantly departs from the mainstream within this area---we review the most notable similarities and 
differences in Appendix~\ref{app:related-work}. Besides holding strong promises for the future, this novelty also means 
that our results come with severe limitations. The most obvious of these is that, since our primal-dual algorithm for 
computing value functions is entirely new in this context, there are no available best practices that we could adapt in 
our implementation. Lacking such foundations to build on, we have not yet been able to scale the method to meaningfully 
attack the most challenging generative modeling tasks. We are confident that further experimentation will reveal better 
ways to tune our method so that it can reach its full potential, or to identify better algorithmic ideas to solve the 
linear program at the heart of our formulation. Indeed, we strongly believe that our methodology has significant 
conceptual advantages that can be turned into practical advantages once the training methodology we propose here 
is sufficiently refined, which is a challenge that we invite the reader to address in future work.

\begin{ack}
  The authors thank the following colleagues for insightful discussions at various stages of working on this project: 
Lorenzo Croissant, Niao He, Gabriel Peyr\'e, Anabel Pichardo, and Tina Sikharulidze. Additionally, the authors would 
like to thank Javier Segovia-Aguas, Anders Jonsson, and Vicen\c{c} G\'omez for their advice and support on accessing 
the new UPF GPU cluster ``Correfoc''.
  G.~Neu was supported by the European Research Council (ERC) under the European Union’s Horizon 2020 research 
and innovation programme (Grant agreement No.~950180). Part of this research was performed while A.~M\"uller was 
visiting the Institute for Mathematical and Statistical Innovation (IMSI), which is supported by the National Science 
Foundation (Grant No. DMS-2425650). P.~Moreno-Mu\~noz was supported by ``\textit{La Caixa}'' Banking Foundation through 
the Junior Leader Postdoctoral Fellowship Programme (Grant agreement ID code.~LCF/BQ/PI24/12040025).
\end{ack}

\bibliographystyle{plainnat}
\bibliography{refs,ngbib,allbib}

\clearpage
\appendix

\part*{Appendix}
\addcontentsline{toc}{part}{Appendix} 
\etocsettocdepth{subsection} 
\localtableofcontents

\section{Related work}\label{sec:related-work}\label{app:related-work}
Our work fits into an already rich literature on formulating and solving generative modeling problems using tools of 
optimal transport. We review the most relevant previous work here, and refer the interested 
reader to one of the numerous recent surveys for a more complete treatment (some good examples being 
\citep{LHH+24,LSKME25}).

The idea of formulating generative modeling as an iterative process has been brought to the forefront of 
machine-learning research by works such as \citet{RM15}, \citet{SDWMG16} and \citet{CRBD18}.
The ensuing literature came to be dominated by continuous-time formulations, whereby generation is regarded as a 
continuous-time process where initial samples drawn from a source distribution are pushed forward along solution paths 
of ordinary differential equations (ODEs) or stochastic differential equations (SDEs). Common ideas for training such 
continuous-time models include learning time-reversals of processes that gradually transform clean data into noise 
\citep{SDWMG16,HJA20,SSDKKEP20,SME20,DN21,ND21,KAAL22,YZS+23}  and 
fitting neural networks to fixed interpolations of source and 
target data sets \citep{LCBHNL23,Liu22,LGL23,TFM+23a,AVE23,NBSM23,LHH+24,ABVE25}. These approaches make 
use of tools of continuous-time mathematics (change-of-variable formulas, Fokker--Planck equations, etc.) and 
probabilistic modeling (likelihood-based objectives, score-based denoising, etc.), which are all very distinct from the 
tools we employ in the present paper. We provide a brief comparison with the most relevant past works below.

\paragraph{Flow-based models.} Among flow-based models, a notable work we wish to highlight is the 
conditional flow matching (CFM) framework of \citet{TFM+23a}. At a high level, CFM aims at learning a transport map by 
fitting a vector field to minibatch estimates of the OT plan connecting the source and target distributions. While the 
idea of using minibatch estimates of the  OT map also appears in our algorithm, there is a major difference in how the 
two methods make use of it: while CFM tries to merely lift the (possibly heavily biased) minibatch OT map to the entire 
state space, our method only uses it as initialization for the primal updates. During training, our algorithm keeps 
moving toward the true OT map over the entire data set (up to optimization and approximation errors), without suffering 
the consequences of the possible bias due to using small minibatches. Indeed, recall that an ideal implementation of 
our method will update the value-function parameters according to unbiased gradient estimates of the dual function, 
thus converging towards its maximizer---irrespective of the minibatch size or the initialization. 

Another popular method within this family that we would like to mention is the rectified flow algorithm of 
\citet{Liu22,LGL23}. Rectified flow models aim to approximate OT maps by iteratively refining a sequence 
of flow models: first, fit a flow model to i.i.d.~samples from the true source and target distributions, then use the 
model to generate fresh, \emph{coupled} samples from the learned model, fit a new model to this, and iterate. Under 
ideal conditions, this method can be shown to converge to an OT map, the error accumulation resulting from training 
each subsequent model on the outputs of the previous model (without using any of the original samples) can be 
significant, which limits the applicability of this method for computing OT plans. A version of this method called 
$c$-rectified flow \citep{Liu22} aims at establishing a more direct link with the dynamic OT solution path 
by parametrizing the learned vector field as a gradient map of a scalar function (which is the form suggested by the 
Benamou--Brenier theorem). While this idea is attractive, it makes for a rather impractical method as it requires 
evaluating second derivatives of neural networks during training.  In contrast, our method natively takes into account 
the structure of the optimal policy and optimizes estimates of the value function (a.k.a., the potential function). 

\paragraph{Schr\"odinger bridges.} Another important class of generative modeling tools make use of the concept of 
\emph{Schr\"odinger bridges} (SB), defined as stochastic processes $(X_t)_{t\in[0,1]}$ with marginals $\Law(X_0) = 
\src$, $\Law(X_1) = \tgt$ and minimal relative entropy to a fixed reference process. Schr\"odinger bridges are closely 
linked with an entropy-regularized version of the Kantorovich OT problem (often called ``entropic OT'', EOT), in a way 
analogous to the relation between dynamic and static OT (which we discussed in some detail in 
Section~\ref{sec:prelim}). 
Taking advantage of the many beautiful mathematical results about entropic OT, several successful generative models 
have been developed for approximating Schr\"odinger bridges 
\citep{dBTHD21,SdBCD23,TMF+23b,LVHTNA23,GKBK24,GSKBK24,dBKMD24}. While, like dynamic OT solution 
paths, Schr\"odinger bridges can also be succinctly parametrized by a single scalar-valued potential function (analogous 
to our value function), we are not aware of any method that utilizes this fact as straightforwardly as our method does. 
Indeed, SB-based methods typically feature multiple neural networks for parametrizing the control drifts (often 
separately in both time directions) and the dual variables for enforcing the source and target constraints, and use a 
variety of ideas from score-based generative modeling and flow matching during training. While these methods achieve 
outstanding empirical performance for a range of generative modeling tasks, the connection between their training 
methodologies and their core problem formulation is arguably often quite loose---making this a fruitful area of 
research with several interesting open problems.

\section{Discrete-time dynamic optimal transport} \label{app:lp-properties}

In this section, we provide a complete treatment of the discrete-time dynamic OT problem that our method is based on. 
To keep this section self-contained and easy to read, we will repeat some content already covered in the main text. 

\subsection{Definitions of OT problems}

\paragraph{Definitions.} Let $\src, \tgt$ be probability measures on $\R^d$. For probability measures $\gamma$ over 
$\R^d\times\R^d$, consider the space of all (``marginal-preserving'') \emph{couplings} $$\Gamma(\src, \tgt) = \{\gamma 
\in \Sc \mid \bwd_{\#}\gamma = \src \wedge \fwd_{\#}\gamma = \tgt \}.$$ Consider the \textit{Kantorovich problem} 
\begin{align}
    \Wc^2_2(\src,\tgt) = \min_{\gamma \in \Gamma(\src,\tgt)} \frac 12 \int \norm{x-y}^2 \dd\gamma(x,y). 
\label{eq:kantorovich-reprise}
\end{align}

If $\src, \tgt$ have densities $\dSrc, \dTgt$, this problem is known to admit the unique minimizer $\gamma^\star = 
(\text{id}, M^\star)_{\#} \src$, where $M^\star$ is the unique minimizer of the \textit{Monge problem}
\begin{align}
    \min_{\substack{M \colon \R^d \to \R^d \colon  \\ M_{\#}\src = \tgt}} \frac 12 \int \norm{M(x) - x}^2 \dd\src(x) , 
\label{eq:monge-reprise}
\end{align}
and the optimal values of the programs coincide. 

In analogy to the continuous-time dynamic Benamou--Brenier formulation of OT \citep{benamou2000computational}, we 
define the \textit{discrete time dynamic OT problem} as follows. We consider a discrete-time stochastic control 
problem, where a sequence of states are generated recursively over a sequence of discrete \emph{time steps} 
$h=0,1,\dots,H$ (where $H$ is called the \emph{horizon}). 
The initial state $X_0$ is drawn from the source distribution $\src$, and each consecutive state 
$X_1,X_2,\dots,X_{H+1}$ is selected according to a \emph{control policy} $\pi$ that maps each state $X_h$ to the next 
state as $X_{h+1} = \pi_h(X_h)$. The control cost associated with each state transition is $c(X_h,X_{h+1}) = 
\frac{H+1}{2} \norm{X_h - X_{h+1}}^2$. In general, we will consider non-stationary policies represented as the 
collection of 
deterministic maps $\pi = \ev{\pi_h:\real^d \ra \real^d}_{h=0}^H$.
A policy is called \emph{feasible} if the law of the final state $X_{H+1}$ generated by the above process matches the 
target distribution ($\Law(X_{H+1}) = \tgt$), and we denote the set of feasible control policies by $\Pi(\src,\tgt)$. 
Letting $\EEpi{\cdot}$ stand for the expectation operator under the joint distribution of state sequences generated by 
$\pi$, we then set up our objective as
\begin{align}
    \Wc^2_{\text{dyn-M},2}(\src,\tgt) = \min_{\pi \in \Pi(\src,\tgt)} \frac{H+1}{2} \sum_{h=0}^{H} 
\EEpi{\norm{X_{h+1} - X_h}^2} 
\label{eq:DDOT-reprise}.
\end{align}
We reformulate this optimization problem using the concept of \emph{occupancy measures}, borrowed from the literature 
on optimal control in discrete-time Markov decision processes (see, e.g., Chapter 6.9 in \citet{Puterman1994}).
The occupancy measure associated with a control policy $\pi$ is the collection of joint distributions $\mupi = 
\ev{\mupi_h \in \Delta_{\real^d \times \real^d}}_{h=0}^{H}$ over the state space, with $\mupi_h = \PPpi{(X_h,X_{h+1}) 
\in \cdot}$ corresponding to the distribution of the state pair $X_h,X_{h+1}$ generated by policy $\pi$ under the 
discrete-time control process described above. As 
is well-known in the context of Markov decision processes, the set of all valid occupancy measures that can be induced 
by control policies is uniquely characterized by a set of linear constraints. 
Adapting these results to our needs, we formulate our optimal control problem as the following 
linear program (LP):
\begin{equation}
\begin{split}
    \Wc^2_{\text{dyn-K},2}(\src,\tgt) = \min_{\mu \in \Sc^{H+1}} \frac{H+1}{2}&\sum_{h=0}^{H} \int 
\|x-y\|^2 
\dd\mu_h(x,y) \\ 
    \text{s.t.} \quad\quad \src &= \bwd_{\#} \mu_0\\
    \qquad\;\fwd_{\#} \mu_h &= \bwd_{\#} \mu_{h+1} \qquad (\forall h \in \{0,\dots,H-1\})\\
    \qquad\,\fwd_{\#} \mu_H &= \tgt 
    \label{eq:primal-lp-reprise}
\end{split}
\tag{Primal LP}
\end{equation}
To distinguish between the two formulations, we will refer to the first as the \emph{Monge-type dynamic OT} problem, 
and the second one as the \emph{Kantorovich-type dynamic OT problem}, which naming convention is justified by noticing 
that the first problem reduces to the Monge-type OT problem~\eqref{eq:monge-reprise} and the second to the 
Kantorovich-type problem~\eqref{eq:kantorovich-reprise}. Throughout the appendix, we will refer to this LP as the 
\emph{primal LP}, which otherwise coincides with \eqref{eq:primal-lp} shown in the main text.

\subsection{Basic properties of the OT problems} We first establish that the Kantorovich 
problem~\eqref{eq:kantorovich-reprise} is equivalent to the Kantorovich-type dynamic OT problem 
in~\eqref{eq:primal-lp-reprise}:
\begin{theorem} \label{thm:static-dyn-kont}
    We have 
    \begin{align*}
        \Wc_{\text{dyn-K},2}^2(\src,\tgt) = \Wc_2^2(\src,\tgt).
    \end{align*}
    Moreover, if $\gamma^\star$ is optimal for \cref{eq:kantorovich}, let $(X_0,X_{H+1})\sim \gamma^\star$ and $X_h = 
X_0 + \frac{h}{H+1} (X_{H+1}-X_0)$. Then, the laws $\mu_h^\star$ of $(X_h,X_{h+1})$ form an optimal solution 
$\mu^\star$ for~\eqref{eq:primal-lp-reprise}. 
\end{theorem}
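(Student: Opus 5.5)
We prove the two inequalities $\Wc^2_{\text{dyn-K},2}\le \Wc^2_2$ and $\Wc^2_{\text{dyn-K},2}\ge \Wc^2_2$ separately. The first comes from the explicit linear-interpolation construction, which also settles the optimality claim once equality is shown.

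\emph{Upper bound (feasibility and cost of the interpolation).} Take $\gamma^\star$ optimal for \eqref{eq:kantorovich-reprise}, let $(X_0,X_{H+1})\sim\gamma^\star$, and set $X_h = X_0+\frac{h}{H+1}(X_{H+1}-X_0)$ for $h=0,\dots,H+1$. Let $\mu^\star_h=\Law(X_h,X_{h+1})$. The constraints of \eqref{eq:primal-lp-reprise} hold by construction:
\begin{itemize}[leftmargin=*,itemsep=0pt]
\item $\bwd_\#\mu^\star_0=\Law(X_0)=\src$;
\item $\fwd_\#\mu^\star_h=\Law(X_{h+1})=\bwd_\#\mu^\star_{h+1}$ for $h=0,\dots,H-1$;
\item $\fwd_\#\mu^\star_H=\Law(X_{H+1})=\tgt$.
\end{itemize}
Since $X_{h+1}-X_h=\frac{1}{H+1}(X_{H+1}-X_0)$, the objective equals
\[
\frac{H+1}{2}\sum_{h=0}^H \frac{1}{(H+1)^2}\EE{\norm{X_{H+1}-X_0}^2}=\frac12\EE{\norm{X_{H+1}-X_0}^2}=\Wc_2^2(\src,\tgt).
\]

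\emph{Lower bound (gluing and Cauchy--Schwarz).} This is the main step. Let $\mu$ be any feasible point of \eqref{eq:primal-lp-reprise}. The flow constraints say that consecutive $\mu_h,\mu_{h+1}$ share a marginal on $\R^d$. By the gluing lemma (disintegrate each $\mu_{h+1}$ with respect to its first marginal and chain the conditional kernels, which is possible on the Polish space $\R^d$), there is a random sequence $(X_0,\dots,X_{H+1})$ with $\Law(X_h,X_{h+1})=\mu_h$ for every $h$. In particular $\Law(X_0)=\src$ and $\Law(X_{H+1})=\tgt$, so $\gamma:=\Law(X_0,X_{H+1})\in\Gamma(\src,\tgt)$.

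By the triangle inequality and Cauchy--Schwarz, pointwise
\[
\norm{X_{H+1}-X_0}^2\le\Bigl(\sum_{h=0}^H\norm{X_{h+1}-X_h}\Bigr)^2\le (H+1)\sum_{h=0}^H\norm{X_{h+1}-X_h}^2.
\]
Taking expectations and multiplying by $\frac12$ gives
\[
\frac{H+1}{2}\sum_{h=0}^H\int\norm{x-y}^2\dd\mu_h\;\ge\;\frac12\int\norm{x-y}^2\dd\gamma\;\ge\;\Wc_2^2(\src,\tgt).
\]

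\emph{Conclusion.} The two bounds give $\Wc^2_{\text{dyn-K},2}=\Wc^2_2$. Since $\mu^\star$ is feasible and attains this common value, it is optimal for \eqref{eq:primal-lp-reprise}. The primal minimum is therefore attained, which justifies writing $\min$.

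The only technical care needed is in the gluing step: the measurability of the disintegrations and the assumption that the $\mu_h$ are probability measures. If finiteness of second moments is a concern, one can restrict to costs that are finite, since otherwise the lower bound holds trivially.
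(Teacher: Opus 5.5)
Your proof is correct and follows essentially the same route as the paper. For the lower bound you glue the $\mu_h$ into a single path $(X_0,\dots,X_{H+1})$; for the matching upper bound and the optimality of $\mu^\star$ you use the linear interpolation of $\gamma^\star$. The only cosmetic difference is that you bound $\norm{X_{H+1}-X_0}^2$ via the triangle inequality plus Cauchy--Schwarz, whereas the paper applies Jensen's inequality directly to the average of the vector increments; the two steps are interchangeable.
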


\begin{proof}
    We start by showing $\Wc_{\text{dyn-K},2}^2(\src,\tgt) = \Wc_2^2(\src,\tgt)$. To this end, let $\mu \in \Sc^{H+1}$ 
be feasible for~\eqref{eq:primal-lp-reprise}. By inductively applying the gluing lemma (e.g., \citep[Lemma 
5.5]{thorpe2018introduction}), we can pick a measure $\bar{\mu}$ over sequences in $\X^{H+2}$ and random variables $Y=(Y_0, 
\dots, Y_{H+1})\sim \bar{\mu}$ such that for all $h\in\{0,\dots,H\}$, $(Y_h,Y_{h+1})\sim \mu_h$.
    Now the dynamic objective for this choice is 
    \begin{align*}
        \frac{H+1}{2}\sum_{h=0}^H \int \|x-y\|^2 \dd\mu_h(x,y) =& \frac{(H+1)^2}{2}\E\bs{\frac{1}{H+1}\sum_{h=0}^H 
\norm{Y_h-Y_{h+1}}^2}\\
        \geq& \frac{(H+1)^2}{2}\E\bs{\norm{ \frac{1}{H+1}\sum_{h=0}^H (Y_h-Y_{h+1}) }^2} \tag{by Jensen's}\\
        =& \frac 12 \E\bs{\norm{ Y_0 -Y_{H+1}}^2}\\
        =& \frac 12 \int \norm{x-y}^2 d\gamma(x,y),
    \end{align*}
    where in the last step $\gamma$ is the law of $(Y_0,Y_{H+1})$ (formally obtained by marginalization of $\bar{\mu}$). 
It satisfies $\bwd_{\#} \gamma = \src$ and $\fwd_{\#} \gamma = \tgt$, so the above implies 
    \begin{align*}
         \frac{H+1}{2}\sum_{h=0}^H \int \|x-y\|^2 \dd\mu_h(x,y) \geq \Wc_2^2(\src,\tgt).
    \end{align*}
    Thus, we obtain that $\Wc_{\text{dyn-K},2}^2(\src,\tgt) \geq \Wc_2^2(\src,\tgt)$ after taking the infimum over all 
feasible $\mu$ on the left-hand side.

    Next, we show $\Wc_{\text{dyn-K},2}^2(\src,\tgt) \leq \Wc_2^2(\src,\tgt)$ and the optimality of $\mu^\star$. 
Let $(X_0,X_{H+1})\sim \gamma^\star$ and take $X_h = X_0 + \frac{h}{H+1}(X_{H+1}-X_0)$ as in the theorem statement, 
with the associated marginal distributions denoted by $\mu^\star_h=\text{law}(X_h,X_{h+1})$ . It is immediate that 
$\mu^\star$ respects all of the constraints in~\eqref{eq:primal-lp-reprise}. Moreover, we have
    \begin{align*}
        \frac{H+1}{2}\sum_{h=0}^H \int \|x-y\|^2 \dd\mu^\star_h(x,y) =& \E\bs{\frac{H+1}{2}\sum_{h=0}^H 
\norm{X_h-X_{h+1}}^2}\\
        =& \E\bs{\frac{H+1}{2}\sum_{h=0}^H \norm{\frac{1}{H+1}(X_{0}-X_{H+1})}^2}\\
        =& \frac{1}{2}\E\bs{\norm{(X_{0}-X_{H+1})}^2}\\
        =& \frac 12 \int \norm{x-y}^2 d\gamma^\star(x,y)\\
        =& \Wc_2^2(\src,\tgt).
    \end{align*} 
    This proves the claimed inequality. Together with the reverse inequality we have proved first, we can also see 
that in fact $\Wc_{\text{dyn-K},2}^2(\src,\tgt)$ is equal to the value obtained by $\mustar$, which implies that it is 
an optimal solution for~\eqref{eq:primal-lp-reprise}.
\end{proof}

We remark that this proof is very similar to the original proof of the Benamou-Brenier formula but without the need of 
change of measure arguments, with the discrete-time flow constraints replacing the continuity equation. We discuss 
this relationship in more detail later in this section.

We next establish the connection to the Monge-type dynamic OT problem~\eqref{eq:DDOT-reprise}. More concretely, we 
can now show the existence of deterministic optimal policies under suitable conditions by leveraging the following 
standard result from static OT theory.
\begin{lemma}[\citet{San15}, Lemma 4.23] \label{lem:no-cross}
    Let $M^\star$ be an optimal solution to the static Monge problem in~\eqref{eq:monge} and $\gamma=(\text{id}, 
M^\star)_{\#}\src$. Let $t\in(0,1)$ and $y\in\R^d$. Then, there is at most one pair $(x,z)\in\supp(\gamma)$ such that 
$y=(1-t)x + t z$. 
\end{lemma}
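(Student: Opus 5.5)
The plan is to use only the \emph{cyclical monotonicity} of the support of an optimal coupling and then do a short algebraic computation. Optimality of $M^\star$ enters only through the fact that $\gamma=(\text{id},M^\star)_{\#}\src$ is optimal for the Kantorovich problem~\eqref{eq:kantorovich}, as recalled above. For the quadratic cost, $c$-cyclical monotonicity of $\supp(\gamma)$ restricted to pairs reads
\[
\norm{x-z}^2+\norm{x'-z'}^2 \le \norm{x-z'}^2+\norm{x'-z}^2 \qquad \text{for all } (x,z),(x',z')\in\supp(\gamma).
\]
Expanding the squares, this is equivalent to $\langle x-x',\, z-z'\rangle \ge 0$.

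\textbf{Step 1 (monotonicity on the support).} I would cite the standard fact that the support of an optimal plan for a continuous cost is $c$-cyclically monotone (e.g.\ \citet{San15}, Theorem 1.38). If a self-contained argument is wanted, argue by contradiction. Suppose two support points $(x,z),(x',z')$ violate the inequality strictly. By continuity of the cost, the strict violation persists on small balls $B\ni(x,z)$ and $B'\ni(x',z')$, and both balls have positive $\gamma$-mass because they are centred at support points. Take a small amount $\epsilon>0$ of mass from each ball, normalised to equal mass via product measures. Replace it by the ``swapped'' plan, which sends the first marginal of the $B$-piece to the second marginal of the $B'$-piece and vice versa. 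The marginals of $\gamma$ are unchanged and the cost strictly decreases, contradicting optimality. The main technical point I expect is exactly this: the inequality must hold for \emph{every} pair in the closed support, not merely $\gamma$-a.e., and that is what the neighbourhood-swapping argument provides.

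\textbf{Step 2 (no crossing).} Suppose $(x,z),(x',z')\in\supp(\gamma)$ with $y=(1-t)x+tz=(1-t)x'+tz'$ for some $t\in(0,1)$. Subtracting the two expressions gives $(1-t)(x-x')=-t(z-z')$, so $z-z'=-\frac{1-t}{t}(x-x')$. Substituting this into the monotonicity inequality from Step 1 yields
\[
0\le \langle x-x', z-z'\rangle = -\tfrac{1-t}{t}\norm{x-x'}^2 .
\]
Since $t\in(0,1)$, the factor $\frac{1-t}{t}$ is strictly positive, so $x=x'$. The relation $z-z'=-\frac{1-t}{t}(x-x')$ then gives $z=z'$. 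Hence at most one pair in $\supp(\gamma)$ passes through $y$ at time $t$, which is the claim.
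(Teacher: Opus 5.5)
Your proof is correct and is essentially the argument behind the cited result. The paper gives no proof of its own and defers to \citet{San15}, whose proof likewise combines cyclical monotonicity of the support of an optimal plan with a strict-convexity computation; your inequality $\langle x-x', z-z'\rangle \ge 0$, followed by the substitution $z-z' = -\frac{1-t}{t}(x-x')$, is the quadratic special case of that computation. The only input you need from the paper's setting is that $\gamma$ is optimal for \eqref{eq:kantorovich}, which holds there because $\src$ is assumed to have a density. Your point that the monotonicity inequality must hold on the whole closed support, not just $\gamma$-a.e., is the right technical point to flag.
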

This fact (originally due to \citet{McC97}) implies that the optimal interpolants of the static Monge problem 
do not cross, which leads to the following result.
\begin{theorem}\label{thm:det-policies}
    Suppose that the static Monge problem~\eqref{eq:monge} admits a solution $M^\star$. Then, the dynamic Monge-type 
problem in~\eqref{eq:DDOT-reprise} admits a optimal solution $\pi^\star$.
    Moreover, if $M^\star$ is unique on $\supp(\src)$, then each $\pi^\star_h$ is uniquely defined on $\supp 
((1-h/(H+1))\text{id} + h/(H+1) M^\star)_{\#}\src$ with an expression given by
    \begin{align*}
        \pi^\star_h(x_h) =  s_h(x_h) + \frac{h+1}{H+1}\pa{M^\star(s_h(x_h))-s_h(x_h)},
    \end{align*}
    where $s_h(x_h) = \bs{(1-\frac{h}{H+1})\text{id} + \frac{h}{H+1} M^\star}^{-1}(x_h)$.
\end{theorem}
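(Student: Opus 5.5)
Define the candidate policy explicitly from $M^\star$ along the displacement interpolation. Write $T_h := (1-\frac{h}{H+1})\text{id} + \frac{h}{H+1}M^\star$ for $h=0,\dots,H+1$, so $T_0=\text{id}$ and $T_{H+1}=M^\star$. Then set
\[
\pi^\star_h := T_{h+1}\circ T_h^{-1}\quad\text{on } \supp\bigl((T_h)_{\#}\src\bigr).
\]
This coincides with the displayed formula, since $T_{h+1}(s) = s + \frac{h+1}{H+1}(M^\star(s)-s)$. Off this support, $\pi^\star_h$ can be extended arbitrarily (for example by the identity).

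The first key step is to show that $T_h$ is injective on $\supp(\src)$, so that $s_h=T_h^{-1}$ is well defined. For $h=0$ this is trivial, and $h=0$ is the only case in which $t=\frac{h}{H+1}$ is not in $(0,1)$, since $h\le H$. For $1\le h\le H$ we apply Lemma~\ref{lem:no-cross} with $t=h/(H+1)\in(0,1)$. If $T_h(x)=T_h(x')=y$, then both $(x,M^\star(x))$ and $(x',M^\star(x'))$ lie in $\supp(\gamma)$ and interpolate to $y$, so $x=x'$.

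There is a technical subtlety here. The lemma speaks of $\supp(\gamma)$, and points of the form $(x,M^\star(x))$ lie in it only for $\src$-a.e.\ $x$ (or for all $x$ if $M^\star$ is continuous). I would therefore define $\pi^\star_h$ on $T_h(A)$, where $A\subseteq\supp(\src)$ is a full-measure set on which $(x,M^\star(x))\in\supp(\gamma)$. All statements then hold $(T_h)_{\#}\src$-almost everywhere, matching the paper's a.e.\ convention. Measurability of $s_h$ on $T_h(A)$ would follow from Lusin/Souslin-type arguments, or can be taken as implicit at the paper's level of rigor. This measure-theoretic bookkeeping is the main obstacle; everything else is algebraic.

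The second step is to check feasibility and optimality. By induction, $\pi^\star_{h-1}\circ\cdots\circ\pi^\star_0 = T_h$ on $A$. Hence $X_h = T_h(X_0)$ with $X_0\sim\src$, and $X_{H+1}=M^\star(X_0)\sim\tgt$, so $\pi^\star\in\Pi(\src,\tgt)$. The occupancy measure of $\pi^\star$ is exactly the $\mu^\star$ of Theorem~\ref{thm:static-dyn-kont} built from $\gamma^\star=(\text{id},M^\star)_{\#}\src$. Its cost is therefore $\Wc_2^2(\src,\tgt)$; the direct computation is $\frac{H+1}{2}\sum_h \frac{1}{(H+1)^2}\E\|M^\star(X_0)-X_0\|^2$. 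Any feasible policy induces a feasible occupancy measure for \eqref{eq:primal-lp-reprise}, so its cost is at least $\Wc_{\text{dyn-K},2}^2=\Wc_2^2$ by Theorem~\ref{thm:static-dyn-kont}. Hence $\pi^\star$ is optimal.

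The third step is uniqueness when $M^\star$ is unique. Let $\pi$ be any optimal policy. Its occupancy measure achieves equality in the Jensen step of the proof of Theorem~\ref{thm:static-dyn-kont}. This forces the increments $X_{h+1}-X_h$ to be a.s.\ all equal to $\frac{1}{H+1}(X_{H+1}-X_0)$. It also forces the law of $(X_0,X_{H+1})$ to be optimal for the Kantorovich problem, hence equal to $(\text{id},M^\star)_{\#}\src$ by uniqueness. Consequently $X_{H+1}=M^\star(X_0)$, so $X_h=T_h(X_0)$ and $X_{h+1}=T_{h+1}(X_0)$ a.s. Because $\pi_h(X_h)=X_{h+1}$, we obtain $\pi_h(T_h(x))=T_{h+1}(x)$ for $\src$-a.e.\ $x$. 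Using injectivity from the first step, this gives $\pi_h=\pi^\star_h$ $(T_h)_{\#}\src$-a.e.
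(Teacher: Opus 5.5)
Your proposal is correct and follows essentially the same route as the paper. It uses injectivity of the interpolation map via Lemma~\ref{lem:no-cross}, an explicit cost computation plus a Jensen lower bound for optimality, and equality in Jensen plus uniqueness of $M^\star$ for the uniqueness claim. The differences are minor: you obtain the lower bound by passing to the occupancy measure and citing Theorem~\ref{thm:static-dyn-kont}, rather than redoing Jensen along the trajectory and using Monge-feasibility of $\pi_H\circ\cdots\circ\pi_0$. You also handle the $\supp(\gamma)$ versus almost-everywhere bookkeeping more carefully than the paper, which applies the non-crossing lemma to every $x_0\in\supp(\src)$. One small point in your uniqueness step: the endpoint coupling is induced by the map $\pi_H\circ\cdots\circ\pi_0$, so the assumed uniqueness of the Monge map $M^\star$ already suffices, and you do not need uniqueness of the Kantorovich plan.
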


\begin{proof} We prove the two claims regarding existence and uniqueness of $\pistar$ below.

    \emph{Existence.} Consider $\pi^\star$ as in the theorem statement and let $x_0\in\supp(\src)$. Denote the 
interpolation map by $f_h(x)=(1-h/(H+1))\text{id} + h/(H+1)M^\star$. Note that $s_0(x_0)=x_0$ and thus 
$x_1=\pi^\star_0(x_0) = x_0 + 1/(H+1)(M^\star(x_0)-x_0) = f_1(x_0) \in \supp((f_1)_{\#}\src)$. Now, let us consider a 
fixed $h$ and let $x_h = f_h(x_0)$. Then by \cref{lem:no-cross}, $s_h$ is well-defined and $s_h(x_h) = x_0$. Hence, 
$\pi_h^\star(x_h) = x_0 + (h+1)/(H+1)(M^\star(x_0)-x_0) = f_{h+1}(x_0)$. This shows inductively that for 
$x_0\in\supp(\src)$, the sequence given via $x_{h+1}=\pi^\star_h(x_h)$ is well-defined and in fact satisfies 
$x_h=f_h(x_0)$ for all $h$. It follows that $\pi^\star$ is feasible for~\eqref{eq:DDOT-reprise} since 
$\pi^\star_H\circ\cdots\circ \pi^\star_0 = M^\star$ on $\supp(\src)$. Moreover, the objective is 
\begin{align*}
        \EEs{\frac{H+1}{2} \sum_{h=0}^H \norm{X_{h+1}-X_h}^2}{\pistar} 
=& \E\bs{\frac{H+1}{2}\sum_{h=0}^H \norm{\frac{1}{H+1} (M^\star(X_0)-X_0)}^2}\\
        =& \frac 12 \E\bs{\norm{M^\star(X_0)-X_0}^2}\\
        =& \Wc_2^2(\src,\tgt).
    \end{align*}
    where we used that $M^\star$ realizes $\Wc_2^2(\src,\tgt)$. On the other hand, introducing the shorthand 
notation $\pibar_h = \pi_h \circ \dots \circ \pi_0$, we can see that any feasible policy $\pi$ must satisfy  
    \begin{align*}
        &\EEpi{\frac{H+1}{2}\sum_{h=0}^H \norm{X_{h+1}-X_h}^2} \\
        &\qquad\qquad= \frac{(H+1)^2}{2}\EE{\frac{1}{H+1}\sum_{h=0}^H \norm{\pibar_h(X_0)-\pibar_{h-1}(x_0)}^2}\\
        &\qquad\qquad\geq 
\frac{(H+1)^2}{2}\EE{\norm{\frac{1}{H+1}\sum_{h=0}^H(\pibar_h(X_0)-\pibar_{h-1}(X_0))}^2} \tag{by 
Jensen's inequality}\\
        &\qquad\qquad= \frac 12 \EE{\norm{X_0 - \pi_H\circ\cdots \pi_0 (X_0)}^2}\\
        &\qquad\qquad\ge \Wc_2^2(\src,\tgt),
    \end{align*}
    where in the final step we used that $\pi_H\circ\cdots\circ \pi_0$ is feasible for~\eqref{eq:monge} since $\pi$ is 
feasible for~\eqref{eq:primal-lp-reprise} in the final step. This implies that $\pi^\star$ is indeed optimal 
for~\eqref{eq:primal-lp-reprise}.

    \emph{Uniqueness.} If $\pi$ is optimal, then (since Jensen's holds point-wise within the expectation), the above 
chain of inequalities shows that for any realization of $x_0$ and the induced sequence $x_{h+1}=\pi_h(x_h)$, we must 
have $x_{h+1}-x_h = x_1-x_0$ for any $h$. Thus we have 
    \begin{align*}
        \EEpi{(H+1)\sum_{h=0}^H \norm{X_{h+1}-X_h}^2}
        =& \EE{\norm{X_0 - \pi_H\circ\cdots \pi_0 (X_0)}^2}.
    \end{align*}
    Since $\pi$ is feasible, this equals $\Wc_2^2(\src,\tgt)$ if and only if $\pi_H\circ\cdots \circ \pi_0$ is optimal 
for the static problem~\eqref{eq:monge}, and thus must satisfy $\pi_H\circ\cdots \pi_0=M^\star$ on $\supp(\src)$. Since 
$x_{h+1}-x_h=x_1-x_0$, it follows that $\pi_h(x_h)=\pi^\star_h(x_h)$ on $\supp((f_h)_{\#}\src)$.
\end{proof}

Putting Theorems~\ref{thm:static-dyn-kont} and~\ref{thm:det-policies} together immediately implies that the Monge- and 
Kantorovich-type dynamic OT problems are equivalent. We state this formally and connect the 
solutions $\mustar$ and $\pistar$ to each other explicitly below.

\begin{corollary} \label{cor:monge-to-kant}
    Suppose that the static Monge problem~\eqref{eq:monge} admits the unique (on $\supp(\src)$) solution $M^\star$. 
Let $\pi^\star$ be the optimal solution of the dynamic Monge-type problem~\eqref{eq:DDOT-reprise}. Then, $\mu^\star$ 
given by $\mu^\star_h = (\pi^\star_{h-1}\circ\cdots\circ \pi^\star_0 ,\, \pi^\star_{h}\circ\cdots\circ 
\pi^\star_0)_{\#} \src$ (with the convention $\pi^\star_{-1}\circ\cdots\circ \pi^\star_0=\text{id}$) is an optimal 
solution for the dynamic Kantorovich-type problem~\eqref{eq:primal-lp-reprise} and has the same value. 
\end{corollary}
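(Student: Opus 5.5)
The plan is to show that the $\mu^\star$ defined in the statement is feasible for \eqref{eq:primal-lp-reprise}, and that its objective value equals that of $\pi^\star$ in \eqref{eq:DDOT-reprise}. We then conclude with \cref{thm:static-dyn-kont} and \cref{thm:det-policies}.

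\emph{Feasibility.} Write $\pibar_h=\pi^\star_h\circ\cdots\circ\pi^\star_0$, with $\pibar_{-1}=\text{id}$, so that $\mu^\star_h=(\pibar_{h-1},\pibar_h)_{\#}\src$.
\begin{itemize}
\item Source constraint: $\bwd_{\#}\mu^\star_0=\text{id}_{\#}\src=\src$.
\item Flow constraints: $\fwd_{\#}\mu^\star_h=(\pibar_h)_{\#}\src=\bwd_{\#}\mu^\star_{h+1}$, because $\fwd\circ(\pibar_{h-1},\pibar_h)=\pibar_h=\bwd\circ(\pibar_h,\pibar_{h+1})$.
\item Target constraint: $\fwd_{\#}\mu^\star_H=(\pibar_H)_{\#}\src$. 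In the proof of \cref{thm:det-policies} it was shown that $\pibar_H=M^\star$ on $\supp(\src)$, so this equals $M^\star_{\#}\src=\tgt$.
\end{itemize}
Thus $\mu^\star$ is exactly the occupancy measure $\mu^{\pi^\star}$ of the optimal policy.

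\emph{Value.} By the change-of-variables formula for pushforwards,
\[
\frac{H+1}{2}\sum_{h}\int\|x-y\|^2\,\dd\mu^\star_h=\frac{H+1}{2}\sum_h\E\bigl[\|\pibar_h(X_0)-\pibar_{h-1}(X_0)\|^2\bigr],
\]
which is precisely the objective of \eqref{eq:DDOT-reprise} at $\pi^\star$. By the existence part of \cref{thm:det-policies}, this equals $\Wc_2^2(\src,\tgt)$. By \cref{thm:static-dyn-kont}, $\Wc_2^2(\src,\tgt)$ is the optimal value of \eqref{eq:primal-lp-reprise}. A feasible point attaining the optimal value is optimal, and the two dynamic problems have the same value.

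\emph{Main subtlety.} The only delicate point is that $\pi^\star$ is defined, and unique, only on the interpolated supports $\supp((f_h)_{\#}\src)$. So the identity $\pibar_H=M^\star$ and the pushforward computations must be read $\src$-almost surely. This is harmless, since the pushforward measures depend on the maps only $\src$-a.e. Uniqueness of $M^\star$ is what allows us to speak of \emph{the} optimal $\pi^\star$. Even without it, the argument goes through for any optimal $\pi^\star$, since \cref{thm:det-policies} shows that any optimal policy attains $\Wc_2^2$.
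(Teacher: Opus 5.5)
Your proposal is correct and follows essentially the same route as the paper: check feasibility of $\mu^\star$, identify its objective with the cost of $\pi^\star$ in \eqref{eq:DDOT-reprise} via the law of $(X_h,X_{h+1})$, and then chain \cref{thm:det-policies} and \cref{thm:static-dyn-kont} to see that this cost is the optimal value of \eqref{eq:primal-lp-reprise}. The one small difference is that the paper gets the target constraint directly from $\pi^\star\in\Pi(\src,\tgt)$, i.e.\ $(\pibar_H)_{\#}\src=\tgt$ by definition, rather than via $\pibar_H=M^\star$; this is simpler, and it is what actually supports your closing remark that uniqueness of $M^\star$ is not needed.
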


\begin{proof}
Let $\mu^\star$ be defined as in the statement. By construction, we 
have $\bwd_{\#}\mu^\star_0=\text{id}_{\#}\src=\src$ and $\fwd_{\#}\mu^\star_h=(\pi^\star_h\circ\cdots\circ 
\pi^\star_0)_{\#}\src=\bwd_{\#}\mu^\star_{h+1}$, as well as $\fwd_{\#}\mu^\star_H = (\pi^\star_H\circ\cdots\circ 
\pi^\star_0)_{\#}\src=\tgt$. Thus,  $\mu^\star$ is feasible for~\eqref{eq:primal-lp-reprise}. 
    
Let $W(\mu^\star)$ be the value of~\eqref{eq:primal-lp-reprise} for this $\mu^\star$. Now, let us consider the set of 
random variables defined recursively as $X_0\sim\src$ and $X_{h+1}=\pi^\star_h(X_{h})$ for each $h$. Then, the pair 
$(X_h,X_{h+1})$ is distributed as $\mu^\star_h$ and hence the value $W(\mu^\star)$ coincides 
with that of~\eqref{eq:primal-lp-reprise} for the optimal $\pi^\star$. Hence, as seen in the proof of 
\cref{thm:det-policies}, $M^\star=\pi^\star_H\circ\cdots\circ \pi^\star_0$ is optimal for the static Monge problem in 
\cref{eq:monge} with value $W(\mu^\star)$. The latter thus also equals the value of the static Kantorovich problem 
in \cref{eq:kantorovich}. In turn, \cref{thm:static-dyn-kont} thus shows that the optimal value of the Kantorovich-style 
dynamic OT problem of~\eqref{eq:primal-lp-reprise} is indeed $W(\mu^\star)$. Hence $\mu^\star$ is optimal 
for~\eqref{eq:primal-lp-reprise}. 
\end{proof}

Altogether, the results in this section prove the claims made in Theorem~\ref{thm:DOT-structure}.

\subsection{Value functions and the Bellman equations} 

In the following, we restrict the problem to the domains $\Mc=\Mc_{\geq 0}(\Omega\times\Omega)$ and $\Cc(\Omega)$ over 
a compact convex set $\Omega\subset\R^d$ rather than all of $\R^d$. That is, we assume that $\supp(\src), \supp(\tgt) \subset 
\Omega$. Then, all previous statements then hold analogously for the optimization problems over these domains, with the addition that the maps $\pi^\star$ in 
\cref{thm:det-policies} are continuous (see \citep[Lemma 5.29]{santambrogio20151}).\footnote{This is needed to have 
measures and continuous functions to be properly paired spaces, and to establish duality.}

We define the \emph{Lagrangian} associated with the constrained optimization problem~\eqref{eq:primal-lp-reprise} as 
\begin{align*}
    \Lc(\mu, V) =& \frac{H+1}{2}\sum_{h=0}^H \int \norm{x-y}^2 \dd\mu_h(x,y) + \int V_0(x) 
\dd(\src-\bwd_{\#}\mu_0)(x)\\ 
    &+ \sum_{h=1}^H \int V_h(x) \dd(\fwd_{\#}\mu_{h-1}-\bwd_{\#}\mu_h)(x) + \int V_{H+1}(x) \dd(\fwd_{\#}\mu_H - 
\tgt)(x)\\
    =& \sum_{h=0}^H \int \br{\frac{H+1}{2}\norm{x-y}^2 + V_{h+1}(y)-V_h(x)} \dd\mu_h(x,y)\\
    &+ \int V_0(x) \dd\src(x) - \int V_{H+1}(x)\dd\tgt(x)
\end{align*}
and consider the (primal) Lagrangian formulation equivalent to the dynamic OT problem:
\begin{align}
    \Wc^2_{\text{dyn},2}(\src,\tgt) =  \inf_{\mu\in\Mc^{H+1}} \sup_{V\in\Cc(\Omega)^{H+2}}~ \Lc(\mu,V). 
\tag{P}\label{eq:primal-lagr}
\end{align}
Note that we relaxed the condition on $\mu$ to unnormalized measures $\mu_h\in\Mc=\Mc_{\geq 
0}(\Omega\times\Omega)$ since the feasibility set of the dynamic Kantorovich problem already ensures that $\mu$ is 
normalized. Indeed, notice that by the first marginal constraint, we have 
$1=\src(\Omega)=[\bwd_{\#}\mu_0](\Omega)=\mu_0(\bwd^{-1}(\Omega))= \mu_0(\Omega\times\Omega)$. Inductively, if 
$1=\mu_h(\Omega\times\Omega)$, using the flow constraint we have 
$1=\mu_h(\Omega\times\Omega)=\mu_h(\fwd^{-1}(\Omega))=[\fwd_{\#}\mu_h](\Omega)=[\bwd_{\#}\mu_{h+1}](\Omega)=\mu_{h+1}
(\bwd^{-1}(\Omega))=\mu_{h+1}(\Omega\times\Omega)$.

We can now consider the \emph{dual} of this problem, defined as
\begin{align}
    \sup_{V\in\Cc(\Omega)^{H+2}} \inf_{\mu\in\Mc^{H+1}}~ \Lc(\mu,V), \tag{D}\label{eq:dual-lagr}
\end{align}
and show that it can be equivalently formulated as the \emph{dual linear program} stated below.

\begin{lemma}\label{eq:dyn-dual}
    The dual problem in \cref{eq:dual-lagr} is equivalent to
    \begin{equation}
    \tag{Dual LP}
    \begin{split}
        \max_{V\in\Cc(\Omega)^{H+2}}& \int V_0(x)d\src(x)-\int V_{H+1}(x)\dd\tgt(x) \\
        \text{s.t.} \qquad& V_h(x) \leq \frac{H+1}{2}\norm{x-y}^2 + V_{h+1}(y) \qquad (\forall h \in {0,\dots,H}~ 
\forall x,y\in\Omega).
\label{eq:dual-lp}
    \end{split}
    \end{equation}
\end{lemma}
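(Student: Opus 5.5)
The plan is to compute the inner infimum $\GG(V)=\inf_{\mu\in\Mc^{H+1}}\Lc(\mu,V)$ in closed form for each fixed $V\in\Cc(\Omega)^{H+2}$. From the second expression for the Lagrangian, $\Lc(\mu,V)$ splits into a term $\int V_0\,\dd\src-\int V_{H+1}\,\dd\tgt$ that does not depend on $\mu$, plus a sum of decoupled terms $\int g_h^V\,\dd\mu_h$. Here we write
\[
g_h^V(x,y)=\tfrac{H+1}{2}\norm{x-y}^2+V_{h+1}(y)-V_h(x).
\]
The minimization over $\mu$ is unconstrained over the cone $\Mc^{H+1}$, so it separates across $h$. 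We therefore show
\[
\inf_{\mu_h\in\Mc_{\ge0}(\Omega\times\Omega)}\int g_h^V\,\dd\mu_h=\begin{cases}0 & \text{if } g_h^V\ge 0 \text{ on } \Omega\times\Omega,\\ -\infty & \text{otherwise.}\end{cases}
\]

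\emph{Case $g_h^V\ge0$ everywhere.} Every nonnegative measure gives a nonnegative integral, and $\mu_h=0$ attains $0$.

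\emph{Case $g_h^V(x_0,y_0)<0$ at some point.} Take $\mu_h=t\,\delta_{(x_0,y_0)}$ with $t\to\infty$; the integral equals $t\,g_h^V(x_0,y_0)\to-\infty$. Dirac masses lie in $\Mc_{\ge0}(\Omega\times\Omega)$, so this is admissible. Since $V$ is continuous, $g_h^V$ is continuous, so the pointwise condition is meaningful and the constraint can be stated for all $x,y\in\Omega$ rather than almost everywhere.

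Summing over $h$ gives $\GG(V)=\int V_0\,\dd\src-\int V_{H+1}\,\dd\tgt$ when $V$ satisfies all constraints $V_h(x)\le\frac{H+1}{2}\norm{x-y}^2+V_{h+1}(y)$, and $\GG(V)=-\infty$ otherwise. Any $-\infty$ value can be ignored in the supremum because the feasible set is nonempty: $V\equiv0$ satisfies every constraint. Hence $\sup_V\GG(V)$ equals the supremum of the linear objective over the feasible set of the Dual LP, which is the claimed equivalence.

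The one delicate point is the word ``max'' in the Dual LP, i.e.\ attainment of the supremum. The equivalence itself concerns only the values, and the argument above establishes that. If attainment is required, I would argue as follows. Replace $V_h$ by its $c$-transform-type envelope $V_h(x)=\min_{y\in\Omega}\big(\frac{H+1}{2}\norm{x-y}^2+V_{h+1}(y)\big)$, working backward from $V_{H+1}$. This keeps the constraints satisfied, can only increase the objective, and makes each $V_h$ uniformly Lipschitz on the compact set $\Omega$. After normalizing by an additive constant, Arzel\`a--Ascoli and continuity of the objective then give a maximizer. I expect this attainment step to be the main obstacle, whereas the value equivalence is routine.
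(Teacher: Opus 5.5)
Your argument is correct and is essentially the paper's: the inner infimum over $\mu$ equals $\int V_0\,\dd\src-\int V_{H+1}\,\dd\tgt$ (attained at $\mu=0$) when every constraint holds and $-\infty$ otherwise. Your Dirac-mass witness $t\,\delta_{(x_0,y_0)}$ is if anything cleaner than the paper's appeal to a violating set of positive Lebesgue measure, since it works pointwise and does not need $\Omega$ to have nonempty interior. The paper does not prove attainment here; if you want it, note that the backward envelope leaves $V_{H+1}$ untouched, so you must also replace it by $y\mapsto\max_{x\in\Omega}\bigl(V_H(x)-\frac{H+1}{2}\norm{x-y}^2\bigr)$ to get equicontinuity of the whole tuple before applying Arzel\`a--Ascoli.
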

This can be seen to be analogous to the classic dual LP for optimal control in Markov decision processes, with the 
difference that 
the objective involves the target measure as a new element (rather than merely the value $V_0$ averaged over an initial 
state distribution). The derivation of the dual LP is a special case of general conic LPs \citep{shapiro2001duality}.

\begin{proof}[Proof of \cref{eq:dyn-dual}]
    Writing the Lagrangian in its ``adjoint'' form, we have 
    \begin{align*}
        \Lc(\mu, V) =& \sum_{h=0}^H \int \br{\frac{H+1}{2}\norm{x-y}^2 + V_{h+1}(y)-V_h(x)} \dd\mu_h(x,y)\\
        &+ \int V_0(x) \dd\src(x) - \int V_{H+1}(x)\dd\tgt(x).
    \end{align*}
    Clearly, we have $\inf_{\mu\in \Mc^{H+1}} \Lc(\mu,V) = -\infty$ whenever there is $h$ and a set 
$A\subset\Omega\times\Omega$ of strictly positive Lebesgue measure such that $V_h(x) > \frac{H+1}{2}\norm{x-y}^2 + 
V_{h+1}(y)$ for all $(x,y)\in A$. On the other hand, if this is not the case, then $\Lc(\mu,V) \geq \int V_0(x) 
\dd\src(x) - \int V_{H+1}(x)\dd\tgt(x)$ for all $\mu\in\Mc^{H+1}$, with equality holding for instance when all $\mu_h = 
0$. Thus $\inf_{\mu\in\Mc^{H+1}} \Lc(\mu,V) = \int V_0(x) \dd\src(x) - \int V_{H+1}(x)\dd\tgt(x)$. Hence $V$ is optimal 
for the dual problem~\eqref{eq:dual-lagr} if and only if it solves the LP in the statement.
\end{proof}

Recall (e.g., from \citep{peyre2025optimal}) the \emph{dual of the static OT problem} in \cref{eq:kantorovich} is given by 
\begin{equation}
 \begin{split}
    \max_{W_0,W_1\in\Cc(\Omega)}& \int W_0(x)\dd\src(x) - \int W_1(x)\dd\tgt(x) \label{eq:kantorovich-dual}\\
    \text{s.t.} \qquad& W_0(x) \leq \frac 12 \norm{x-y}^2 + W_1(y) \qquad (\forall x,y\in\R^d).
 \end{split}
\end{equation}
This is sometimes equivalently written as by directly setting $W_1=-(W_0)^\star$ to be the negative 
$\norm{\cdot}^2$-transform of $-W_0$ and optimizing only over $W_0$. We now show that the dual of the dynamic OT problem 
is equivalent to the dual of the static problem, and then use this to establish strong duality.

\begin{theorem}[Equivalence to the Static Dual] \label{thm:dual-stat-dyn}
    Let $V^\star$ be optimal for the dynamic dual problem~\eqref{eq:dual-lp}. Then, $W_0 = V_0^\star$ and $W_1 = 
V_{H+1}^\star$ form an optimal solution for the static dual problem~\eqref{eq:kantorovich-dual} and in particular the 
optimal values coincide.
\end{theorem}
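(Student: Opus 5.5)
We prove that the dynamic dual and the static dual have the same optimal value, and that the two endpoint potentials of a dynamic optimum are feasible for the static dual. Both parts follow from two maps between feasible sets that preserve the objective.

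\emph{Step 1 (dynamic $\Rightarrow$ static feasibility).} Let $V$ be feasible for \eqref{eq:dual-lp}. Fix $x,y\in\Omega$ and take the equally spaced points $z_h = x + \frac{h}{H+1}(y-x)$, $h=0,\dots,H+1$. These lie in $\Omega$ by convexity. Chaining the constraints $V_h(z_h)\le \frac{H+1}{2}\norm{z_h-z_{h+1}}^2 + V_{h+1}(z_{h+1})$ and telescoping gives
\[
V_0(x)\le \sum_{h=0}^H \frac{H+1}{2}\cdot\frac{\norm{y-x}^2}{(H+1)^2} + V_{H+1}(y) = \frac12\norm{x-y}^2 + V_{H+1}(y).
\]
So $(W_0,W_1)=(V_0,V_{H+1})$ is feasible for \eqref{eq:kantorovich-dual}. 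Its static objective equals the dynamic objective of $V$, hence the static optimum is at least the dynamic optimum.

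\emph{Step 2 (static $\Rightarrow$ dynamic).} Let $(W_0,W_1)$ be feasible for the static dual. Define $V_{H+1}=W_1$ and, backwards,
\[
V_h(x)=\min_{y\in\Omega}\Big(\tfrac{H+1}{2}\norm{x-y}^2+V_{h+1}(y)\Big).
\]
Each $V_h$ is continuous: it is an inf-convolution on a compact set with a continuous function, and the squared norm is uniformly continuous on $\Omega$. Each $V_h$ satisfies the dynamic constraints by construction. We then claim $V_0(x)=\min_y\big(\tfrac12\norm{x-y}^2+W_1(y)\big)\ge W_0(x)$. The required identity is the semigroup (Hopf--Lax) property of inf-convolution with quadratic kernels, $\tfrac{a}{2}\norm{\cdot}^2\,\square\,\tfrac{b}{2}\norm{\cdot}^2 = \tfrac{ab}{2(a+b)}\norm{\cdot}^2$. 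Iterating $H+1$ times with $a=b=\dots=H+1$ gives the kernel $\tfrac12\norm{\cdot}^2$. On the convex set $\Omega$ the minimizing intermediate points are the equally spaced points on the segment, so the restriction to $\Omega$ does not matter; this is the same Jensen computation as in Theorem~\ref{thm:static-dyn-kont}. Replacing $W_0$ by the larger $V_0$ only increases $\int V_0\,\dd\src$. Therefore the dynamic optimum is at least the static optimum.

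\emph{Conclusion.} Steps 1 and 2 give equal optimal values. If $V^\star$ is dynamic-optimal, then by Step 1 $(V_0^\star,V_{H+1}^\star)$ is static-feasible with objective equal to the common optimal value, so it is static-optimal. Strong duality for the static problem (assumed standard) then shows these values equal $\Wc_2^2$. Combined with Theorem~\ref{thm:static-dyn-kont}, this also yields strong duality for the dynamic problem.

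The main obstacle is Step 2. It requires checking the Hopf--Lax identity carefully on $\Omega$ rather than on all of $\R^d$, and verifying that the backward construction produces continuous functions so that $V$ lies in $\Cc(\Omega)^{H+2}$. Compactness handles continuity. The identity itself needs only the inequality ``$\ge$'', which follows from Jensen, together with the achievability of the equally spaced points inside $\Omega$, which follows from convexity.
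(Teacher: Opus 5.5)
Your proposal is correct and follows essentially the same route as the paper: Step~1 is the paper's feasibility argument, which chains the constraints along equally spaced points, and Step~2 is the paper's backward recursion $V'_{H+1}=W_1$, $V'_h(x)=\min_{y\in\Omega}\bigl(\frac{H+1}{2}\norm{x-y}^2+V'_{h+1}(y)\bigr)$ combined with Jensen's inequality. The differences are cosmetic: you compare optimal values directly instead of arguing by contradiction, and you obtain $V_0\ge W_0$ from static feasibility alone rather than from the paper's appeal to $W_0^\star$ being the $c$-transform of $W_1^\star$ on $\supp(\src)$, which is slightly cleaner because it uses no property of optimal static potentials.
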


\begin{proof}
    \emph{Feasibility:} For any $x,y\in\R^d$, set $x_h = (1-h/(H+1))x + h/(H+1)y$ (for $h=0,\dots,H+1$) and notice that
    \begin{align*}
        W_0(x) = V_0^\star(x_0) \leq& \frac{H+1}{2}\norm{x_0-x_1}^2 + V_1^\star(x_1)\\
        \leq& \dots \\
        \leq& \frac{H+1}{2}\sum_{h=0}^H \norm{x_h-x_{h+1}}^2 + V_{H+1}^\star(x_{H+1})\\
        =& \frac{H+1}{2}\sum_{h=0}^H \norm{\frac{1}{(H+1)}(x-y)}^2 + V_{H+1}^\star(x_{H+1})\\
        =& \frac 12 \norm{x-y}^2 + W_1(y),
    \end{align*}
    so $(W_0,W_1)$ is feasible for the problem in \cref{eq:kantorovich-dual}.

    \emph{Optimality:} Assume by contradiction that $W^\star$ is optimal for the problem in \cref{eq:kantorovich-dual} 
and satisfies
    \begin{align}
        \int W^\star_0(x)\dd\src(x) - \int W^\star_1(x)\dd\tgt(x) >& \int W_0(x)\dd\src(x) - \int W_1(x)\dd\tgt(x) 
\nonumber\\
        =& \int V^\star_0(x)\dd\src(x) - \int V^\star_{H+1}(x)\dd\tgt(x). \label{eq:contradtict1}
    \end{align}
    Let us proceed by defining the sequence $V'_{H+1}= W^\star_1$ and $V'_h(x) = \min_{y\in\Omega} 
\br{\frac{H+1}{2}\norm{x-y}^2 + V'_{h+1}(y)}$ inductively for $h=H,\dots,0$. Clearly, $V'$ is feasible for the dynamic 
dual problem~\eqref{eq:dual-lp}. Moreover, by construction, for any $x_0\in\R^d$ we have
    \begin{align*}
        V'_0(x_0) =& \min_{x_1}\br{\frac{H+1}{2}\norm{x_0-x_1}^2 + V'_1(x_1)}\\
        =& \dots\\
        =& \min_{x_1,\dots,x_{H+1}}\br{\frac{H+1}{2}\sum_{h=0}^H \norm{x_h-x_{h+1}}^2 + V'_{H+1}(x_{H+1})}\\
        \geq& \min_{x_1,\dots,x_{H+1}}\br{\frac{(H+1)^2}{2} \norm{\frac{1}{H+1}\sum_{h=0}^H(x_h-x_{h+1})}^2 + 
V'_{H+1}(x_{H+1})} \tag{by Jensen's}\\
        =& \min_{x_{H+1}}\br{\frac 12 \norm{x_0-x_{H+1}}^2 + W_1^\star(x_{H+1})}.
    \end{align*}
    Now, when $x_0\in\supp(\src)$, we know that the right-hand side equals $W_0^\star(x_0)$ \citep{peyre2025optimal}. 
Hence, we have $V'_0\geq W_0$ (on $\supp(\src)$) and $V'_{H+1}=W_1$. Thus, since $V'$ is feasible for the dynamic dual 
and $V^\star$ is optimal, we have
    \begin{align*}
        \int V^\star_0(x)\dd\src(x) - \int V^\star_{H+1}(x)\dd\tgt(x) \geq& \int V'_0(x)\dd\src(x) - \int 
V'_{H+1}(x)\dd\tgt(x) \\
        \geq& \int W^\star_0(x)\dd\src(x) - \int W^\star_{1}(x)\dd\tgt(x),
    \end{align*}
    contradicting the inequality~\eqref{eq:contradtict1}, thus proving our original claim.
\end{proof}

We  thus obtain that the dynamic OT problem inherits strong duality from the static OT problem.

\begin{lemma}[Strong duality]\label{lem:sd}
    The values of~\eqref{eq:primal-lp-reprise} and~\eqref{eq:dual-lp} coincide.
\end{lemma}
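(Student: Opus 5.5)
The plan is to chain three equalities that are already available. By Theorem~\ref{thm:static-dyn-kont}, the optimal value of \eqref{eq:primal-lp-reprise} equals $\Wc_2^2(\src,\tgt)$, the value of the static Kantorovich problem \eqref{eq:kantorovich-reprise}. By Theorem~\ref{thm:dual-stat-dyn}, the optimal value of \eqref{eq:dual-lp} equals the optimal value of the static dual \eqref{eq:kantorovich-dual}. It then suffices to invoke classical Kantorovich duality on the compact set $\Omega$, namely that the static primal and static dual values coincide. With costs $\frac12\norm{x-y}^2$ continuous on $\Omega\times\Omega$, this holds, e.g.\ by \citep{San15} or \citep{peyre2025optimal}, with attainment in $\Cc(\Omega)$ via $c$-concave pairs. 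Combining the three equalities gives the claim.

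The order of steps is as follows.

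First, I restate Theorem~\ref{thm:static-dyn-kont} on the restricted domain $\Mc^{H+1}$. The normalization argument given just before \eqref{eq:primal-lagr} shows that feasible unnormalized measures are automatically probability measures. The Jensen/gluing argument is also unaffected, since $\Omega$ is convex, so the interpolants $X_h$ stay in $\Omega$. Hence the primal value is still $\Wc_2^2$.

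Second, I apply static strong duality.

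Third, I apply Theorem~\ref{thm:dual-stat-dyn}, which requires that a maximizer $\Vstar$ of \eqref{eq:dual-lp} exists.

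The main obstacle is precisely this existence hypothesis in Theorem~\ref{thm:dual-stat-dyn}. To avoid relying on it, I would instead prove the two inequalities directly.

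For $\text{dual}\le\text{primal}$, use weak duality. For any feasible $V$ and feasible $\mu$, integrating the constraint $V_h(x)\le \frac{H+1}{2}\norm{x-y}^2+V_{h+1}(y)$ against $\mu_h$ and telescoping through the flow constraints gives
\begin{align*}
\int V_0\,\dd\src-\int V_{H+1}\,\dd\tgt\le \frac{H+1}{2}\sum_{h}\int\norm{x-y}^2\,\dd\mu_h .
\end{align*}

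For $\text{dual}\ge\text{primal}$, take an optimal static pair $(W_0^\star,W_1^\star)$ and build $V'$ by backward inf-convolution, exactly as in the optimality part of the proof of Theorem~\ref{thm:dual-stat-dyn}. That construction shows $V'$ is dual feasible, with $V'_0\ge W_0^\star$ on $\supp(\src)$ and $V'_{H+1}=W_1^\star$. Its dual objective is therefore at least the static dual value, which equals $\Wc_2^2$, which equals the primal value. One detail needs checking here: each $V'_h$ must be continuous on $\Omega$. This holds because an infimum of uniformly Lipschitz functions on a compact set is Lipschitz.
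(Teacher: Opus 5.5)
Your main chain (dynamic primal value $=\Wc_2^2$ by Theorem~\ref{thm:static-dyn-kont}, then static Kantorovich duality, then dynamic dual value $=$ static dual value by Theorem~\ref{thm:dual-stat-dyn}) is exactly the paper's proof. Your fallback is also correct: weak duality by telescoping the flow constraints, plus the backward inf-convolution $V'$ built from an optimal static pair. It is slightly more careful, because it does not presuppose the existence of a dual maximizer, which Theorem~\ref{thm:dual-stat-dyn} as stated takes for granted, and it in fact exhibits one.
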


\begin{proof}
    Let $\opt_{\text{P}}$ and $\opt_{\text{D}}$ denote the optimal values of the static primal (\cref{eq:kantorovich}) 
and dual problems \cref{eq:kantorovich-dual}, respectively. By strong duality of the static problem 
\citep{peyre2025optimal}, we have $\opt_{\text{P}} = \opt_{\text{D}}$. Further, let $\opt_{\text{P}}^{\text{dyn}}$ and 
$\opt_{\text{D}}^{\text{dyn}}$ denote the optimal values of the dynamic primal (\cref{eq:primal-lagr}) and dual 
(\cref{eq:dual-lagr}), respectively. By \cref{thm:static-dyn-kont}, we have $\opt_{\text{P}}^{\text{dyn}} = 
\opt_{\text{P}}$. By \cref{thm:dual-stat-dyn}, we have $\opt_{\text{D}}^{\text{dyn}} = \opt_{\text{D}}$. Hence 
$\opt_{\text{P}}^{\text{dyn}} = \opt_{\text{D}}^{\text{dyn}}$.
\end{proof}

By strong duality, we can now show that the dual program admits a dynamic programming interpretation, connecting it to 
the policies of the Monge-type dynamic OT problem.

\begin{theorem} \label{thm:bellman-reprise} [Theorem~\ref{thm:bellman} in the main text]
    Let $\mu^\star$ be an optimal solution for (\ref{eq:primal-lp-reprise}) that corresponds to a deterministic policy 
$\pi^\star$. Let $V^\star$ be an optimal solution for (\ref{eq:dual-lp}). Then for all $h$ and for 
$(\pi^\star_{h-1}\circ\cdots\circ \pi^\star_0)_{\#} \src$-almost all $x\in\Omega$, the optimal value 
dual variables and the optimal policy respectively satisfy
    \begin{align*}
        \Vstar_h(x) &= \min_{y\in\Omega} \br{\frac{H+1}{2}\norm{x-y}^2 + \Vstar_{h+1}(y)},
        \\
        \pistar_h(x) &= \arg\min_{y\in \Omega}\br{\frac{H+1}{2}\norm{x-y}^2 + \Vstar_{h+1}(y)}.
    \end{align*}
\end{theorem}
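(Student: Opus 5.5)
We argue via complementary slackness, which strong duality (Lemma~\ref{lem:sd}) makes available. The plan has four steps: obtain a zero-gap identity, turn it into pointwise equalities along optimal trajectories, derive the Bellman equation from these equalities plus dual feasibility, and read off the $\arg\min$ characterization of $\pistar_h$.

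\textbf{Step 1: the zero-gap identity.} Since $\mustar$ is primal feasible, all constraint terms in the Lagrangian vanish, so $\Lc(\mustar,V)$ equals the primal objective value for every $V$. Strong duality says this value equals $\int \Vstar_0\,\dd\src - \int \Vstar_{H+1}\,\dd\tgt$. Writing the Lagrangian in its adjoint form at $(\mustar,\Vstar)$ then gives
\[
\sum_{h=0}^H \int \Bigl(\tfrac{H+1}{2}\norm{x-y}^2 + \Vstar_{h+1}(y) - \Vstar_h(x)\Bigr)\,\dd\mustar_h(x,y) = 0 .
\]

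\textbf{Step 2: pointwise equality on optimal trajectories.} Dual feasibility (Lemma~\ref{eq:dyn-dual}) makes every integrand nonnegative on $\Omega\times\Omega$. A nonnegative function integrating to zero vanishes $\mustar_h$-almost everywhere. By Corollary~\ref{cor:monge-to-kant}, $\mustar_h = (\pibar_{h-1},\pibar_h)_\#\src$, where $\pibar_h=\pistar_h\circ\cdots\circ\pistar_0$. Hence, for $(\pibar_{h-1})_\#\src$-almost every $x$,
\[
\Vstar_h(x) = \tfrac{H+1}{2}\norm{x-\pistar_h(x)}^2 + \Vstar_{h+1}(\pistar_h(x)).
\]

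\textbf{Step 3: the Bellman equation.} Dual feasibility gives, for every $x$ and every $y\in\Omega$,
\[
\Vstar_h(x)\le \tfrac{H+1}{2}\norm{x-y}^2+\Vstar_{h+1}(y),
\]
so $\Vstar_h(x)\le \min_y(\cdots)$. For the almost every $x$ from Step~2, the choice $y=\pistar_h(x)\in\Omega$ attains this bound. Therefore the minimum is attained and equals $\Vstar_h(x)$, which proves the first identity. This step also shows that $\pistar_h(x)$ lies in the $\arg\min$.

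\textbf{Step 4: uniqueness of the minimizer (main obstacle).} Interpreting $\arg\min$ as ``is a minimizer'' is immediate from Step~3; identifying $\pistar_h(x)$ as \emph{the} minimizer requires more. My plan is to use that the minimum in $y$ is an inf-convolution of $\Vstar_{h+1}$ with a quadratic, so the map $\Vstar_h - \tfrac{H+1}{2}\norm{x}^2$ is concave. Concave functions are differentiable Lebesgue-a.e., and the intermediate laws $(\pibar_{h-1})_\#\src$ have densities, being pushforwards of $\src$ under $(1-t)\,\text{id}+tM^\star$ by McCann's interpolation. At a point $x$ of differentiability, any minimizer $y$ satisfies $(H+1)(x-y)=\nabla\Vstar_h(x)$, by the envelope argument used in the Kantorovich-potential setting for $c$-concave functions. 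This determines $y$ uniquely and gives $y=\pistar_h(x)$.

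If the density claim for the intermediate measures is too delicate, I would settle for stating the $\arg\min$ as set membership. The identity $\pistar_h(x)=x-\tfrac{1}{H+1}\nabla\Vstar_h(x)$ then still follows wherever $\Vstar_h$ is differentiable.
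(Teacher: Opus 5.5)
Your Steps 1--3 are essentially the paper's proof. The paper writes the zero-gap identity as a telescoping chain along $\pibar_{h-1}$, which is equivalent to your adjoint-form Lagrangian. It then uses dual feasibility to force equality $(\pibar_{h-1})_\#\src$-a.e.\ for each $h$, and closes with the same feasibility sandwich. It reads $\arg\min$ only as ``$\pistar_h(x)$ is a minimizer,'' so your Step~4 proves more than the paper does.

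Step~4 needs one correction. The Bellman identity holds only $(\pibar_{h-1})_\#\src$-a.e., so concavity of $\Vstar_h-\frac{H+1}{2}\norm{\cdot}^2$ does not follow. Work instead with $g_h(x):=\min_{y\in\Omega}\br{\frac{H+1}{2}\norm{x-y}^2+\Vstar_{h+1}(y)}$. Here $g_h-\frac{H+1}{2}\norm{\cdot}^2$ is a minimum of affine functions, hence concave and Lebesgue-a.e.\ differentiable on $\R^d$, and uniqueness of the minimizer at $x$ depends only on $g_h$.

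The absolute continuity of $(\pibar_{h-1})_\#\src$ then follows from three facts. First, $\src$ has a density. Second, the uniqueness part of \cref{thm:det-policies} gives $\pibar_{h-1}=(1-t_h)\text{id}+t_hM^\star$ $\src$-a.e.\ with $t_h=h/(H+1)<1$. Third, this map is the gradient of a $(1-t_h)$-strongly convex function, so it has a Lipschitz inverse on its image.
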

\begin{proof}
    For ease of notation, we define $\pibar_h = \pi^\star_{h}\circ\cdots\circ \pi^\star_0$. By strong duality 
(\cref{lem:sd}) and the fact that $(\pibar_{H})_\#\src=\tgt$ by feasibility, we have 
    \begin{align*}
        &\frac{H+1}{2}\sum_{h=0}^H \int \norm{\pi^\star_h(x) - x}^2 d(\pibar_{h-1 \#}\src)(x)\\
        &\qquad\qquad= \int V^\star_0(x)\dd\src(x)-\int V^\star_{H+1}(x)\dd\tgt\\
        &\qquad\qquad= \int V^\star_0(x) \dd\src(x) - \int V^\star_{H+1}(x) d ((\pibar_{H})_\#\src)(x)\\
        &\qquad\qquad= \int (V^\star_0(x) -  V^\star_{H+1}(\pibar_{H}(x)) \dd\src(x) \\
        &\qquad\qquad= \sum_{h=0}^H \int 
\br{V^\star_h(\pibar_{h-1}(x))-V^\star_{h+1}(\pi^\star_h(\pibar_{h-1}(x)))}\dd\src(x)\\
        &\qquad\qquad= \sum_{h=0}^H \int \br{V^\star_h(x)-V^\star_{h+1}(\pi^\star_h(x))}d((\pibar_{h-1})_\#\src)(x) \\ 
        &\qquad\qquad\leq \frac{H+1}{2}\sum_{h=0}^H \int \norm{\pi^\star_h(x)-x}^2 d(\pibar_{h-1})_\#\src(x)
    \end{align*}
    where the inequality uses feasibility of $V^\star$. We must thus for each $h$ have equality almost everywhere: 
    \begin{align*}
        V^\star_h(x)-V^\star_{h+1}(\pi^\star_h(x)) = \frac{H+1}{2}\norm{\pi^\star_h(x)-x}^2.
    \end{align*}
    Hence, by feasibility $V^\star_h(x) \leq \min_{y\in\Omega}\br{\frac{H+1}{2}\norm{x-y}^2+V^\star_{h+1}(y)} \leq 
\frac{H+1}{2}\norm{\pi^\star_h(x)-x}^2 + V^\star_{h+1}(\pi^\star_h(x)) = V^\star_h(x)$ so we must have equality, which 
shows both $V^\star_h(x) = \min_{y\in\Omega}\br{\frac{H+1}{2}\norm{x-y}^2+V^\star_{h+1}(y)}$ and that $\pi^\star_h(x)$ 
is a minimizer. 
\end{proof}

\subsection{Time scaling}\label{app:few-step}

We have seen in the previous sections that the discrete-time dynamic OT problem is equivalent to both the static OT 
problems~(\ref{eq:monge-reprise},\ref{eq:kantorovich-reprise}) and the dynamic OT problem~\eqref{eq:benamou-brenier}. The 
former problem formulation is recovered in our setting by choosing $H=0$. On the other hand, our formulation can be 
intuitively viewed as a time-discretization at scale $1/(H+1)$ of the continuous-time 
Benamou--Brenier formulation~\eqref{eq:benamou-brenier}. As we argue below, these connections can be extended to 
accommodate discrete-time formulations at various time scales as well.

The following result shows that further discretizing the time horizon recovers the same 
optimal transport map but at a finer discretization level (which one may also see by combining \cref{thm:bellman} and 
\cref{thm:det-policies}). The result is a strict generalization of \cref{thm:dual-stat-dyn} (by setting $H'=0$), but we 
state and prove it separately for clarity and to highlight the connection to the Benamou--Brenier formulation.

\begin{lemma}
    Let $H\in\mathbb{Z}_{>0}$ and $H_k\in \mathbb{Z}_{\geq0}$ be such that $H_k+1 = k(H+1)$, and let us define $h_k = 
kh$ for all $h\in\ev{0,1,\dots,H}$. Let $V^{\star,H_k}$ be optimal for the dynamic dual \eqref{eq:dual-lp} with horizon 
$H_k$. Then, there is an optimal solution $V^{\star,H}$ for the dynamic dual \eqref{eq:dual-lp} with horizon $H$ such that 
    \begin{align*}
        V^{\star,H}_h = V^{\star,H_k}_{h_k} \qquad (h=0,\dots,H+1).
    \end{align*}
\end{lemma}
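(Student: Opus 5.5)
We define $V^{\star,H}_h := V^{\star,H_k}_{h_k}$ for $h=0,\dots,H+1$; note $h_k = kh$ gives $h_k = H_k+1$ when $h=H+1$, so the endpoints match. We then show two things: this $V^{\star,H}$ is feasible for \eqref{eq:dual-lp} with horizon $H$, and its dual objective equals the optimal dual value for horizon $H$. The objective depends only on the endpoint functions $V_0$ and $V_{H+1}$, and these coincide with $V^{\star,H_k}_0$ and $V^{\star,H_k}_{H_k+1}$. So the objective of the candidate equals the optimal dual value at horizon $H_k$. By \cref{lem:sd} together with \cref{thm:static-dyn-kont}, the optimal dual value at every horizon equals $\Wc_2^2(\src,\tgt)$; equivalently, by \cref{thm:dual-stat-dyn} it equals the static dual optimum. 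Hence feasibility is the only thing left to prove, and optimality then follows immediately.

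For feasibility, fix $h\in\{0,\dots,H\}$ and $x,y\in\Omega$. We need
\[
V^{\star,H_k}_{kh}(x) \le \frac{H+1}{2}\norm{x-y}^2 + V^{\star,H_k}_{k(h+1)}(y).
\]
We mimic the feasibility part of the proof of \cref{thm:dual-stat-dyn}. Insert the $k+1$ equally spaced points $z_j = (1-j/k)x + (j/k)y$ for $j=0,\dots,k$; these lie in $\Omega$ because $\Omega$ is convex. Chaining the horizon-$H_k$ dual constraints from index $kh$ up to $kh+k$ gives
\[
V^{\star,H_k}_{kh}(x) \le \sum_{j=0}^{k-1}\frac{H_k+1}{2}\norm{z_j-z_{j+1}}^2 + V^{\star,H_k}_{kh+k}(y) = \frac{k(H+1)}{2}\cdot k\cdot\frac{1}{k^2}\norm{x-y}^2 + V^{\star,H_k}_{k(h+1)}(y),
\]
and the right-hand side is exactly $\frac{H+1}{2}\norm{x-y}^2 + V^{\star,H_k}_{k(h+1)}(y)$. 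Continuity of each $V^{\star,H}_h$ is inherited from $V^{\star,H_k}$, so $V^{\star,H}\in\Cc(\Omega)^{H+2}$.

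The main point requiring care is the claim that the optimal dual values at horizons $H$ and $H_k$ coincide. This is clean once we invoke \cref{lem:sd} and \cref{thm:static-dyn-kont} for both horizons, since both values equal $\Wc_2^2(\src,\tgt)$ independently of the horizon. The step also relies on $k$ being an integer, so that $h_k = kh$ is a valid index at horizon $H_k$. Beyond that, the argument is the telescoping and Jensen-type equality computation shown above, which is exact for equally spaced points.
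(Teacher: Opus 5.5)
Your proposal is correct and follows essentially the same route as the paper: define the candidate by taking every $k$-th function of $V^{\star,H_k}$, prove feasibility by chaining the horizon-$H_k$ constraints along $k$ equally spaced points between $x$ and $y$, and deduce optimality from the matching endpoint functions together with the fact that the optimal dual value does not depend on the horizon (\cref{thm:dual-stat-dyn}). Your remark that convexity of $\Omega$ keeps the intermediate points admissible is a small improvement over the paper's write-up, which states the chaining for $x,y\in\R^d$.
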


\begin{proof}
For the proof, let us suppose that $V^{\star,H_k}$ is an optimal solution to~\eqref{eq:dual-lp} with horizon $H_k$, and 
let $\wt{V}^{\star,H}$ be defined for all $h=0,\dots,H+1$ as 
\[
 \wt{V}^{\star,H}_h = V^{\star,H_k}_{h_k}.
\]
Below, we will show that $\wt{V}^{\star,H}$ is both feasible and optimal for~\eqref{eq:dual-lp} with horizon $H$, which 
will imply the statement of the lemma.

    \emph{Feasibility:} For any $x,y\in\R^d$ set $x_\ell = (1-\frac \ell k)x + \frac \ell k y$ (for 
$\ell=0,\dots,k$) and notice that by feasibility of $V^{\star,H_k}$, we have 
    \begin{align*}
        \wt{V}^{\star,H}_h(x) =& V^{\star,H_k}_{h_k}(x_0) \\
        \leq& \frac{H_k+1}{2}\norm{x_0-x_1}^2 + V^{\star, H_k}_{h_k+1}(x_1)\\
        \leq& \dots \\
        \leq& \frac{H_k+1}{2}\sum_{\ell=0}^{k-1} \norm{x_\ell-x_{\ell+1}}^2 + V^{\star,H_k}_{k(h+1)}(x_{k})\\
        =& \frac{H_k+1}{2}\sum_{\ell=0}^{k-1} \norm{\frac{1}{k}(x-y)}^2 + \wt{V}^{\star,H}_{h+1}(y)\\
        =& \frac{H_k+1}{2k} \norm{x-y}^2 + \wt{V}^{\star,H}_{h+1}(y)\\
        =& \frac{H+1}{2} \norm{x-y}^2 + \wt{V}^{\star,H}_{h+1}(y),
    \end{align*}
    thus implying that $\wt{V}^{\star,H}$ is feasible for~\eqref{eq:dual-lp} with horizon $H$.

    \emph{Optimality:} Let $\opt_{\text{D},H'}^{\text{dyn}}$ denote the optimal value of~\eqref{eq:dual-lagr} for 
horizon $H'$. We already know from \cref{thm:dual-stat-dyn} that 
    \begin{align}
        \opt_{\text{D},H}^{\text{dyn}} = \opt_{\text{D}} = \opt_{\text{D},H_k}^{\text{dyn}}.  \label{eq:inv-H}
    \end{align}
    Moreover, by construction $\wt{V}^{\star,H}_0 = V^{\star,H_k}_0$ and $\wt{V}^{\star,H}_{H+1} = 
V^{\star,H_k}_{H_k+1}$, so by optimality of $V^{\star,H_k}$, we have 
    \begin{align*}
        \opt_{\text{D},H_k}^{\text{dyn}} =& \int V^{\star,H_k}_0(x)\dd\src(x) - \int 
V^{\star,H_k}_{H_k+1}(x)\dd\tgt(x)\\
        =& \int \wt{V}^{\star,H}_0(x)\dd\src(x) - \int \wt{V}^{\star,H}_{H+1}(x)\dd\tgt(x).
    \end{align*}
    Plugging this into \cref{eq:inv-H} concludes the proof. 
\end{proof}

\paragraph{Equivalence between discrete-time and continuous-time dynamic OT.} 
The above theorem suggests that refining the discretization of the time horizon does not change the structure of the 
optimal value functions and that in the limit, we should recover a potential that corresponds to a straight 
continuous-time transportation path between $\src$ and $\tgt$, as is known to be the case for the Benamou--Brenier 
formulation. We make this precise in the following (\cref{lemma:final-bb}), first introducing several simple auxiliary 
definitions and results (\cref{lemma:aux-bb-1,lemma:aux-bb-2,lemma:aux-bb-3}).

As before, assume that $\supp(\src),~\supp(\tgt)\subset \Omega$ for some compact convex $\Omega\subset\R^d$. For 
$f\in\Cc(\Omega)$, we define the \emph{$s$-scaled Bellman operator} $\Tc_s$ acting on $f$ via 
\begin{align*}
    [\Tc_s f](x) = \min_{y\in\Omega}\br{\frac{\norm{x-y}^2}{2 s} + f(y)} 
\end{align*}
for $s>0$ and $\Tc_0 f = f$. We first give an alternative characterization of the optimal value functions which will be 
useful to establish convergence in the continuous-time limit.

\begin{lemma} \label{lemma:aux-bb-1}
    Let $(W^\star_0,W^\star_1)$ an optimal solution for the static dual problem \eqref{eq:kantorovich-dual}, chosen such 
that $W_0^\star = \Tc_1 W_1^\star$ on all of $\Omega$. Further, set $t(h,H) = h/(H+1)$ for $h=0,\dots,H+1$ and set 
$V^{\star,H}_h = \Tc_{1-t(h,H)} W_1^\star$. Then, $V^{\star,H}$ is optimal for~\eqref{eq:dual-lp} with horizon $H$.
\end{lemma}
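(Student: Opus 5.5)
The plan is to show two things: that $V^{\star,H}$ is feasible for \eqref{eq:dual-lp} with horizon $H$, and that its dual objective equals the static optimum $\opt_{\text{D}}$. Optimality then follows from \cref{thm:dual-stat-dyn}, since the dynamic dual value equals the static one.

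The key tool is the semigroup (Hopf--Lax) property of the scaled Bellman operators on the convex set $\Omega$:
\[
\Tc_s \Tc_r f = \Tc_{s+r} f \qquad (s,r\ge 0).
\]
I would prove this first by computing
\[
[\Tc_s\Tc_r f](x)=\min_{z,y\in\Omega}\Bigl(\tfrac{\norm{x-z}^2}{2s}+\tfrac{\norm{z-y}^2}{2r}+f(y)\Bigr).
\]
For fixed $x,y$, minimizing over $z$ gives $\tfrac{\norm{x-y}^2}{2(s+r)}$. Jensen's inequality gives the lower bound, exactly as in the proof of \cref{thm:dual-stat-dyn}. The bound is attained at $z=\tfrac{r}{s+r}x+\tfrac{s}{s+r}y$, which lies in $\Omega$ by convexity. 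The cases $s=0$ or $r=0$ are trivial. I expect this to be the main, though mild, obstacle: it needs convexity of $\Omega$ and care with the boundary conventions. Continuity of $\Tc_s f$ also follows, so $V^{\star,H}_h\in\Cc(\Omega)$.

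Feasibility then follows directly. Since $1-t(h,H)=\bigl(1-t(h+1,H)\bigr)+\tfrac{1}{H+1}$, the semigroup property gives
\[
V^{\star,H}_h=\Tc_{1/(H+1)}V^{\star,H}_{h+1}=\min_{y\in\Omega}\Bigl(\tfrac{H+1}{2}\norm{x-y}^2+V^{\star,H}_{h+1}(y)\Bigr)\le \tfrac{H+1}{2}\norm{x-y}^2+V^{\star,H}_{h+1}(y)
\]
for all $x,y$. This is the constraint of \eqref{eq:dual-lp}, and in fact it holds with the Bellman recursion of \cref{thm:bellman}.

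For the objective, note the endpoints. First, $V^{\star,H}_{H+1}=\Tc_0W_1^\star=W_1^\star$. Second, $V^{\star,H}_0=\Tc_1W_1^\star=W_0^\star$ on all of $\Omega$, by the choice made in the statement. The dynamic dual objective of $V^{\star,H}$ is therefore $\int W_0^\star\dd\src-\int W_1^\star\dd\tgt=\opt_{\text{D}}$.

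By \cref{thm:dual-stat-dyn}, or equivalently \cref{lem:sd} combined with \cref{thm:static-dyn-kont}, the optimal value of \eqref{eq:dual-lp} equals $\opt_{\text{D}}$. Hence the feasible $V^{\star,H}$ attains the optimum and is optimal.
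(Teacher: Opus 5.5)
Your proposal is correct and follows the paper's proof essentially step for step. The semigroup identity $\Tc_{s'}\circ\Tc_s=\Tc_{s+s'}$ gives the Bellman recursion $V^{\star,H}_h=\Tc_{1/(H+1)}V^{\star,H}_{h+1}$ and hence feasibility; the endpoints reduce to $W_0^\star$ and $W_1^\star$; and optimality follows from \cref{thm:dual-stat-dyn}. You verify the semigroup identity explicitly via weighted Jensen and convexity of $\Omega$, which the paper only calls straightforward, and the paper's extra remark that $W^\star$ can be chosen with $W_0^\star=\Tc_1W_1^\star$ on all of $\Omega$ is already a hypothesis of the statement.
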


\begin{proof}
    First, note that we can indeed choose $W^\star$ such that $W_0^\star = \Tc_1 W_1^\star$ on all of $\Omega$: By optimality, we have $W_0^\star = \Tc_1 W_1^\star$ on $\supp(\src)$, and by continuity of $W_0^\star$ and $\Tc_1 W_1^\star$, we can choose a continuous extension to all of $\Omega$ preserving this equality. 

    Next, it is straightforward to verify that for $s, s' \geq 0$, we have $\Tc_{s'} \circ \Tc_s = \Tc_{s+s'}$ (by convexity of $\Omega$). Using this, we see that for any $h=0,\dots,H$ and $x\in\Omega$, we have 
    \begin{align*}
        V^{\star,H}_h(x) =& \Tc_{1-h/(H+1)} W_1^\star(x)=\Tc_{1/(H+1)} \Tc_{1-(h+1)/(H+1)} W_1^\star(x) = \Tc_{1/(H+1)} V^{\star,H}_{h+1}(x),
    \end{align*}
    that is, $V^{\star,H}(x) = \min_{y\in\Omega}\br{\frac{H+1}{2}\norm{x-y}^2 + V^{\star,H}_{h+1}(y)}$. In addition, by 
compactness of $\Omega$ and continuity of $W_1^\star$, the value functions $V^{\star,H}_h$ are continuous and thus 
feasible. Finally, $V^{\star,H}_0 = \Tc_1 W_1^\star = W_0^\star$ and $V^{\star,H}_{H+1} = W_1^\star$, so the objective 
value of $V^{\star,H}$ in the dynamic dual is the same as that of $W^\star$ in the static dual, which is optimal by 
\cref{thm:dual-stat-dyn}.
\end{proof}

We next show that the optimal value functions converge to their continuous-time counterparts as we refine the discretization appropriately. 

\begin{lemma} \label{lemma:aux-bb-2}
    Let $t\in[0,1]$ and consider $(h_k, H_k)_{k\geq1}$ such that for all $k$, we have $h_k \in \{0, \dots, H_k\}$ and $\frac{h_k}{H_k+1} \to t$ as $k \to \infty$. Then for all $x\in\Omega$, $V^{\star,H_k}_{h_k}(x)$ from \cref{lemma:aux-bb-1} converges to $\Tc_{1-t} W_1^\star(x)$ as $k\to \infty$.
\end{lemma}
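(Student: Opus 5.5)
By \cref{lemma:aux-bb-1}, $V^{\star,H_k}_{h_k} = \Tc_{s_k} W_1^\star$ with $s_k := 1 - \frac{h_k}{H_k+1} \in [0,1]$ and $s_k \to s := 1-t$. The claim therefore reduces to a purely analytic statement: for fixed $x\in\Omega$, the map $s\mapsto [\Tc_s W_1^\star](x)$ is continuous on $[0,1]$. This includes the endpoint $s=0$, where $\Tc_0 f = f$ by definition. I will prove this continuity using only compactness of $\Omega$ and continuity of $W_1^\star$.

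\emph{Case $s>0$.} Write $[\Tc_{s'} f](x) = \min_{y\in\Omega} g(s',y)$ with $g(s',y) = \frac{\norm{x-y}^2}{2s'} + f(y)$ and $f=W_1^\star$. The function $g$ is jointly continuous on $[s/2, 1]\times\Omega$, and $\Omega$ is compact. By Berge's maximum theorem, or by a direct argument, the minimum is continuous in $s'$. The direct argument goes as follows. Since $D:=\mathrm{diam}(\Omega)<\infty$, for $s',s''\ge s/2$ we have
\[
\sup_{y\in\Omega}\abs{g(s',y)-g(s'',y)} \le \frac{D^2}{2}\abs{\frac{1}{s'}-\frac{1}{s''}}.
\]
Infima of uniformly close functions are close, so $\abs{\Tc_{s'}f(x)-\Tc_{s''}f(x)}\le \frac{D^2}{2}\abs{1/s'-1/s''}\to 0$.

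\emph{Case $s=0$ (i.e.\ $t=1$).} This is the main obstacle, because the $1/s'$ coefficient blows up. Taking $y=x$ gives the upper bound $\Tc_{s'}f(x)\le f(x)$. For the lower bound, let $\omega$ be a modulus of continuity of $f$ on the compact $\Omega$, so $f$ is uniformly continuous. Fix $\epsilon>0$ and choose $\delta$ with $\omega(\delta)\le\epsilon$. For $y$ with $\norm{x-y}\le\delta$ we get $g(s',y)\ge f(x)-\epsilon$. For $\norm{x-y}>\delta$ we get $g(s',y)\ge \frac{\delta^2}{2s'} + \min_\Omega f$, which exceeds $f(x)$ once $s'$ is small enough, since $f$ is bounded on $\Omega$. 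Hence $\liminf_{s'\to 0}\Tc_{s'}f(x)\ge f(x)-\epsilon$ for every $\epsilon$, and therefore $\Tc_{s_k}f(x)\to f(x)=\Tc_0 f(x)$.

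Combining the two cases with $s_k\to 1-t$ yields $V^{\star,H_k}_{h_k}(x)\to \Tc_{1-t}W_1^\star(x)$. Some $s_k$ may equal $0$ exactly, but that case is covered since the case $s=0$ includes $s'=0$ trivially. Both estimates are in fact uniform in $x\in\Omega$, which gives uniform convergence as a bonus; only pointwise convergence is required, however.
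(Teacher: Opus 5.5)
Your proposal is correct and follows the paper's proof. Both arguments use \cref{lemma:aux-bb-1} to reduce the claim to continuity of $s\mapsto \Tc_s W_1^\star(x)$ on $[0,1]$, with a uniform $\frac{\mathrm{diam}(\Omega)^2}{2}\abs{1/s'-1/s''}$ bound for $s>0$ and, at $s=0$, the upper bound from $y=x$ plus a localization argument. The only differences are cosmetic: you localize with a modulus of continuity and a near/far split, whereas the paper shows the minimizers $y_{s_n}$ converge to $x$; also, your remark about $s_k=0$ is unnecessary, since $h_k\le H_k$ forces $s_k>0$.
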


\begin{proof}
    We first claim that for any $x\in\Omega$, the map $s \mapsto \Tc_s W_1^\star(x)$ is continuous on $[0,1]$. For 
$s,s'>0$, we have $\abs{\Tc_s W_1^\star(x) - \Tc_{s'} W_1^\star(x)} \leq 
\frac{\text{diam}(\Omega)^2}{2\min\{s,s'\}^2}\abs{s-s'}$, showing continuity at every $s>0$. For $s=0$, consider a 
sequence $s_n \to 0$ ($n\to\infty$) and notice that by boundedness of $W^\star_1(x)$, any minimizer $y_{s_n}$ in the 
definition of $\Tc_{s_n}W^\star_1(x)$ must satisfy $\norm{x-y_{s_n}} \to 0$. Hence by continuity of $W_1^\star$, we have 
$\Tc_{s_n} W_1^\star(x) = \frac{\norm{x-y_{s_n}}^2}{2 s_n} + W_1^\star(y_{s_n}) \geq W_1^\star(y_{s_n}) \to 
W_1^\star(x)$ as $n \to \infty$. On the other hand, we have $W_1^\star(y_{s_n}) + \frac{\norm{x-y_{s_n}}^2}{2 s_n} = \min_{y\in\Omega} 
\br{\frac{\norm{x-y}^2}{2s_n} + W_1^\star(y)} \leq W_1^\star(x)$. Hence $\limsup_{n\to\infty} \Tc_{s_n} W_1^\star(x) 
\leq W_1^\star(x) \leq \liminf_{n\to\infty} \Tc_{s_n} W_1^\star(x)$ and thus $\Tc_{s_n} W_1^\star(x) \to W_1^\star(x) = 
\Tc_{0} W_1^\star(x)$ as $n\to\infty$, showing continuity at $s=0$.

    Hence, we find that for any $x\in\Omega$, 
    \begin{align*}
        V^{\star,H_k}_{h_k}(x) =& \Tc_{1-h_k/(H_k+1)} W_1^\star(x) \to \Tc_{1-t} W_1^\star(x) \qquad (k\to\infty)
    \end{align*}
    as claimed. 
\end{proof}

\begin{lemma} \label{lemma:aux-bb-3} 
    Let $M^\star$ be a solution to the static OT problem \eqref{eq:monge-reprise}. Let $x\in\supp(\src)$ and $t\in[0,1)$ 
and suppose that $x_t=(1-t)x + t M^\star(x) \in \text{int}(\Omega)$. Then $\Tc_{1-t}W_1^\star$ is differentiable at 
$x_t$ with gradient $\nabla_x \Tc_{1-t} W_1^\star(x_t) = x - M^\star(x)$.
\end{lemma}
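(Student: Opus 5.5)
The plan is to squeeze $u := \Tc_{1-t}W_1^\star$ between two smooth functions that touch it at $x_t$ and share the same gradient there. Write $s = 1-t > 0$ and $y^\star = M^\star(x)$, so that $x_t - y^\star = s(x - y^\star)$ and $x - x_t = t(x - y^\star)$. If an upper quadratic and a lower quadratic both touch $u$ at $x_t$ and have the same gradient $x - M^\star(x)$, differentiability follows at once. The assumption $x_t\in\text{int}(\Omega)$ guarantees that a whole neighbourhood of $x_t$ lies in $\Omega$, where $u$ is defined.

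\emph{Step 1 (complementary slackness).} For $\gamma^\star$-almost every pair $(x,M^\star(x))$ we have $W_0^\star(x) = \tfrac12\norm{x-y^\star}^2 + W_1^\star(y^\star)$. This is the usual consequence of strong duality for the static problem, as in \cref{lem:sd} and the proof of \cref{thm:bellman-reprise}. By continuity of both sides, and of $M^\star$ on the compact $\Omega$, it then holds on $\supp(\src)$.

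\emph{Step 2 (upper bound).} By definition of $\Tc_s$, for every $z\in\Omega$,
\[
u(z)\le \tfrac{1}{2s}\norm{z-y^\star}^2 + W_1^\star(y^\star).
\]
We evaluate this at $z = x_t$. The semigroup property $\Tc_1 = \Tc_t\circ\Tc_s$ from the proof of \cref{lemma:aux-bb-1} gives $W_0^\star(x) = \Tc_t u(x) \le \tfrac{1}{2t}\norm{x-x_t}^2 + u(x_t)$. Combining this with Step~1 and the identity $\tfrac{1}{2t}\norm{x-x_t}^2 + \tfrac1{2s}\norm{x_t-y^\star}^2 = \tfrac12\norm{x-y^\star}^2$ shows that the upper bound is attained at $z=x_t$. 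Hence the right-hand side is a smooth majorant of $u$ touching at $x_t$, with gradient $(x_t-y^\star)/s = x - M^\star(x)$ there.

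\emph{Step 3 (lower bound, $t>0$).} For every $z\in\Omega$,
\[
W_0^\star(x)=\Tc_t u(x)\le \tfrac{1}{2t}\norm{x-z}^2+u(z),
\]
so $u(z)\ge W_0^\star(x)-\tfrac1{2t}\norm{x-z}^2$. By Step~2, equality holds at $z=x_t$. The gradient of this minorant at $x_t$ is $(x-x_t)/t = x-M^\star(x)$. Sandwiching $u$ between the two smooth functions, both tangent at $x_t$ with the same gradient, gives differentiability of $u$ at $x_t$ with $\nabla u(x_t) = x - M^\star(x)$.

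\emph{Main obstacle: the endpoint $t=0$.} When $t=0$ the lower bound of Step~3 degenerates, because $\Tc_0$ is the identity, and we need differentiability of $W_0^\star$ itself at $x$. Step~2 still shows that $x - M^\star(x)$ is a supergradient, since $W_0^\star = \Tc_1W_1^\star$ is semiconcave. To obtain a matching subgradient, I would first try Brenier's theorem: $W_0^\star - \tfrac12\norm{\cdot}^2$ is concave and hence differentiable Lebesgue-a.e. Because $\src$ has a density, this gives differentiability for $\src$-almost every $x$, with the gradient identified by the supergradient. For this endpoint the claim would therefore be established $\src$-a.e., consistent with the paper's convention that equalities between functions hold almost everywhere, rather than literally for every $x\in\supp(\src)$.
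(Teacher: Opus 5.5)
Your argument for $t\in(0,1)$ is correct, but it takes a different route from the paper's. The paper fixes the evaluation point $x_t$ and uses dual feasibility $W_1^\star(y)\ge W_0^\star(x)-\frac12\norm{x-y}^2$ to obtain
\[
\tfrac{1}{2(1-t)}\norm{x_t-y}^2+W_1^\star(y)\ge W_0^\star(x)-\tfrac t2\norm{M^\star(x)-x}^2+\tfrac{t}{2(1-t)}\norm{y-M^\star(x)}^2 .
\]
From this it concludes that $M^\star(x)$ is the \emph{unique} minimizer in the inf-convolution defining $\Tc_{1-t}W_1^\star(x_t)$, and then invokes Danskin's theorem. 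You instead let the evaluation point vary and trap $u=\Tc_{1-t}W_1^\star$ between two paraboloids touching at $x_t$ with gradient $x-M^\star(x)$. The upper one comes from plugging $y=M^\star(x)$ into the infimum, the lower one is $W_0^\star(x)-\frac{1}{2t}\norm{x-z}^2$. The lower bound also follows directly from the same feasibility inequality by minimizing the resulting quadratic in $y$, so you do not actually need the semigroup property or the normalization $W_0^\star=\Tc_1W_1^\star$. Your version is self-contained and quantitative: for $z\in\Omega$ it gives $\abs{u(z)-u(x_t)-\langle x-M^\star(x),z-x_t\rangle}\le\max\{\frac{1}{2t},\frac{1}{2(1-t)}\}\norm{z-x_t}^2$. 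The paper's version outsources the analysis to Danskin, but in exchange shows that the minimizer is unique, i.e.\ that $M^\star(x)$ is the greedy step from $x_t$. Both proofs need pointwise complementary slackness at $(x,M^\star(x))$, and in both the restriction $t>0$ is essential.

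Your caution about $t=0$ is justified, and the paper's proof has a gap exactly there. For $t=0$ the coefficient $\frac{t}{2(1-t)}$ vanishes, so uniqueness of the minimizer is not established. In fact the pointwise claim is false. Take $\Omega=[-1,2]$, $\src=\tgt=\mathrm{Unif}[0,1]$, $M^\star=\mathrm{id}$ and $W_1^\star(y)=-\frac12\mathrm{dist}(y,[0,1])^2$. Then $W_0^\star=\Tc_1W_1^\star$ vanishes on $[0,1]$ and equals $-\frac12(x-1)(3-x)$ on $(1,2]$. The pair is dual optimal (value $0$), yet $W_0^\star$ has left derivative $0$ and right derivative $-1$ at $x=1\in\supp(\src)\cap\text{int}(\Omega)$. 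So the $\src$-a.e.\ version you propose is the right statement at $t=0$.

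One small repair to your Step~1: you do not need continuity of $M^\star$, which the paper asserts but which Brenier maps generally lack (e.g.\ when $\supp(\tgt)$ is disconnected). The set $\{(x,y): W_0^\star(x)=\frac12\norm{x-y}^2+W_1^\star(y)\}$ is closed and has full $\gamma^\star$-measure, hence contains $\supp(\gamma^\star)$. So it suffices that $(x,M^\star(x))\in\supp(\gamma^\star)$. This holds for $\src$-a.e.\ $x$, and after modifying $M^\star$ on a null set, for every $x\in\supp(\src)$.
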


\begin{proof}
    Let $x\in\supp(\src)$ and set $x_t = (1-t)x + t M^\star(x)$. Recall that $\Tc_{1-t} W_1^\star(x_t) = 
\min_{y\in\Omega}\br{\frac{1}{2(1-t)}\norm{x_t-y}^2 + W_1^\star(y)}$. Now for all $y\in\Omega$, we have $W_1^\star(y) 
\geq W_0^\star(x) - \frac{1}{2}\norm{x-y}^2$ by feasibility of $W^\star$. Hence 
    \begin{align*}
        \frac{1}{2(1-t)}\norm{x_t-y}^2 + W_1^\star(y) \geq& W_0^\star(x) + \frac{1}{2(1-t)}\norm{x_t-y}^2 - \frac{1}{2}\norm{x-y}^2 \\
        =& W_0^\star(x) - \frac{t}{2}\norm{M^\star(x)-x}^2 + \frac{t}{2(1-t)}\norm{y-M^\star(x)}^2,
    \end{align*}
    where the last equality simply follows from noticing $x_t-y=-((y-M^\star(x)) + (1-t)(M^\star(x)-x))$ and 
$x-y=-((M^\star(x)-x)+(y-M^\star(x)))$ and expanding the squares. The final term is non-negative and zero if and only 
if $y=M^\star(x)$. Furthermore, the above inequality is an equality in this case by optimality of $W^\star$ (c.f. 
\cref{thm:det-policies}). Hence $M^\star(x)$ is the unique minimizer in the definition of $\Tc_{1-t}W_1^\star(x_t)$. 
Hence, by Danskin's theorem, $\Tc_{1-t}W_1^\star$ is differentiable at $x_t$ if $x_t\in \text{int}(\Omega)$, with 
    \begin{align*}
        \nabla_x \Tc_{1-t} W_1^\star(x_t) = \frac{1}{1-t}(x_t-M^\star(x)) = x - M^\star(x).
    \end{align*} 
    This concludes the proof.
\end{proof}

From \cref{lemma:aux-bb-2,lemma:aux-bb-3} we can immediately deduce the following.

\begin{corollary} \label{lemma:final-bb}
    Let $(V^{\star,H_k}_{h_k})_{k\geq1}$ be the optimal value functions for \eqref{eq:dual-lp} from 
\cref{lemma:aux-bb-2} such that $h_k/(H_k+1) \to t \in [0,1)$ as $k\to\infty$. Let $M^\star$ be an optimal solution for 
the static OT problem \eqref{eq:monge-reprise}. Then, for any $x\in\supp(\src)$ such that $x_t=(1-t)x + t M^\star(x) 
\in \text{int}(\Omega)$, we have $V^{\star,H_k}_{h_k}(x_t) \to V^\infty_t(x_t)$ ($k\to\infty$) for a differentiable 
function $V^\infty_t$ with
    \begin{align*}
        \nabla V^\infty_t(x_t) = x - M^\star(x).
    \end{align*}
\end{corollary}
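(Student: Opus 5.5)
The natural choice is $V^\infty_t := \Tc_{1-t} W_1^\star$, with $W^\star$ the optimal static dual pair fixed in \cref{lemma:aux-bb-1}. The corollary then splits into a convergence part, which is exactly \cref{lemma:aux-bb-2}, and a differentiability part, which is exactly \cref{lemma:aux-bb-3}. The proof is mainly a matter of checking that the hypotheses line up.

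First, the sequence $(V^{\star,H_k}_{h_k})_{k\geq1}$ is by assumption the one constructed in \cref{lemma:aux-bb-1}, namely $V^{\star,H_k}_{h_k} = \Tc_{1-h_k/(H_k+1)} W_1^\star$, and we have $h_k/(H_k+1)\to t\in[0,1)$. \Cref{lemma:aux-bb-2} gives pointwise convergence $V^{\star,H_k}_{h_k}(z)\to \Tc_{1-t}W_1^\star(z)$ for every $z\in\Omega$. In particular this holds at $z = x_t$, which lies in $\Omega$ because $\Omega$ is convex and both $x\in\supp(\src)\subset\Omega$ and $M^\star(x)\in\Omega$. This gives $V^{\star,H_k}_{h_k}(x_t)\to V^\infty_t(x_t)$.

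Second, \cref{lemma:aux-bb-3} applies to this same $x$ and $t$, since $x\in\supp(\src)$, $t\in[0,1)$ and $x_t\in\text{int}(\Omega)$. It shows that $\Tc_{1-t}W_1^\star = V^\infty_t$ is differentiable at $x_t$ with $\nabla V^\infty_t(x_t) = x - M^\star(x)$, which is the claimed formula.

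The only point needing care is the word ``differentiable function'' in the statement. \Cref{lemma:aux-bb-3} guarantees differentiability only at points $x_t$ that lie on interpolation segments inside $\text{int}(\Omega)$. I will read the corollary in that sense: $V^\infty_t$ is differentiable at every such $x_t$. If differentiability almost everywhere on $\Omega$ is wanted instead, one can add that $\Tc_{1-t}W_1^\star$ is semiconcave, being $\frac{1}{2(1-t)}\norm{\cdot}^2$ minus a convex function, and hence differentiable a.e. by Alexandrov/Rademacher. I would mention this as a remark rather than make it the main argument.

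Note that $t<1$ is needed so that the scale $1-t>0$ in \cref{lemma:aux-bb-3} and Danskin's theorem applies. The case $t=1$ is correctly excluded by the hypothesis.
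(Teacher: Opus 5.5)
Your proposal is correct and matches the paper, which likewise takes $V^\infty_t=\Tc_{1-t}W_1^\star$ and deduces the corollary immediately from \cref{lemma:aux-bb-2} (pointwise convergence) and \cref{lemma:aux-bb-3} (differentiability at $x_t$ with gradient $x-M^\star(x)$). One small point: you do not need the convexity argument for $x_t\in\Omega$, since $x_t\in\text{int}(\Omega)$ is already a hypothesis.
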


As shown by \citet{benamou2000computational}, the optimal solution to the 
dynamic OT problem~\eqref{eq:benamou-brenier} is exactly given by the vector field $u_t$ mapping $x_t$ to 
$M^\star(x)-x$ in the notation of \cref{lemma:final-bb}. Thus, the corollary above implies that the optimal value 
functions of the dynamic dual problem \eqref{eq:dual-lagr} converge (pointwise) to a well-defined limit that is differentiable, and whose gradient corresponds to the (negative of the) optimal vector field in the Benamou--Brenier formulation---thus connecting the solutions of the discrete-time and continuous-time dynamic OT problems.

\section{Further implementation details}\label{app:implementation-details}
We now discuss the handful of additional details about the practical implementation of our training and 
generation methods that did not fit into Sections~\ref{sec:VDT_alg} and~\ref{sec:prediction}.

\subsection{Details of VDT training}
The two main components of our VDT training algorithm are the primal and dual updates. These are performed in our 
practical implementation as follows.

\paragraph{Primal updates.} The updates are performed according to Equation~\eqref{eq:particle_update} for a total of 
$K$ steps with a constant stepsize. In addition to the gradient updates, we have found it useful to add a small amount 
of Gaussian noise to each update, partly to fight the non-convexity of the primal objective. Indeed, note that the 
particles move along gradients of a function that is inherently non-convex with respect to the state.
Additionally, due to the well-known connection between such noisy Wasserstein gradient descent steps and 
entropy-regularized sampling \citep{JKO98,WT11}, this step can be seen as implicitly adding a small amount of entropy 
regularization to the primal objective in terms of $\mu$. We note though that the type of entropy regularization that 
this step adds to our OT problem is distinct from the regularization appearing in the famous Schr\"odinger bridge 
problem \citep{Sch31} that serves as the basis of many modern generative models. Instead, our regularization acts on 
the marginal distributions $\mu_h$ as opposed to the joint distributions of trajectories, which is known to achieve 
distinct regularization effects in Markov decision processes---see \citet{NJG17} for more details.

For initializing the particles, our preferred choice is to compute an optimal-transport coupling of the sample 
$(\Xsrc(i),\Xtgt(i))_{i=1}^b$ in the minibatch. For this purpose, since the two particle clouds are of identical size, we can 
simply employ a standard solver for deterministically matching the two sets. In our implementation, we use the 
classic Hungarian algorithm for solving this problem (as implemented in the \texttt{linear\_sum\_assignment} function 
of SciPy \citep{scipy}).

\paragraph{Dual updates.} In all our experiments, we use the stochastic gradients computed in 
Equation~\eqref{eq:dual_update} to update parameters using the Adam optimization algorithm \cite{KB14}.

\subsection{Details of VDT prediction}\label{app:vdt_pred_details}
Generating samples using VDT policies is straightforward via Algorithm~\ref{alg:VDT_prediction}. We provide a few 
additional details about the enhancements of this subroutine below.

\paragraph{Few-step generation.} For the purpose of few-step generation, we parametrize our value functions to take 
inputs in the unit interval, by normalizing the value of $h$ as $t_h = \frac{h}{H+1}$. This is straightforward to 
incorporate in both the training and generation subroutines (and in fact, Algorithm~\ref{alg:VDT_prediction} already 
adopts this notational convention). Choosing the prediction horizon $\Htest$ can be seen to be analogous to choosing 
the hyperparameters of the numerical integration subroutine required by all continuous time models (such as diffusion 
and flow-based models). In light of our results in Section~\ref{app:few-step}, our choice can be seen as a simple 
forward Euler discretization of the continuous-time dynamic OT solution path, which suggests that one can possibly 
incorporate other advanced ideas from numerical integration into our generation subroutine. We did not pursue this 
direction in the present paper.

\paragraph{Classifier-free guidance.}
Classifier-free guidance (CFG,\citep{HS22}) is a popular technique for improving the sample quality of conditional 
generative models. At a high level, a small fraction of samples is used for unconditional training, and at inference 
time the sampling direction of the unconditional model is corrected by a scaled version of the difference between 
proposed directions of the conditional and unconditional model. This idea can be easily incorporated into our setup by 
adding the same correction term to the VDT generation process, in particular by changing the generation steps for a \emph{guidance scale} 
$\alpha > 0$ as
\[
 X_{h+1} = X_h - \frac{1}{H+1} \pa{\alpha \nabla_x V^{\text{conditional}}_h(X_h) + (1-\alpha) \nabla_x 
V^{\text{unconditional}}_h(X_h)}.
\]
In particular, no changes to conditional training are needed except for not labeling a small fraction 
($p_{\text{uncond}}=0.2$ in our experiments) of training samples.

\section{Experiments}
This section provides further details on our experiments, as well as some additional results that were omitted from 
the main text. For each setting, we will describe all the hyperparameters (neural network architecture for the value 
functions and optimizer settings), the data sets and the evaluation procedures. 

All experiments were executed on two academic-size clusters with GPU nodes. In particular, 2 NVIDIA H100 GPUs ($\sim$ 
96GB of available GPU memory) from one HPC cluster and 2 NVIDIA L40s from another one ($\sim$45GB of available GPU 
memory). Each experiment was run on a single GPU, and the more compute-intensive experiments were performed on a single 
NVIDIA H100 (whose memory was never used to completion by our experiments).

\subsection{2D experiments}\label{app:more-2d-experiments}
We implement our method using a three-layer neural network with 64 units per layer to represent a value function, 
and add a $32$-dimensional time embedding layer to encode the time input $h$. We tune the remaining hyperparameters as 
follows. For training, we set the batch size as $B = 100$ and perform $T = 20,000$ dual updates using Adam with 
hyperparameters $\eta_0 = 10^{-4}$, $\beta_1 = 0.9$, $\beta_2 = 0.999$ and $\epsilon = 10^{-8}$. Within the inner loop, 
we perform $K = 5$ updates with constant stepsize $\gamma = 0.5$ and a noise scale of $\alpha = 10^{-3}$.

We use the following data sets, all taken from \citet{TMF+23b} (with some additional details extracted from 
\citet{SdBCD23} and the associated code repositories):
\begin{itemize}
 \item ``moons'': We use the standard ``two moons'' data set as implemented in scikit-learn \citep{scikit-learn}, with 
noise parameter set to $0.05$ and each datapoint $x$ transformed as $3x - 1$ for scaling purposes.
 \item ``scurve'': We use the standard ``scurve'' data set as implemented in scikit-learn \citep{scikit-learn}, with 
noise parameter set to $0.05$ and each datapoint $x$ transformed as $1.5 x$ for scaling purposes.
 \item ``8gauss'': We set up eight Gaussian distributions with means evenly spaced on a circle of radius $5$ centered 
at the origin. The standard deviation of each Gaussian is set to $0.1$.
 \item ``moons-8gauss'': We use the ``moons'' and ``8gauss'' datasets described above, scaled up with a factor $2$.
\end{itemize}

For evaluation, we closely follow the setup of \citet{SdBCD23}. We set the sample size as $n = 10,000$ and repeat each 
experiment $5$ times and report the means along with the standard deviations in Table~\ref{table:2d_result_full}. For 
computing the Wasserstein distances (necessary both for evaluating generation quality and computing the oracle solution 
for the path energies), we use the Hungarian algorithm over a fresh sample batch of $10,000$ samples. 

All competing methods are based on integrating continuous-time ODEs or SDEs, and thus require a numerical method for 
approximating the learned transport maps. The numbers we report are based on a forward Euler discretization for $20$ 
steps. Our method does not explicitly require tuning such a hyperparameter, although the role of $\Htest$ for few-step 
generation is similar. For a well-trained model, setting $\Htest = 10$ worked nearly as well in our experiments as 
using the default choice $\Htest = H = 100$.

\begin{figure}[h]
\centering
\setlength{\tabcolsep}{5pt}
\renewcommand\arraystretch{0.8}
\vspace{-0.3cm}
\scalebox{0.63}{
    \begin{tabular}{ccccc}
    \toprule 
     & \multicolumn{4}{c}{\textit{Wasserstein-2 distance from target}}\tabularnewline
    \cmidrule{2-5} \cmidrule{3-5} \cmidrule{4-5} \cmidrule{5-5} 
    \textit{Dataset} & moons & scurve & 8gaussians & moons-8gaussians\tabularnewline
    \midrule[1pt]
100 step VDT+ & \textbf{0.131\small{\textpm  0.034}} &  \textbf{0.120\small{\textpm  0.013}}& 0.435\small{\textpm  
0.123} 
& \textbf{0.652 \small{\textpm  0.151}} \tabularnewline
50 step VDT+ & \textbf{0.131\small{\textpm  0.034} }&  \textbf{0.120\small{\textpm  0.012}}& 0.434 \small{\textpm  
0.123} 
& \textbf{0.646 \small{\textpm  0.148} }\tabularnewline
20 step VDT+ & \textbf{0.131\small{\textpm  0.032}} &  \textbf{0.122\small{\textpm  0.013}}& 0.430\small{\textpm  
0.115} 
& \textbf{0.634 \small{\textpm  0.131}} \tabularnewline
10 step VDT+ &  \textbf{0.132 \small{\textpm  0.029}} &  0.125\small{\textpm  0.013}& 0.424\small{\textpm  0.111} 
& \textbf{0.626 \small{\textpm  0.097 }} \tabularnewline
5 step VDT+ & 0.135 \small{\textpm  0.022} &  0.136\small{\textpm  0.013}& 0.430 \small{\textpm  0.089} 
& \textbf{0.623\small{\textpm  0.065}}\tabularnewline
2 step VDT+ & 0.160 \small{\textpm  0.008} &  0.178 \small{\textpm  0.012}& 0.530\small{\textpm  0.050} 
& 0.783\small{\textpm  0.046}\tabularnewline
1 step VDT+ & 0.229 \small{\textpm  0.008} &  0.262\small{\textpm  0.004}& 0.809\small{\textpm  0.025} 
& 1.365\small{\textpm  0.039}\tabularnewline
\midrule
100 step VDT & 0.219\small{\textpm  0.086} &  0.208\small{\textpm  0.016}& 0.547\small{\textpm  0.120} 
& 1.205 \small{\textpm  0.125} \tabularnewline
50 step VDT & 0.221 \small{\textpm  0.085} &  0.205\small{\textpm  0.015}& 0.546\small{\textpm  0.110} 
& 1.229 \small{\textpm  0.1} \tabularnewline
20 step VDT & 0.242 \small{\textpm  0.072} &  0.213\small{\textpm  0.030}& 0.545 \small{\textpm  0.102} 
& 1.283\small{\textpm  0.062} \tabularnewline
10 step VDT & 0.307\small{\textpm  0.035} &  0.273\small{\textpm  0.058}& 0.565\small{\textpm  0.073 } 
& 1.360 \small{\textpm  0.077 } \tabularnewline
5 step VDT & 0.45 \small{\textpm  0.046} &  0.457\small{\textpm  0.052 }& 0.724 \small{\textpm  0.098} 
& 1.553\small{\textpm  0.099}\tabularnewline
2 step VDT & 0.8 \small{\textpm  0.024} &  0.958 \small{\textpm  0.084}& 1.504\small{\textpm  0.209} 
& 2.757 \small{\textpm  0.043 }\tabularnewline
1 step VDT & 1.497 \small{\textpm  0.046} &  1.803\small{\textpm  0.048}& 4.152 \small{\textpm  0.099} 
& 6.498 \small{\textpm  0.058}\tabularnewline
    \midrule
    SF$^2$M+ & \textbf{0.124\small{\textpm  0.023}} & \textbf{0.128\small{\textpm  0.005}} & 
\textbf{0.275\small{\textpm  0.058}} & 
{0.726\small{\textpm  0.137}}\tabularnewline
    SF$^2$M & {0.185\small{\textpm  0.028}} & {0.201\small{\textpm  0.062}} & {0.393\small{\textpm  0.054}} & 
{1.482\small{\textpm  0.151}}\tabularnewline
     DSBM-IPF & 0.140\textpm 0.006 & 0.140\textpm 0.024 & 0.315\textpm 0.079 & \textit{0.812\textpm 
 0.092}\tabularnewline
    DSBM-IMF++ & \textbf{0.123\small{\textpm  0.014}} & \textbf{0.130\small{\textpm  0.025}} & 
\textbf{0.276\small{\textpm  0.030}} & 0.802\small{\textpm  0.172} \tabularnewline
    DSBM-IMF & 0.144\small{\textpm  0.024 }& 0.145\small{\textpm  0.037 }& 0.338\small{\textpm  0.091 }& 
0.838\small{\textpm  0.098}\tabularnewline
    DSB & 0.190\textpm 0.049 & 0.272\textpm 0.065 & 0.411\textpm 0.084 & 0.987\textpm 0.324\tabularnewline
    SB-CFM & \textit{0.129\textpm 0.024} & \textit{0.136\textpm 0.030} & \textbf{0.238\textpm 0.044} & 0.843\textpm 
0.079\tabularnewline
    FM & 0.212\textpm 0.025 & 0.161\textpm 0.033 & 0.351\textpm 0.066 & -\tabularnewline
    CFM & 0.215\textpm 0.028 & 0.171\textpm 0.023 & 0.370\textpm 0.049 & \textit{1.285\textpm 0.314}\tabularnewline
    OT-CFM+ & \textbf{0.130\small{\textpm  0.016}} & {0.144\small{\textpm  0.028}} & {0.303\small{\textpm  
0.043}} & \textbf{0.601\small{\textpm  0.027}}\tabularnewline
RF & 0.283\small{\textpm  0.045}& {0.345\small{\textpm  0.079}} & {0.421\small{\textpm  0.071}} & 
1.525\small{\textpm 0.330}\tabularnewline
    \midrule
    oracle & - & - & - & 
- \tabularnewline
    \bottomrule
    \end{tabular}
}
\scalebox{0.63}{
    \begin{tabular}{ccccc}
    \toprule 
    \multicolumn{4}{c}{\textit{Path energy}}\tabularnewline
    \midrule
    moons & scurve & 8gaussians & moons-8gaussians\tabularnewline
    \midrule[1pt] 
1.238 \small{\textpm  0.059} &  \textbf{1.629\small{\textpm  0.027}}& \textbf{14.386\small{\textpm  0.135}} 
& \textbf{30.444 \small{\textpm  0.311}}\tabularnewline
1.238 \small{\textpm  0.059} &  \textbf{1.629 \small{\textpm  0.027}}&  \textbf{14.380 \small{\textpm  0.136} }
& \textbf{30.400 \small{\textpm  0.310}}+\tabularnewline
1.237 \small{\textpm  0.059} &  \textbf{1.627\small{\textpm  0.026}}& \textbf{14.354 \small{\textpm  0.138} }
&  \textbf{30.258  \small{\textpm  0.301}}\tabularnewline
1.236 \small{\textpm  0.058} &  \textbf{1.623\small{\textpm  0.026}}& \textbf{14.290\small{\textpm  0.148}}
& \textbf{29.989\small{\textpm  0.278}}\tabularnewline
1.228\small{\textpm  0.057} &  1.606\small{\textpm  0.026}& 14.067\small{\textpm  0.152} 
& 29.376 \small{\textpm  0.231}\tabularnewline
\textbf{1.184 \small{\textpm  0.062}} &  1.531\small{\textpm  0.028}& 13.025\small{\textpm  0.145} 
& 27.194\small{\textpm  0.177}\tabularnewline
1.066 \small{\textpm  0.073} &  1.358 \small{\textpm  0.026}& 10.817 \small{\textpm  0.254} 
& 22.679 \small{\textpm  0.333}\tabularnewline
    \midrule
2.416 \small{\textpm  0.096} &  3.051\small{\textpm  0.084}& 17.799\small{\textpm  0.328} 
& 73.001 \small{\textpm  2.228}\tabularnewline
2.403 \small{\textpm  0.096} &  3.043\small{\textpm  0.082}&  17.822 \small{\textpm  0.324} 
& 73.487 \small{\textpm  2.249}\tabularnewline
2.366 \small{\textpm  0.092} &  3.016 \small{\textpm  0.072}& 17.886 \small{\textpm  0.326} 
&  74.677  \small{\textpm  2.099}\tabularnewline
2.272 \small{\textpm  0.073} &  2.955 \small{\textpm  0.07}& 18.046\small{\textpm  0.332} 
& 76.488 \small{\textpm  2.682}\tabularnewline
2.07\small{\textpm  0.098} &  2.761 \small{\textpm  0.062}& 18.282\small{\textpm  0.361} 
& 78.488 \small{\textpm  3.955 }\tabularnewline
1.324 \small{\textpm  0.056} &  1.432 \small{\textpm  0.094}& 18.115\small{\textpm  2.170} 
& 66.712 \small{\textpm  3.355}\tabularnewline
1.154 \small{\textpm  0.061} &  0.82 \small{\textpm  0.036}& 0.419 \small{\textpm  0.072} 
& 43.230 \small{\textpm  4.396}\tabularnewline
    \midrule 
    \textbf{1.183\small{\textpm  0.043}} & \textbf{1.686\small{\textpm  0.039}} & {14.66\small{\textpm  0.173}} & 
\textbf{31.36\small{\textpm 0.930}}\tabularnewline
{2.08 \small{\textpm  0.146}} & {3.01\small{\textpm  0.173}} & {16.74\small{\textpm  0.274}} & {107.3\small{\textpm  
9.695}}\tabularnewline
     1.598\small{\textpm 0.034 }& \textit{2.110\small{\textpm 0.059}} & 14.91\small{\textpm 0.310} & 42.16\small{\textpm
1.026}\tabularnewline
{1.594\small{\textpm  0.043}} & 2.116\small{\textpm  0.018} & {14.88\small{\textpm  0.252}} & {41.09\small{\textpm  
1.206} }
\tabularnewline
    {1.580\small{\textpm  0.036}} & {2.092\small{\textpm  0.053}} & \textbf{14.81\small{\textpm 0.255}} & 
{41.00\small{\textpm 1.495}}\tabularnewline
    - & - & - & - \tabularnewline
    1.649\small{\textpm 0.035} & 2.144\small{\textpm 0.044 }& 15.08\small{\textpm 0.209} & 45.69\small{\textpm 0.661 }
\tabularnewline
    2.227\small{\textpm 0.056} & 2.950\small{\textpm 0.074} & 18.12\small{\textpm 0.416 }& -\tabularnewline
    2.391\small{\textpm 0.043} & 3.071\small{\textpm 0.026}& 18.00\small{\textpm 0.090 }& 116.5\small{\textpm 
2.633}\tabularnewline
    \textbf{1.216\small{\textpm  0.01}} & \textbf{1.675\small{\textpm  0.019}} & {14.88\small{\textpm  0.389}} & 
\textbf{30.47\small{\textpm 0.300}}\tabularnewline 
{1.269\small{\textpm  0.088}} & {1.793\small{\textpm  0.107}} & {15.06\small{\textpm  0.447}} & 
{36.11\small{\textpm 2.701}}\tabularnewline 
\midrule
{1.123\small{\textpm  0.01}} & {1.631\small{\textpm  0.03}} & {14.43\small{\textpm  0.045}} & {30.02\small{\textpm  
0.076}}\tabularnewline
    \bottomrule
    \end{tabular}
}
\captionof{table}{Sampling quality as measured by Wasserstein-2 distance to target and path energy for the 2D 
experiments. We report the mean and standard deviation of 5 independent runs, with best results highlighted in bold 
(somewhat generously, to help guide the attention of the reader). Results for all competing methods taken from 
\citet{TMF+23b}. Methods marked with ``+'' use an OT coupling for forming
minibatches, and methods using a full OT plan over the entire data set for initialization are marked with ``++''.}
\label{table:2d_result_full}
\end{figure}

\subsection{MNIST experiments}\label{app:more-mnist-experiments}
In this set of experiments, we work with the classic MNIST data set \citep{MNIST}, comprised of $28\times28$ greyscale 
images of handwritten digits, represented as vectors with dimension $d = 784$. For all experiments, we use a 
convolutional neural network with $832,289 $ trainable parameters. For VDT training, we set the batch size as $B = 128$ 
and perform $T = 20,000$ dual updates using AdamW with 
hyperparameters $\eta_0 = 10^{-3}$, $\beta_1 = 0.0$, $\beta_2 = 0.999$ and $\epsilon = 10^{-8}$, and a weight decay 
parameter of $10^{-2}$. Within the inner loop, we perform $K = 5$ updates with constant stepsize $\gamma = 0.25$ and a 
noise scale of $\alpha = 10^{-3}$.

\paragraph{Paired deblurring.} For the deblurring experiments, we downsample the original $28\times28$ MNIST images to 
dimension $m\times m$ and linearly upsample them back to the original full dimension. The resulting blurred image $\Xsrc(i)$ 
is paired with its orginal counterpart $\Xtgt(i)$ during VDT training. We train a model for $m=6,10,14$ each and evaluate 
the result on a holdout set of unseen blurred MNIST images. In addition to the results shown in the 
main paper, further results are presented on Figures~\ref{fig:more-deblur1}--\ref{fig:more-deblur3}.

\begin{figure}[t]
\centering
\includegraphics[width=.4\linewidth]{
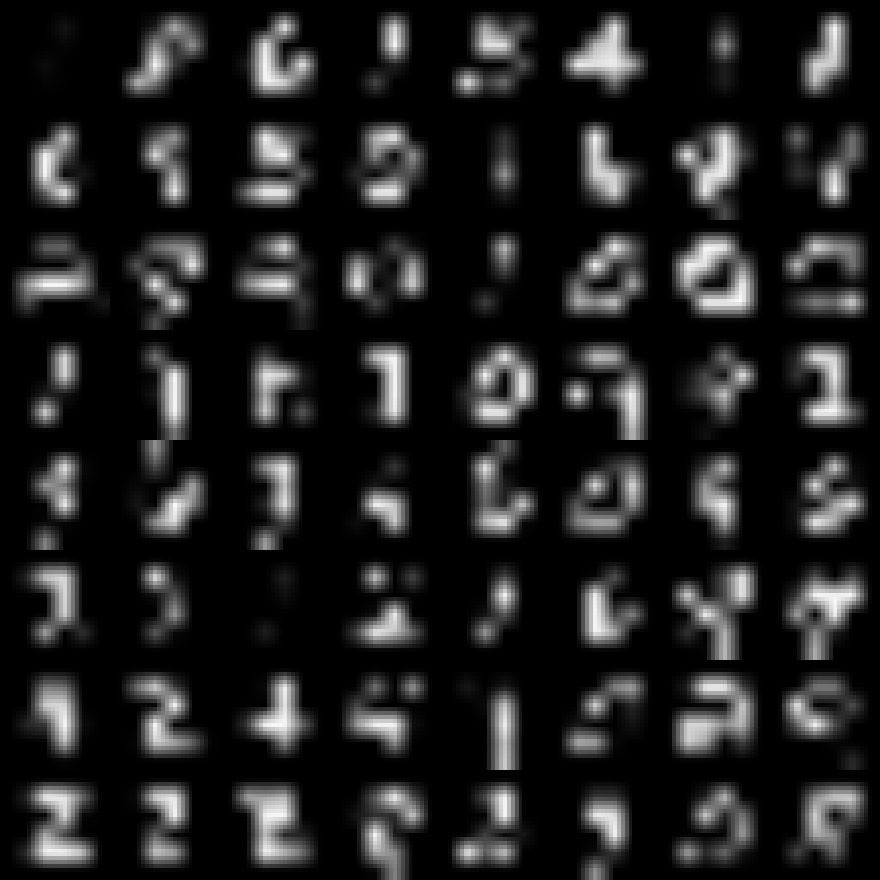} 
\hspace{.5cm}
\includegraphics[width=.4\linewidth]{
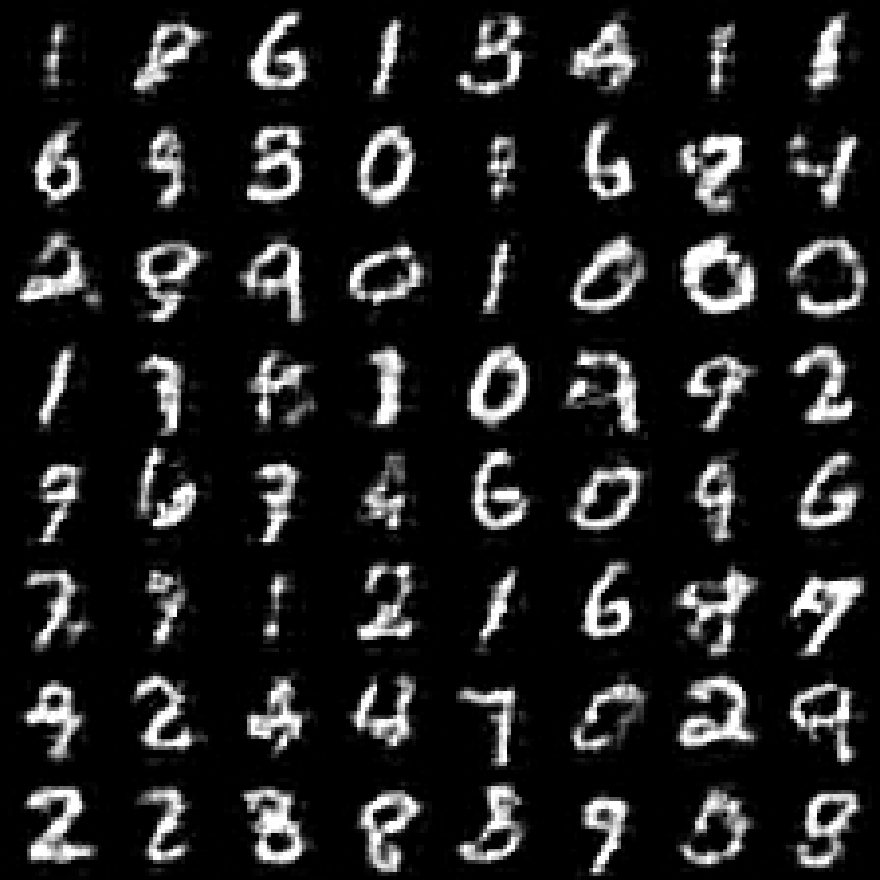}
\caption{Downscaled images and their deblurred counterparts produced by VDT, $m=6$.}\label{fig:more-deblur1}
\end{figure}

\begin{figure}[t]
\centering
\includegraphics[width=.4\linewidth]{
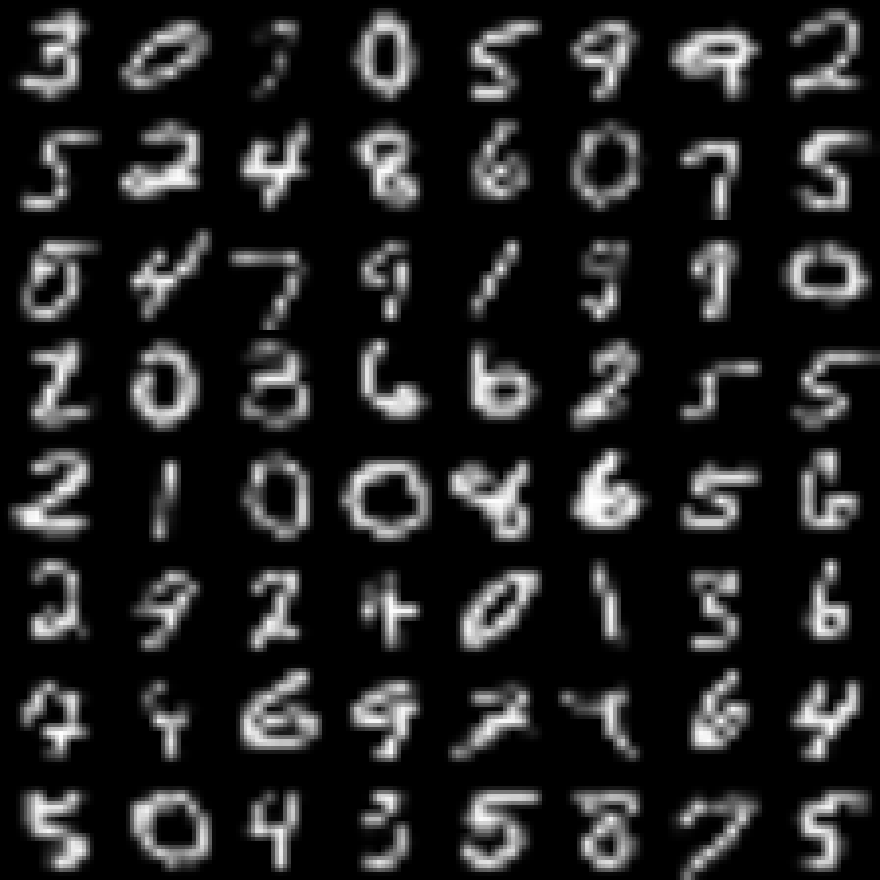} 
\hspace{.5cm}
\includegraphics[width=.4\linewidth]{
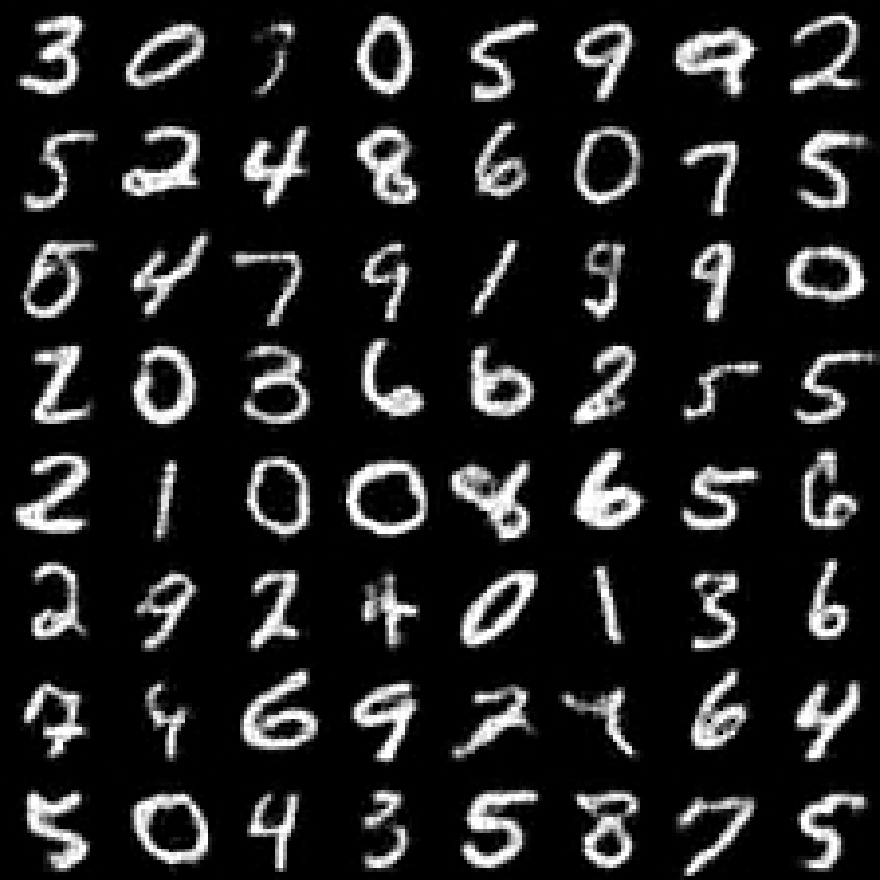}
\caption{Downscaled images and their deblurred counterparts produced by VDT, $m=10$.}\label{fig:more-deblur2}
\end{figure}

\begin{figure}[t!]
\centering
\includegraphics[width=.4\linewidth]{
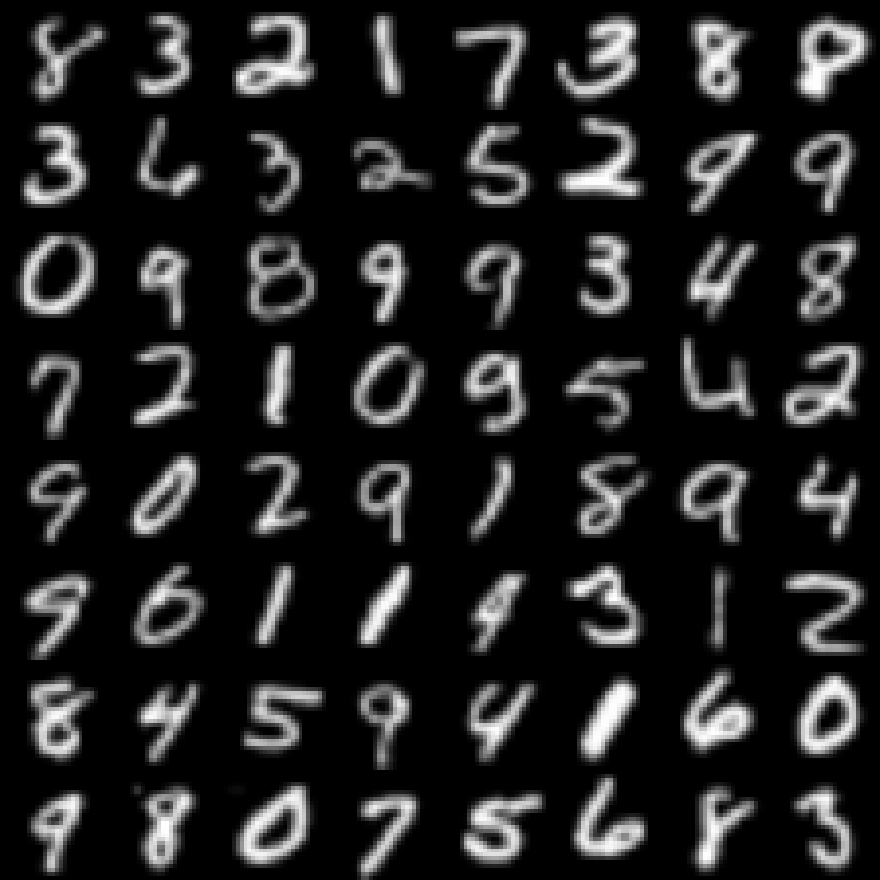} 
\hspace{.5cm}
\includegraphics[width=.4\linewidth]{
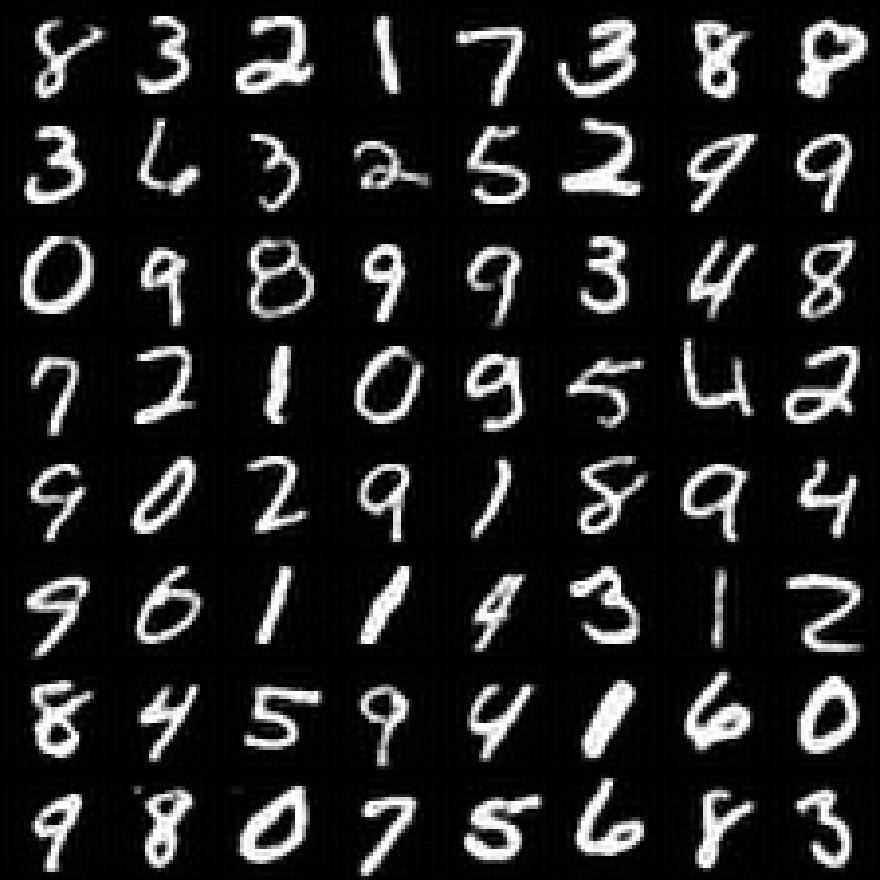}
\caption{Downscaled images and their deblurred counterparts produced by VDT, $m=14$.}\label{fig:more-deblur3}
\end{figure}

\paragraph{Unpaired letter to digit translation.} We next consider a task without naturally available pairings, namely 
the one of translating handwritten letters (`a'-`e' and `A'-`E' from EMNIST \citep{EMNIST}) to handwritten digits (0-9 
from MNIST). We train a VDT model on unpaired minibatches of letters and digits, and evaluate the result on a holdout 
set of unseen letters in \cref{fig:letter_digit}. While the sharpness of the generated digits could likely be improved 
through heavier training, they clearly exhibit the desired resemblance of the source letters. In short, VDT is able to 
learn a nontrivial relation between the domains without explicit supervision. Notably, this relation naturally allows 
for generation along the reverse path (i.e. from digits to letters) without any additional training, as we remarked 
previously. Additionally, the straightness of the VDT paths allow for few-step translation between the two 
distribution, with some example outputs shown in the main text. We show some additional examples in 
Figures~\ref{fig:more-unpaired1} and~\ref{fig:more-unpaired2}.

\begin{figure}
\centering
\includegraphics[width=.4\linewidth]{
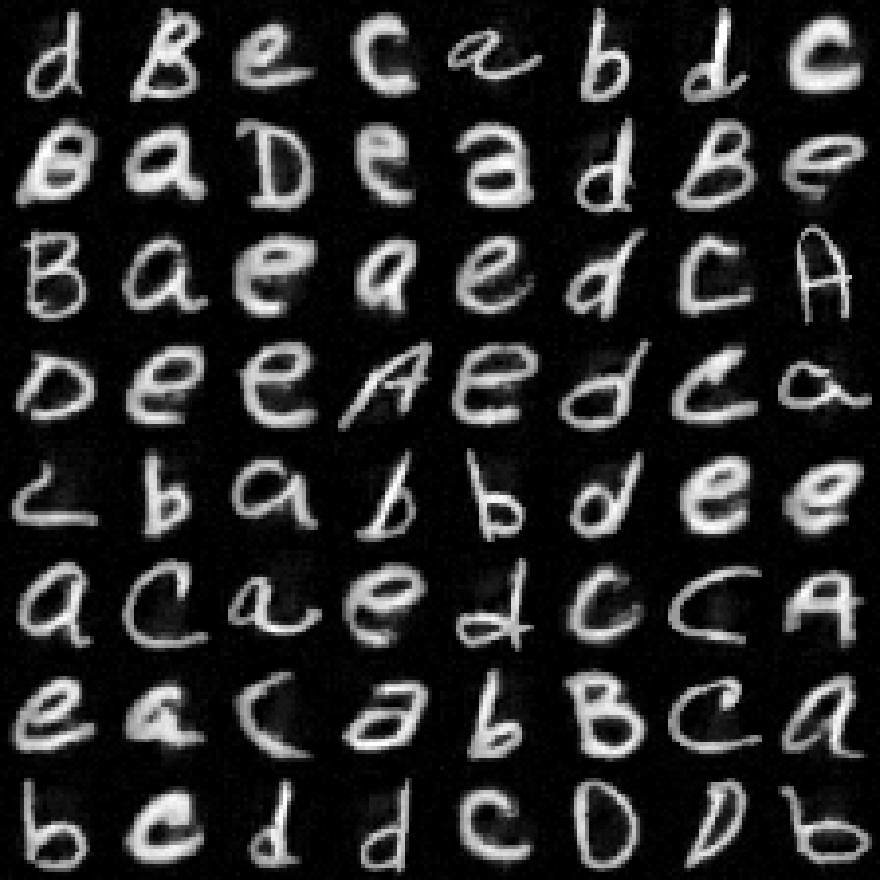} 
\hspace{.5cm}
\includegraphics[width=.4\linewidth]{
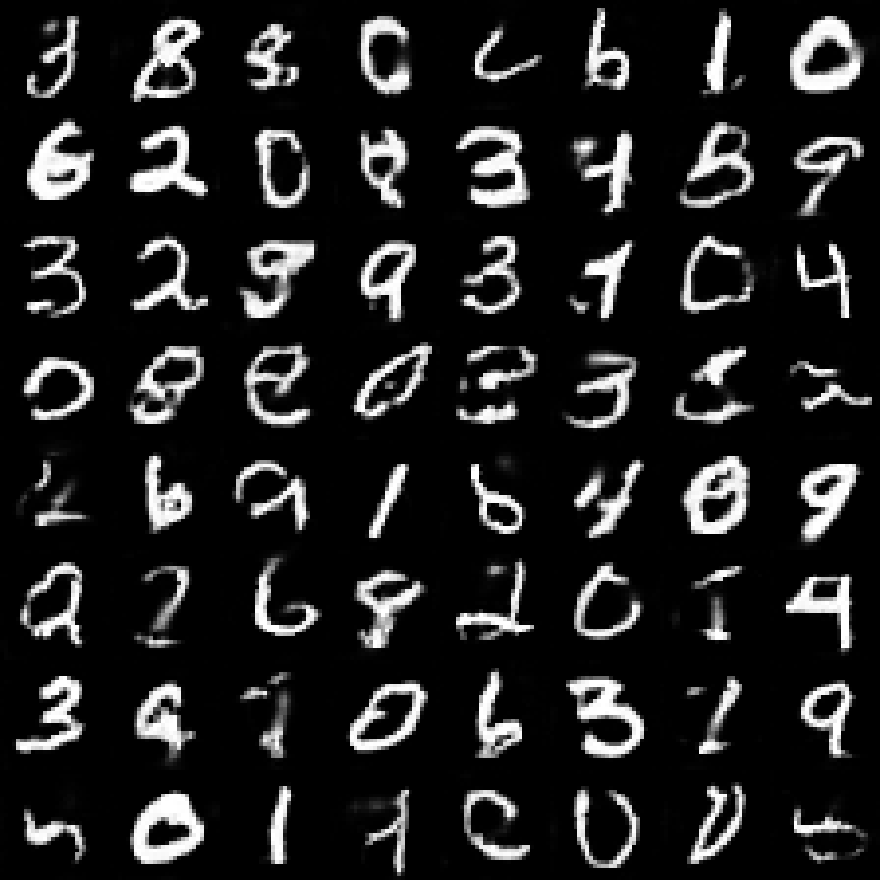}
\caption{Forward sampling from a  VDT policy from EMNIST to MNIST.}\label{fig:more-unpaired1}
\end{figure}

\begin{figure}
\centering
\includegraphics[width=.4\linewidth]{
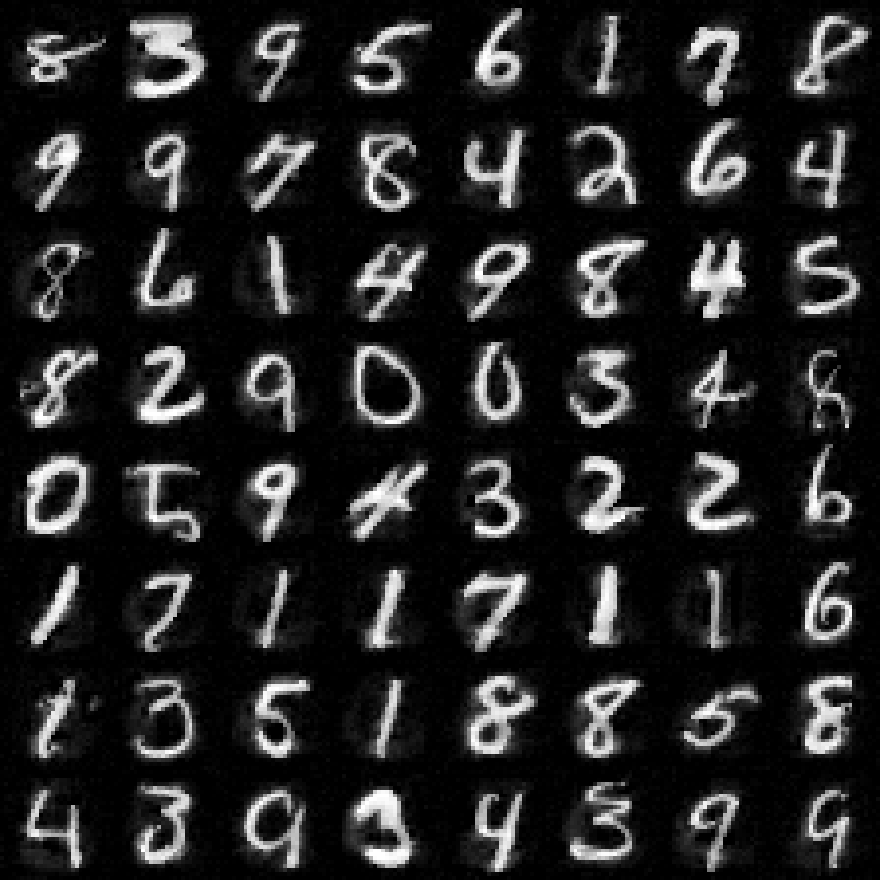} 
\hspace{.5cm}
\includegraphics[width=.4\linewidth]{
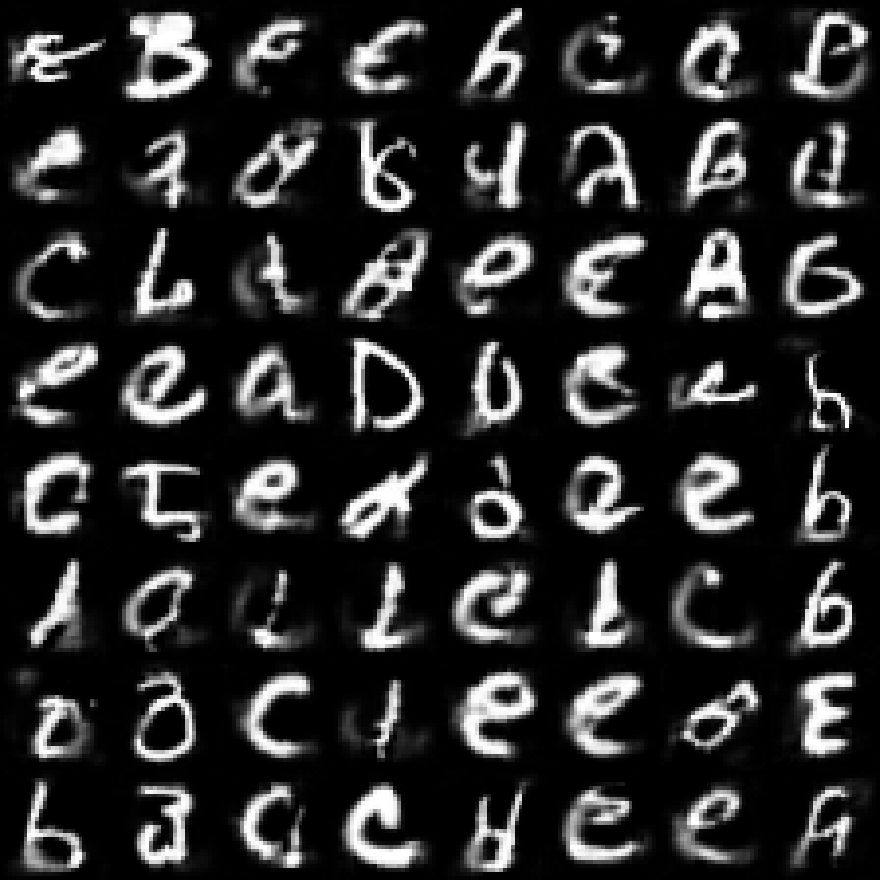}
\caption{Reverse sampling from a  VDT policy from MNIST to EMNIST.}\label{fig:more-unpaired2}
\end{figure}

\paragraph{Conditional generation of MNIST digits} \label{app:more-mnist-conditional}
We apply the CFG procedure explained in Appendix~\ref{app:implementation-details} to our setting for generating MNIST 
images augmented with their class labels. Results for various scale parameters are shown on 
Figure~\ref{fig:more-conditional}. The results indicate that CFG affects the generation 
quality in the same way as it does for diffusion models: we obtain enhanced image quality at the expense of smaller 
sample diversity.

\begin{figure}
\center
\begin{centering}
\includegraphics[width=.45\linewidth]{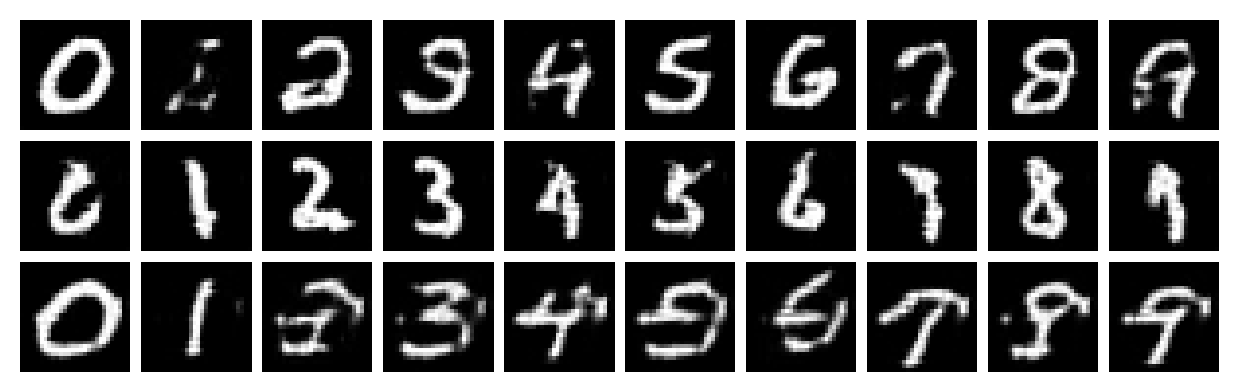}
\includegraphics[width=.45\linewidth]{
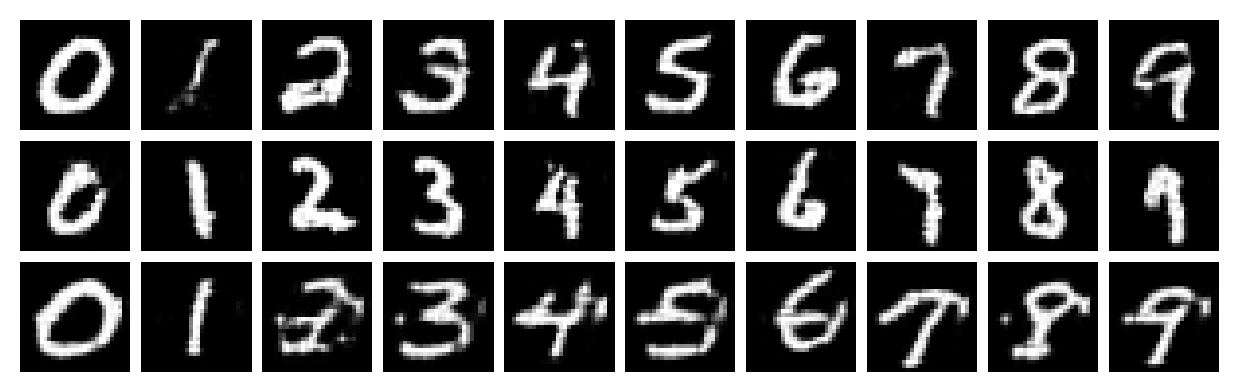}
\end{centering}

\begin{centering}
\includegraphics[width=.45\linewidth]{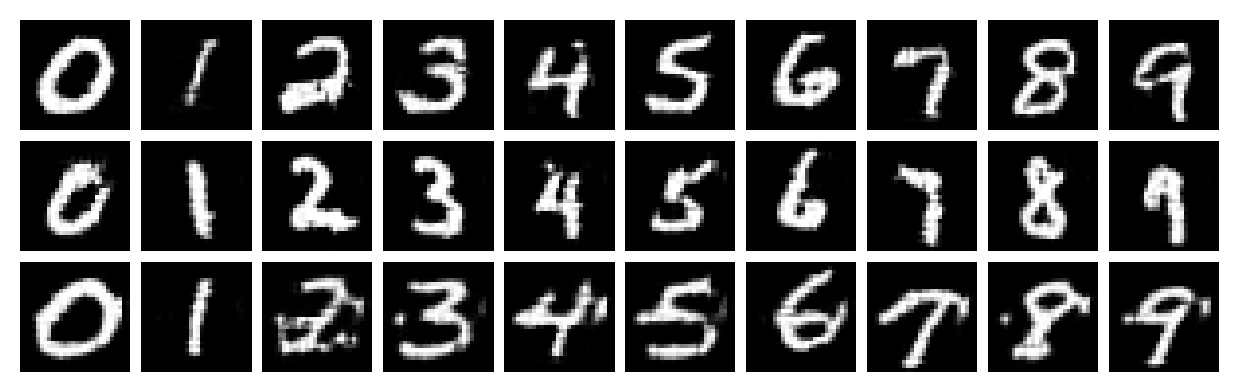}
\includegraphics[width=.45\linewidth]{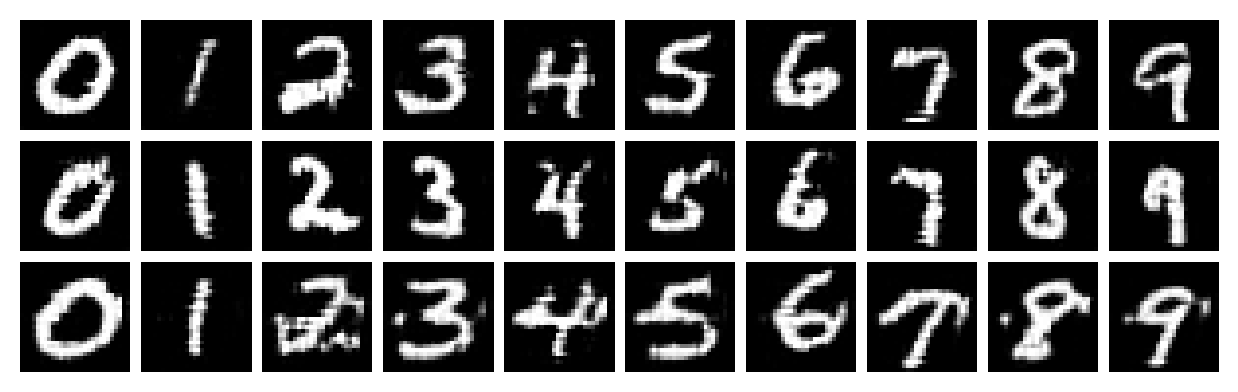}
\end{centering}
\caption{MNIST digits generated by CFG with various guidance scales: $\alpha\in\ev{1.0, 1.5, 2.0, 3.0}$.}\label{fig:more-conditional}
\end{figure}

\end{document}